\let\MYcaption\@makecaption
\let\@makecaption\MYcaption
\newcommand{\divergence}[1]{\mathfrak{D}_\text{#1}}
\newcommand{\edivergence}[1]{\widetilde{\divergence{#1}}}
\DeclareMathOperator{\trace}{trace}
\DeclareMathOperator*{\argmax}{argmax}
\crefname{appsec}{Appendix}{Appendices}
\begin{document}

    \title{Detecting Regions of Maximal Divergence for Spatio-Temporal Anomaly Detection}
    
    \author{Bj{\"o}rn~Barz,
        Erik~Rodner,
        Yanira~Guanche~Garcia,
        and~Joachim~Denzler,~\IEEEmembership{Member,~IEEE}
        \IEEEcompsocitemizethanks{\IEEEcompsocthanksitem B.\ Barz, Y.\ Guanche Garcia, and J.\ Denzler are with the Computer Vision Group, Department of Mathematics and Computer Science, Friedrich Schiller University Jena, 07737 Jena, Germany.\protect\\
            E-mail: \{bjoern.barz,yanira.guanche.garcia,joachim.denzler\}@uni-jena.de
            \IEEEcompsocthanksitem E.\ Rodner is with Carl Zeiss AG, Corporate Research and Technology.}
        }
    
    
    \IEEEpubid{\textcopyright\ 2018 IEEE.
    Published in IEEE Transactions on Pattern Analysis and Machine Intelligence.
    DOI: 10.1109/TPAMI.2018.2823766.}

    \IEEEtitleabstractindextext{%
        \begin{abstract}
            Automatic detection of anomalies in space- and time-varying measurements is an important tool in several fields, e.g., fraud detection, climate analysis, or healthcare monitoring.
            We present an algorithm for detecting anomalous regions in multivariate spatio-temporal time-series, which allows for spotting the interesting parts in large amounts of data, including video and text data.
            In opposition to existing techniques for detecting isolated anomalous data points, we propose the ``Maximally Divergent Intervals'' (MDI) framework for unsupervised detection of coherent spatial regions and time intervals characterized by a high Kullback-Leibler divergence compared with all other data given.
            In this regard, we define an unbiased Kullback-Leibler divergence that allows for ranking regions of different size and show how to enable the algorithm to run on large-scale data sets in reasonable time using an interval proposal technique.
            Experiments on both synthetic and real data from various domains, such as climate analysis, video surveillance, and text forensics, demonstrate that our method is widely applicable and a valuable tool for finding interesting events in different types of data.
        \end{abstract}
        
        \begin{IEEEkeywords}
            anomaly detection, time series analysis, spatio-temporal data, data mining, unsupervised machine learning
        \end{IEEEkeywords}}

        \maketitle

        \IEEEdisplaynontitleabstractindextext

        %
        \IEEEpeerreviewmaketitle

\IEEEraisesectionheading{\section{Introduction}\label{sec:intro}}

\IEEEPARstart{M}{any} pattern recognition methods strive towards deriving models from complex and noisy data. Such models try to describe the prototypical normal behavior of the system being observed, which is hard to model manually and whose state is often not even directly observable, but only reflected by the data. They allow reasoning about the properties of the system, predicting unseen data, and assessing the ``normality'' of new data. In such a scenario, any deviation from the normal behavior present in the data is distracting and may impair the accuracy of the model. An entire arsenal of techniques has therefore been developed to eliminate abnormal observations prior to learning or to learn models in a robust way not affected by a few anomalies.

Such practices may easily lead to the perception of anomalies as being intrinsically bad and worthless. Though that is true for random noise and erroneous measurements, there may also be anomalies caused by rare events and complex processes. Embracing the anomalies in the data and studying the information buried in them can therefore lead to a deeper understanding of the system being analyzed and to the insight that the models hitherto employed were incomplete or---in the case of non-stationary processes---outdated. A well-known example for this is the discovery of the correlation between the \textit{El Niño} weather phenomenon and extreme surface pressures over the equator by Gilbert Walker \cite{walker1928world} during the early \nth{20} century through the analysis of extreme events in time-series of climate data.

Thus, the use of anomaly detection techniques is not limited to outlier removal as a pre-processing step. In contrast, anomaly detection also is an important task \textit{per se}, since only the deviations from normal behavior are the actual object of interest in many applications. Besides the scenario of knowledge discovery mentioned above, fraud detection (e.g., credit card fraud or identity theft), intrusion detection in cyber-security, fault detection in industrial processes, anomaly detection in healthcare (e.g., monitoring patient condition or detecting disease outbreaks), and early detection of environmental disasters are other important examples. Automated methods for anomaly detection are especially crucial nowadays, where huge amounts of data are available that cannot be analyzed by humans.

In this article, we introduce a novel unsupervised method called ``Maximally Divergent Intervals'' (MDI), which can be employed to point the expert analysts to the interesting parts of the data, i.e., the anomalies.
In contrast to most existing anomaly detection techniques (e.g., \cite{breunig2000lof,kim2012rkde,macgregor1995statistical,schoelkopf2001ocsvm}), we do not analyze the data on a point-wise basis, but search for contiguous intervals of time and regions in space that contain the anomalous event.
This is motivated by the fact that anomalies driven by natural processes rather occur over a space of time and, in the case of spatio-temporal data, in a spatial region rather than at a single location at a single time.
Moreover, the individual samples making up such a so-called \textit{collective anomaly} do not have to be anomalous when considered in isolation, but may be an anomaly only as a whole.
Thus, analysts will intuitively be searching for anomalous {\em regions} in the data instead of anomalous points and the algorithm assisting them should do so as well.

We achieve this by searching for anomalous {\em blocks} in multivariate spatio-temporal data tensors, i.e., regions and time frames whose data distribution deviates most from the distribution of the remaining time-series.
To this end, we compare several existing measures for the divergence of distributions and derive a new one that is invariant against varying length of the intervals being compared.
A fast novel interval proposal technique allows us to reduce the computational cost of this procedure by just analyzing a small portion of particularly interesting parts of the data.
Experiments on climate data, videos, and text corpora will demonstrate that our method can be applied to a variety of applications without major adaptations.

Despite the importance of this task across domains, there has been very limited research on the detection of anomalous intervals in multivariate time-series data, though this problem has been known for a couple of years: Keogh et al.\ \cite{keogh2005hot} have already tackled this task in 2005 with a method they called ``HOT SAX''. They try to find anomalous sub-sequences (``discords'') of time-series by representing all possible sub-sequences of length $d$ as a $d$-dimensional vector and using the Euclidean distance to the nearest neighbor in that space as anomaly score.
More recently, Ren et al.\ \cite{ren2018anomaly} use hand-crafted interval features based on the frequency of extreme values and search for intervals whose features are maximally different from all other intervals.
However, both methods are limited to univariate data and a fixed length of the intervals must be specified in advance.

The latter is also true for a multivariate approach proposed by Liu et al.\ \cite{liu2013change} who compare two consecutive intervals of fixed size in a time-series using the Kullback-Leibler or the Pearson divergence for detecting {\em change-point anomalies}, i.e., points where a permanent change of the distribution of the data occurs. This is a different task than finding intervals that are anomalous with regard to {\em all} the remaining data. In addition, their method does not scale well for detecting anomalous intervals of {\em dynamic} size and is hence not applicable for detecting other types of anomalies, for which a broader context has to be taken into account.

The task of detecting anomalous intervals of dynamic size has recently been tackled by Senin et al.\ \cite{senin2018grammarviz}, who search for typical and anomalous patterns in time-series by inducing a grammar on a symbolic discretization of the data.
As opposed to our approach, their method cannot handle multivariate or spatio-temporal data.

Similar to our approach, Jiang et al.\ \cite{jiang2015general} search for anomalous blocks in higher-order tensors using the Kullback-Leibler divergence, but apply their method to discrete data only (e.g., relations in social networks) and use a Poisson distribution for modeling the data. Since their search strategy is very specific to applications dealing with graph data, it is not applicable in the general case for multivariate continuous data dealt with in our work.

Regarding spatio-temporal data, Wu et al.\ \cite{wu2010spatio} follow a sequential approach for detecting anomalies first spatially, then temporally and apply a merge-strategy afterwards. However, the time needed for merging grows exponentially with the length of the time-series and their divergence measure is limited to binary-valued data. In contrast to this, our approach is able to deal with multivariate real-valued data efficiently and treats time and space jointly.

The remainder of this article is organized as follows: \Cref{sec:MDI} will introduce our novel ``Maximally Divergent Intervals'' algorithm for off-line detection of collective anomalies in multivariate spatio-temporal data. Its performance will be evaluated quantitatively on artificial data in \cref{sec:eval} and its suitability for practical applications will be demonstrated by means of experiments on real data from various different domains in \cref{sec:applications}. \Cref{sec:conclusions} will summarize the progress made so far and mention directions for future research.

\section{Maximally Divergent Intervals}
\label{sec:MDI}

\begin{figure}
    \includegraphics[width=\linewidth]{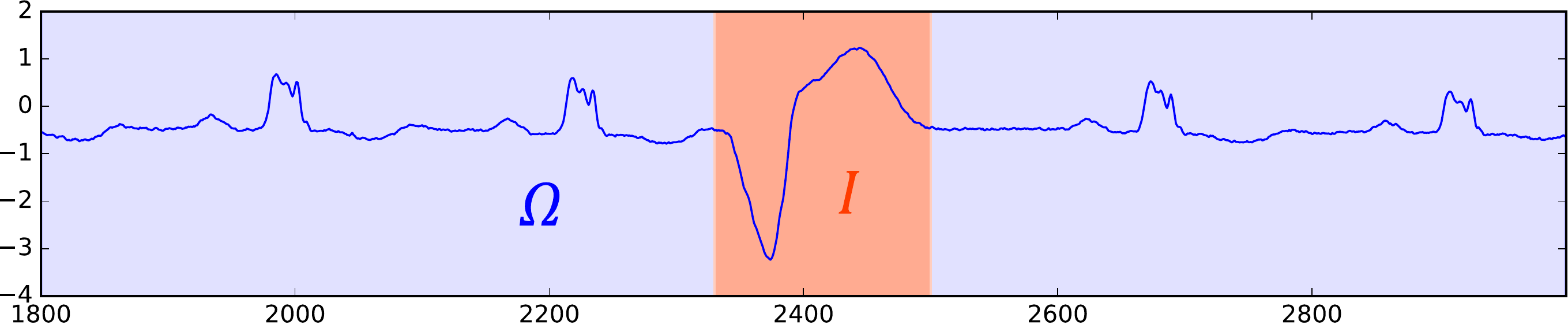}
    \caption{Schematic illustration of the principle of the MDI algorithm: The distribution of the data in the inner interval $I$ is compared with the distribution of the remaining time-series in the outer interval $\Omega$.}
    \label{fig:maxdiv-example}
\end{figure}

This section formally introduces our MDI algorithm for off-line detection of anomalous intervals in spatio-temporal data.
After a set of definitions that we are going to make use of, we start by giving a very rough overview of the basic idea behind the algorithm, which is also illustrated schematically in \cref{fig:maxdiv-example}. The subsequent sub-sections will go into more detail on the individual aspects and components of our approach.

Our implementation of the MDI algorithm is available as open source at: \url{https://cvjena.github.io/libmaxdiv/}

\subsection{Definitions}
\label{sec:MDI-defs}

Let $\mathfrak{X} \in \mathbb{R}^{T \times X \times Y \times Z \times D}$ be a multivariate spatio-temporal time-series given as \nth{5}-order tensor with 4 contextual attributes (point of time and spatial location) and $D$ behavioral attributes for all $N \coloneqq T \cdot X \cdot Y \cdot Z$ samples. We will index individual samples using 4-tuples $i \in \mathbb{N}^4$ like in $\mathfrak{X}_i \in \mathbb{R}^D$.

The usual interval notation $[\ell,r)$ will be used in the following for discrete intervals $\left\{ {t \in \mathbb{N} \vert \ell \le t < r} \right\}$. Furthermore, the set of all intervals with size between $a$ and $b$ along an axis of size $n$ is denoted by
\begin{equation}
	\mathfrak{I}_{a,b}^n \coloneqq
	\{ {
		[\ell,r) \mid
		1 \le \ell < r \le n+1 \wedge
		a \le r-\ell \le b
	} \} \;.
	\label{eq:interval-set-1d}
\end{equation}

The set of all sub-blocks of a data tensor $\mathfrak{X}$ complying with given size constraints $A = (a_t, a_x, a_y, a_z), B = (b_t, b_x, b_y, b_z)$ can then be defined as
\begin{equation}
\begin{split}
	\mathfrak{I}_{A,B} \coloneqq
	\{
        I_t \times I_x \times I_y \times I_z \mid
        & I_t \in \mathfrak{I}_{a_t,b_t}^T \wedge
        I_x \in \mathfrak{I}_{a_x,b_x}^X \wedge \\
        & I_y \in \mathfrak{I}_{a_y,b_y}^Y \wedge
        I_z \in \mathfrak{I}_{a_z,b_z}^Z
	\} \,.
    \label{eq:interval-set}
\end{split}
\end{equation}
In the following, we will often omit the indices for simplicity and just refer to it as $\mathfrak{I}$.

Given any sub-block $I \in \mathfrak{I}_{A,B}$, the remaining part of the time-series excluding that specific range can be defined as
\begin{equation}
    \Omega(I) \coloneqq
    \left( [1,T] \times [1,X] \times [1,Y] \times [1,Z] \right)
    \setminus I
    \label{eq:def-omega}
\end{equation}
and we will often simply refer to it as $\Omega$ if the corresponding range $I$ is obvious from the context.

\subsection{Idea and Algorithm Overview}
\label{sec:MDI-idea}

The approach pursued by the MDI algorithm to compute anomaly scores for all intervals $I \in \mathfrak{I}$ can be motivated by a long-standing definition of anomalies given by Douglas Hawkins \cite{hawkins1980} in 1980, who defines an anomaly as ``an observation which deviates so much from other observations as to arouse suspicions that it was generated by a different mechanism''.
In analogy to this definition, the MDI algorithm assumes that there is a sub-block $I \in \mathfrak{I}$ of the given time-series that has been generated according to ``a different mechanism'' than the rest of the time-series in $\Omega$ (cf. the schematic illustration in \cref{fig:maxdiv-example}). The algorithm tries to capture these mechanisms by modelling the probability density $p_I$ of the data in the inner interval $I$ and the distribution $p_\Omega$ in the outer interval $\Omega$. We investigate two different models for these distributions: Kernel Density Estimation (KDE) and multivariate normal distributions (Gaussians), which will be explained in detail in \cref{sec:MDI-density-estimation}.

Moreover, a measure $\mathfrak{D}(p_I, p_\Omega)$ for the degree of ``deviation'' of $p_I$ from $p_\Omega$ has to be defined. Like some other works on collective anomaly detection \cite{liu2013change,jiang2015general}, we use---among others---the \textit{Kullback-Leiber (KL) divergence} for this purpose. However, \cref{sec:divergences} will show that this is a sub-optimal choice when used without a slight modification and discuss alternative divergence measures.

Given these ingredients, the underlying optimization problem for finding the most anomalous interval can be described as
\begin{equation}
    \hat{I} = \argmax_{I \in \mathfrak{I}_{A,B}}\ \mathfrak{D}\left(p_I, p_{\Omega(I)}\right) \;.
    \label{eq:MDI-optimization-problem}
\end{equation}

Various possible choices for the divergence measure $\mathfrak{D}$ will be discussed in \cref{sec:divergences}.

In order to actually locate this ``maximally divergent interval'' $\hat{I}$, the MDI algorithm scans over all intervals $I \in \mathfrak{I}_{A,B}$, estimates the distributions $p_I$ and $p_\Omega$ and computes the divergence between them, which becomes the anomaly score of the interval $I$. The parameters $A$ and $B$, which define the minimum and the maximum size of the intervals in question, have to be specified by the user in advance. This is not a severe restriction, since extreme values may be chosen for these parameters in exchange for increased computation time. But depending on the application and the focus of the analysis, there is often prior knowledge about reasonable limits for the size of possible intervals.

After the anomaly scores have been obtained for all intervals, they are sorted in descending order and non-maximum suppression is applied to obtain non-overlapping intervals only. For large time-series with more than 10k samples, we apply an approximative non-maximum suppression that avoids storing all interval scores by maintaining a fixed-size list of currently best-scoring non-overlapping intervals.

Finally, the algorithm returns a ranking of intervals, so that a user-specified number of top $k$ intervals can be selected as output.

\subsection{Probability Density Estimation}
\label{sec:MDI-density-estimation}

The divergence measure used in \eqref{eq:MDI-optimization-problem} requires the notion of the distribution of the data in the intervals $I$ and $\Omega$. We will hence discuss in the following, which models we employ to estimate these distributions and how this can be done efficiently.

\subsubsection{Models}
\label{sec:MDI-distribution-models}

The choice of a specific model for the distributions $p_I$ and $p_\Omega$ imposes some assumptions about the data which may not conform to reality. However, since the MDI algorithm estimates the parameters of those distributions for all possible intervals in the time-series, the use of models that can be updated efficiently is crucial. One such model is Kernel Density Estimation (KDE) with
\begin{equation}
	p_\mathfrak{S}(\mathfrak{X}_i) = \frac{1}{\left| \mathfrak{S} \right|} \sum_{j \in \mathfrak{S}} k(\mathfrak{X}_i, \mathfrak{X}_j), \qquad \mathfrak{S} \in \left\{ {I, \Omega} \right\} ,
	\label{eq:KDE}
\end{equation}
using a Gaussian kernel
\begin{equation}
    k(x, y) = { \left( 2 \pi \sigma^2 \right) }^{- \frac{D}{2}} \cdot \exp \left( - \frac{\left\| x - y \right\|^2}{2 \sigma^2} \right) \;.
    \label{eq:gaussian-kernel}
\end{equation}
On the one hand, KDE is a very flexible model, but on the other hand, it does not scale well to long time-series and does not take correlations between attributes into account. The second proposed model does not expose these problems: It assumes that both the data in the anomalous interval $I$ and in the remaining time-series $\Omega$ are distributed according to multivariate normal distributions (\textit{Gaussians}) $\mathcal{N} \left( \mu_I, S_I \right)$ and $\mathcal{N} \left( \mu_\Omega, S_\Omega \right)$, respectively.

\begin{figure*}
    \includegraphics[width=\linewidth]{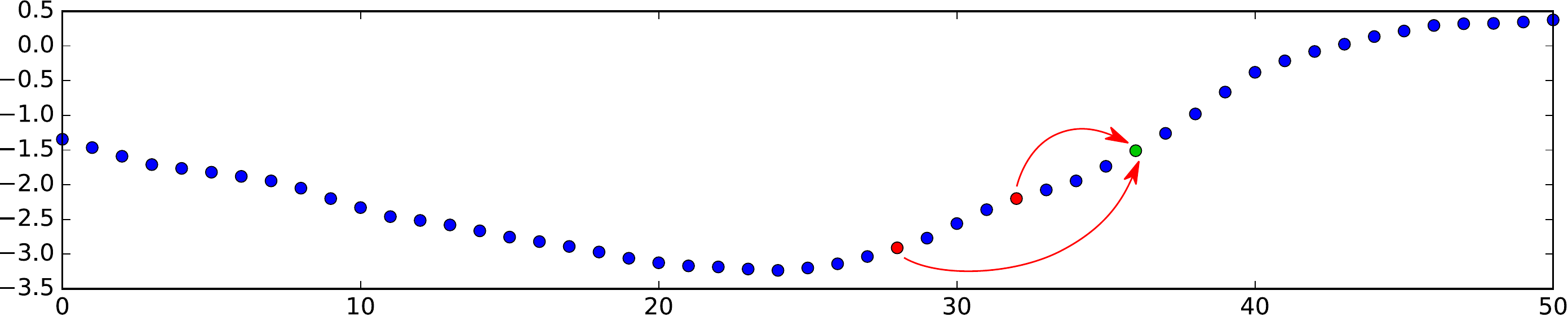}
    \caption{Illustration of time-delay embedding with $\kappa=3, \tau=4$. The attribute vector of each sample is augmented with the attributes of the samples 4 and 8 time steps earlier.}
    \label{fig:td-embedding}
\end{figure*}

\subsubsection{Efficient Estimation with Cumulative Sums}
\label{sec:MDI-cumsum}

Both distribution models described above involve a summation over all samples in the respective interval. Performing this summation for multiple intervals is redundant, because some of them overlap with each other. Such a naïve approach of finding the maximally divergent interval has a time complexity of $\mathcal{O} \left( N^2 \cdot L^2 \right)$ with KDE and $\mathcal{O} \left( N \cdot L \cdot \left( N + L \right) \right) \subseteq \mathcal{O} \left( N^2 \cdot L \right)$ with Gaussian distributions. This is due to the number of $\mathcal{O} \left( N \cdot L \right)$ intervals (with $L = (b_t - a_t + 1) \cdot (b_x - a_x + 1) \cdot (b_y - a_y + 1) \cdot (b_z - a_z + 1)$ being the maximum volume of an interval), each of them requiring a summation over $\mathcal{O} \left( L \right)$ samples for the evaluation of one of the divergence measures described later in \cref{sec:divergences}. For KDE, $\mathcal{O}(N)$ distance computations are necessary for the evaluation of the probability density function for each sample, while for Gaussian distributions a summation over all $\mathcal{O}(N)$ samples has to be performed for each interval to estimate the parameters of the distributions.

This would be clearly infeasible for large-scale data. However, these computations can be sped up significantly by using cumulative sums \cite{viola2004robust}. For the sake of clarity, we first consider the special case of a non-spatial time-series $(x_t)_{t=1}^n, x_t \in \mathbb{R}^D$. With regard to KDE, a matrix $C \in \mathbb{R}^{n \times n}$ of cumulative sums of kernelized distances can be used:

\begin{equation}
	C_{t,t'} = \sum_{t'' = 1}^{t'} k(x_t, x_{t''}) \;\;.
	\label{eq:cumsum-kde}
\end{equation}

This matrix has to be computed only once, which requires $\mathcal{O} \left( n^2 \right)$ distance calculations, and can then be used to estimate the probability density functions of the data in the intervals $I = \left[a,b\right)$ and $\Omega = \left[1,n\right] \setminus I$ in constant time:

\begin{equation}
\begin{split}
	p_I(x_t) &= \frac{C_{t,b-1} - C_{t,a-1}}{\left| I \right|} \;\; , \\[3ex]
	p_\Omega(x_t) &= \frac{C_{t,n} - C_{t,b-1} + C_{t,a-1}}{n - \left| I \right|} \;\; .
	\label{eq:cumsum-kde-recons}
\end{split}
\end{equation}

In analogy, a matrix $C^\mu \in \mathbb{R}^{D \times n}$ of cumulative sums over the samples and a tensor $C^S \in \mathbb{R}^{D \times D \times n}$ of cumulative sums over the outer products of the samples can be used to speed up the estimation of the parameters of Gaussian distributions:
\begin{equation}
	C_t^\mu = \sum_{t' = 1}^{t} x_{t'}, \quad
	C_t^S = \sum_{t' = 1}^{t} x_{t'} \cdot x_{t'}^\top \;,
\end{equation}
where $C_t^\mu$ and $C_t^S$ are the $t$-th column of $C^\mu$ and the $t$-th $D \times D$ matrix of $C^S$, respectively. Using these matrices, the mean vectors and covariance matrices can be estimated in constant time.

This technique can be generalized to the spatio-temporal scenario using higher order tensors for storing the cumulative sums. The reconstruction of a sum over a given range from such a cumulative tensor follows the \textit{Inclusion-Exclusion Principle} and the number of summands involved in the computation grows, thus, exponentially with the order of the tensor, being 16 for a \nth{4}-order tensor, compared to only 2 summands in the non-spatial case. The exact equation describing the reconstruction in the general case of an $M^\text{th}$-order tensor is given in \cref{app:cumsum-extraction}.

Thanks to the use of cumulative sums, the computational complexity of the MDI algorithm is reduced to $\mathcal{O} \left( N^2 + N \cdot L^2 \right)$  for the case of KDE and to $\mathcal{O} \left( N \cdot L^2 \right)$ for Gaussian distributions.

\subsection{Incorporation of Context}
\label{sec:MDI-embeddings}

The models used for probability density estimation described in the previous section are based on the assumption of independent samples. However, this assumption is almost never true for real data, since the value at a specific point of time and spatial location is likely to be strongly correlated with the values at previous times and nearby locations. To mitigate this issue, we apply two kinds of embeddings that incorporate context into each sample as pre-processing step.

\subsubsection{Time-Delay Embedding}
\label{sec:td-embedding}

Aiming to make combinations of observed values more representative of the hidden state of the system being observed, \textit{time-delay embedding} \cite{packard1980td} incorporates context from previous time-steps into each sample by transforming a given time-series $\left( x_t  \right)_{t=1}^n, x_t \in \mathbb{R}^D$, into another time-series $\left( x_t'  \right)_{t=1+(\kappa-1)\tau}^n, x_t' \in \mathbb{R}^{\kappa D}$, given by
\begin{equation}
	x_t' = \left(
		\begin{array}{ccccc}
		x_t^\top & x_{t-\tau}^\top & x_{t-2\tau}^\top
        & \cdots & x_{t-(\kappa-1) \cdot \tau}^\top
		\end{array}
	\right)^\top ,
    \label{eq:td-embedding}
\end{equation}
where the \textit{embedding dimension} $\kappa$ specifies the number of samples to stack together and the \textit{time lag} $\tau$ specifies the gap between two consecutive time-steps to be included as context. An illustrative example is given in \cref{fig:td-embedding}.

This method is often motivated by Takens' theorem \cite{takens1981detecting}, which, roughly, states that for a certain embedding dimension $\bar{\kappa}$ the hidden state of the system can be reconstructed given the observations of the last $\bar{\kappa}$ time-steps.

\subsubsection{Spatial-Neighbor Embedding}
\label{sec:spatial-embedding}

Correlations between nearby spatial locations are handled similarly: In addition to time-delay embedding, each sample of a spatio-temporal time-series can be augmented by the features of its spatial neighbors (cf. \cref{fig:spatial-embedding}) to enable the detection of spatial or spatio-temporal anomalies. This pre-processing step, which we refer to as \textit{spatial-neighbor embedding}, is parametrized with 3 parameters $\kappa_x, \kappa_y, \kappa_z$ for the embedding dimension along each spatial axis and 3 parameters $\tau_x, \tau_y, \tau_z$ for the lag along each axis.

\begin{figure*}
    \begin{subfigure}{0.32\linewidth}%
        \centering
        $\kappa_x = \kappa_y = 2, \quad \tau_x = \tau_y = 1$ \\
        \includegraphics[height=3.8cm]{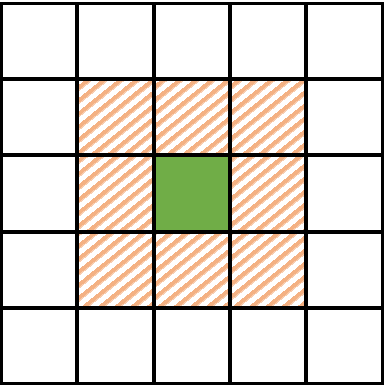}%
    \end{subfigure}%
    \begin{subfigure}{0.68\linewidth}%
        \centering
        $\kappa_x = 3, \quad \kappa_y = 2, \quad \tau_x = 3, \quad \tau_y = 2$ \\
        \includegraphics[height=3.8cm]{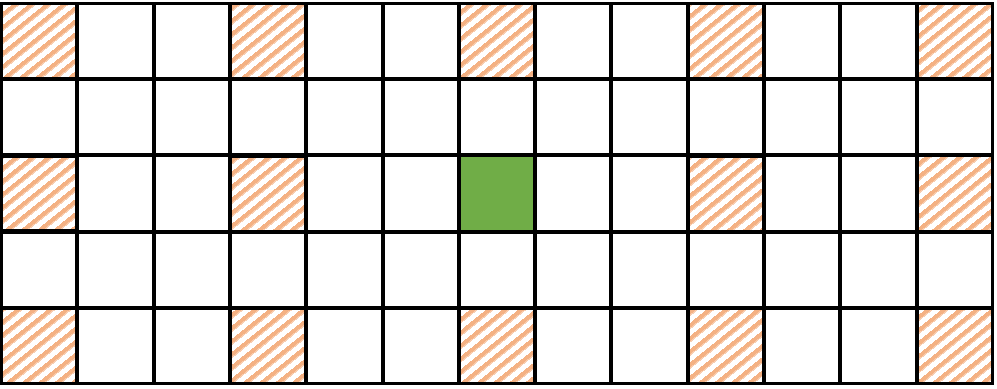}%
    \end{subfigure}%
    \caption{Exemplary illustration of spatial-neighbor embedding with different parameters. The attribute vector of the sample with a solid fill color is augmented with the attributes of the samples with a striped pattern.}
    \label{fig:spatial-embedding}
\end{figure*}

Note that, in contrast to time-delay embedding, neighbors from both directions are aggregated, since spatial context is bilinear. For example, $\kappa_x = 3$ would mean to consider 4 neighbors along the $x$-axis, 2 in each direction.

Spatial-neighbor embedding can either be applied before or after time-delay embedding. As opposed to many spatio-temporal anomaly detection approaches that perform temporal and spatial anomaly detection sequentially (e.g., \cite{wu2010spatio,kut2006spatio,cheng2006multiscale}), the MDI algorithm in combination with the two embeddings allows for a joint optimization. However, it implies a much more drastic multiplication of the data size.

\subsection{Divergences}
\label{sec:divergences}

A suitable measure for the deviation of the distribution $p_I$ from $p_\Omega$ is an essential part of the MDI algorithm. The following sub-sections introduce several divergence measures we have investigated and propose a modification to the well-known Kullback-Leibler (KL) divergence that is necessary for being able to compare divergences of distributions estimated from intervals of different size.

\subsubsection{Cross Entropy}
\label{sec:crossent}

Numerous divergence measures, including those described in the following, have been derived from the domain of \textit{information theory}. Being one of the most basic information theoretic concepts, the \textit{cross entropy} between two distributions given by their probability density functions $p$ and $q$ may already be used as a divergence measure:

\begin{equation}
    \divergence{CE}(p,q) \coloneqq \text{H}(p, q) \coloneqq \mathbb{E}_p \left[ - \log q \right] \;.
    \label{eq:crossent}
\end{equation}


Cross entropy measures how surprising a sample drawn from $p$ is, assuming that it would have been drawn from $q$, and is hence already eligible as a divergence measure, since the unexpectedness grows when $p$ and $q$ are very different.

Since the MDI algorithm assumes, that the data in the intervals $I \in \mathfrak{I}$ and $\Omega$ have been sampled from the distributions corresponding to $p_I$ and $p_\Omega$, respectively, the cross entropy of the two distributions can be approximated empirically from the data:

\begin{equation}
    \edivergence{CE}(I, \Omega) = \frac{1}{\left| I \right|} \sum_{i \in I} \log p_\Omega(\mathfrak{X}_i) \;.
    \label{eq:empirical-crossent}
\end{equation}

This approximation has the advantage of having to estimate only one probability density, $p_\Omega(x_t)$, explicitly. This is particularly beneficial, since the possibly anomalous intervals $I$ often contain only few samples, so that an accurate estimation of the probability density $p_I$ is difficult.

\subsubsection{Kullback-Leibler Divergence}
\label{sec:KL}

The \textit{Kullback-Leibler (KL) divergence} is a popular divergence measure that builds upon the fundamental concept of cross entropy. Given two distributions $p$ and $q$, the KL divergence can be defined as follows:
\begin{equation}
    \divergence{KL}(p, q) \coloneqq \text{H}(p, q) - \text{H}(p, p)
    = \mathbb{E}_p \left[ - \log \frac{p}{q} \right] \;.
    \label{eq:KL-def}
\end{equation}

As opposed to the pure cross entropy of $p$ and $q$, the KL divergence does not only take into account how well $p$ is explained by $q$, but also the intrinsic entropy $\text{H}(p,p) \eqqcolon \text{H}(p)$ of $p$, so that an interval with a stable distribution would get a higher score than an oscillating one if they had the same cross entropy with the rest of the time-series.

Like cross entropy, the KL divergence can be approximated empirically from the data, but in contrast to cross entropy, this requires estimating the probability densities of both distributions, $p_I$ and $p_\Omega$:
\begin{equation}
\begin{split}
    \edivergence{KL}(I,\Omega) &= \frac{1}{\left| I \right|} \cdot \sum_{i \in I} \log \left( \frac{p_I(\mathfrak{X}_i)}{p_\Omega(\mathfrak{X}_i)} \right) \\
    &= \frac{1}{\left| I \right|} \cdot \sum_{i \in I} { \log \left( p_I(\mathfrak{X}_i) \right) - \log \left( p_\Omega(\mathfrak{X}_i) \right) } \;.
    \label{eq:KL-IO}
\end{split}
\end{equation}

When used in combination with the Gaussian distribution model, the KL divergence comes with an additional advantage from a computational point of view, since there is a known closed-form solution for the KL divergence of two Gaussians \cite{duchi2007derivations}:
\begin{multline}
    \divergence{KL}\left( p_I, p_\Omega \right) = \frac{1}{2} \biggl(
    \left( \mu_\Omega - \mu_I \right)^\top S_{\Omega}^{-1} \left( \mu_\Omega - \mu_I \right) \\
    + \trace \left( S_{\Omega}^{-1} S_I \right)
    + \log \frac{ \left| S_\Omega \right| }{ \left| S_I \right| } - D
    \biggr) \;.
    \label{eq:KL-Gaussian-explicit}
\end{multline}

This allows evaluating the KL divergence in constant time for a given interval, which reduces the computational complexity of the MDI algorithm using the KL divergence in combination with Gaussian models to the number of possible intervals: $\mathcal{O} \left( N \cdot L \right)$.

Given this explicit solution for the KL divergence and the closed-form solution for the entropy of a Gaussian distribution \cite{ahmed1989entropy} with mean vector $\mu$ and covariance matrix $S$, which is given by
\begin{equation}
    \text{H}(\mathcal{N}(\mu, S)) = \frac{1}{2} \left( \log \left| S \right| + d + d \cdot \log \left( 2 \pi \right) \right) \;,
    \label{eq:gaussian-entropy}
\end{equation}
one can easily derive a closed-form solution for the cross entropy of those two distributions as well:

\begin{equation}
\begin{aligned}
    & \text{H}(p_I, p_\Omega) \\
    ={}& \divergence{KL}(p_I, p_\Omega) + \text{H}(p_I) \\
    ={}& \frac{1}{2} \biggl(
        \trace\left( S_\Omega^{-1} S_I \right)
        + \log\left| S_\Omega \right|
        + d \cdot \log(2 \pi) \\
    &{+}\:(\mu_\Omega - \mu_I)^\top S_\Omega^{-1} (\mu_\Omega - \mu_I)
    \biggr) \;.
\end{aligned}
\end{equation}

Compared with the KL divergence, this does not assign extremely high scores to small intervals $I$ with a low variance, due to the subtraction of $\log \left| S_I \right|$. This may be an explanation for the evaluation results in \cref{sec:eval}, where cross entropy in combination with Gaussian models is often superior to the KL divergence, although it does not account for intervals of varying entropy.

However, in contrast to the empirical approximation of cross entropy in \eqref{eq:empirical-crossent}, this requires the estimation of $p_I$.

\subsubsection{Polarity of the KL divergence and its effect on MDI}
\label{sec:KL-polarity}

It is worth noting that the KL divergence is not a metric and, in particular, not symmetric: $\divergence{KL}(p, q) \ne \divergence{KL}(q, p)$. Some authors use, thus, a symmetric variant \cite{liu2013change}:
\begin{equation}
    \divergence{KL-SYM}(p, q) = \frac{1}{2}\ \divergence{KL}(p, q) + \frac{1}{2}\ \divergence{KL}(q, p) \;.
    \label{eq:KL-sym}
\end{equation}

This raises the question whether $\divergence{KL}(p_I, p_\Omega)$, $\divergence{KL}(p_\Omega, p_I)$, or the symmetric version $\divergence{KL-SYM}$ should be used for the detection of anomalous intervals. Quantitative experiments with an early prototype of our method \cite{Rodner16:MDI} have shown that neither $\divergence{KL}(p_\Omega, p_I)$ nor $\divergence{KL-SYM}$ provide good performance, as opposed to $\divergence{KL}(p_I, p_\Omega)$.

A visual inspection of the detections resulting from the use of $\divergence{KL}(p_\Omega, p_I)$ with the assumption of Gaussian distributions shows that all the intervals with the highest anomaly scores have the minimum possible size specified by the user and a very low variance. An example is given in \cref{fig:omega-i-bias}. The scores of the top detections in that example are around 100 times higher than those yielded by $\divergence{KL}(p_I, p_\Omega)$.

\begin{figure}[b]
    \includegraphics[width=\linewidth]{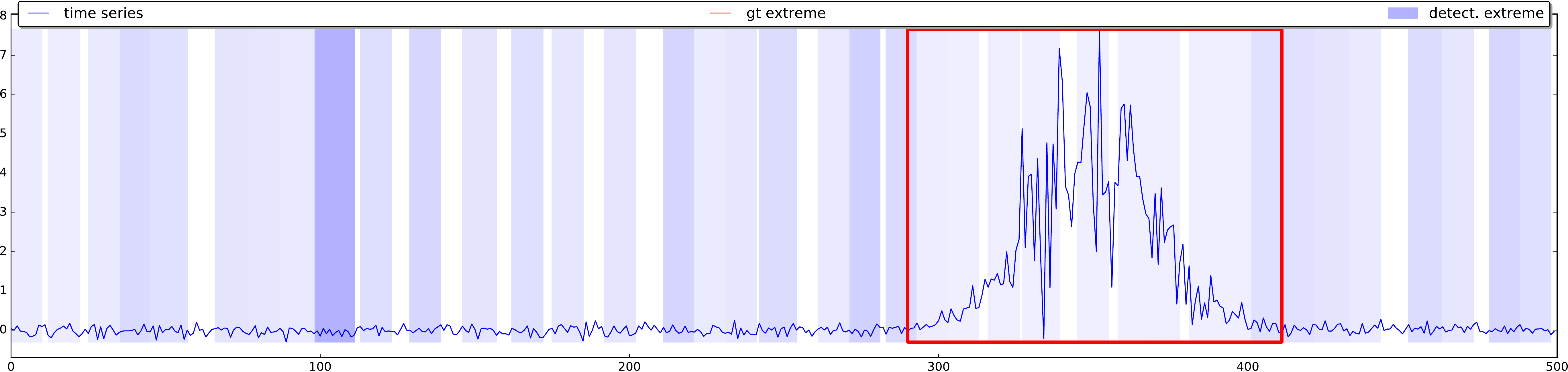}
    \caption{Example for the bias of $\divergence{KL}(p_\Omega, p_I)$ detections towards small intervals with low empirical variance on a synthetic time-series. The intensity of the fill color of the detected intervals corresponds to the detection scores. The ground-truth anomalous interval is indicated by a red box.}
    \label{fig:omega-i-bias}
\end{figure}

This bias of $\divergence{KL}(p_\Omega, p_I)$ towards small low-variance intervals can also be explained theoretically. For the sake of simplicity, consider the special case of a univariate time-series. In this case, the closed-form solution for $\divergence{KL}(p_\Omega, p_I)$ assuming Gaussian distributions given in \eqref{eq:KL-Gaussian-explicit} reduces to
\begin{equation}
    \frac{1}{2} \left(
        \frac{\sigma_\Omega^2}{\sigma_I^2}
        + \frac{(\mu_I - \mu_\Omega)^2}{\sigma_I^2}
        + \log \sigma_I^2 - \log \sigma_\Omega^2 - 1
    \right) \;,
    \label{eq:kl-oi-gaussian-univar}
\end{equation}
where $\mu_I$, $\mu_\Omega$ are the mean values and $\sigma_I^2$, $\sigma_\Omega^2$ are the variances of the distributions in the inner and in the outer interval, respectively. It can be seen from \eqref{eq:kl-oi-gaussian-univar} that, due to the division by $\sigma_I^2$, the KL divergence will approach infinity when the variance in the inner interval converges towards $0$. And since the algorithm has to estimate the variance empirically from the given data, it assigns high detection scores to intervals as small as possible, because smaller intervals have a higher chance of having a low empirical variance. The term $\log \sigma_I^2$ cannot counterbalance this effect, though it is negative for $\sigma_I < 1$, since its absolute value grows much more slowly than that of $\sigma_I^{-2}$, as can be seen from the fact that $\forall_{\sigma_I < 1}\left( - \log \sigma_I^2 = \log \sigma_I^{-2} < \sigma_I^{-2} \right)$, since $\forall_{\sigma_I < 1}\left( \sigma_I^{-2} > 1 \right)$.

In contrast, $\divergence{KL}(p_I, p_\Omega)$, where the roles of $I$ and $\Omega$ are swapped, does not possess this deficiency, since $\sigma_\Omega^2$ is estimated from a much larger portion of data and, thus, is a more robust estimate.

The symmetric version $\divergence{KL-SYM}(p_I, p_\Omega)$ is useless as well, since the scores obtained from $\divergence{KL}(p_I, p_\Omega)$ will just be absorbed by the much higher scores of $\divergence{KL}(p_\Omega, p_I)$.

\subsubsection{Statistical Analysis and Unbiased KL Divergence}
\label{sec:KL-unbiased}

Though $\divergence{KL}(p_I, p_\Omega)$ does not overestimate the anomalousness of low-variance intervals as extremely as $\divergence{KL}(p_\Omega, p_I)$ does, the following theoretical analysis will show that it is not unbiased either. In contrast to the previous section, this bias is not related to the data itself, but to the length of the intervals: smaller intervals systematically get higher scores than longer ones. This harms the quality of interval detections, because anomalies will be split up into multiple contiguous small detections (see \cref{fig:kl-det-albedo-biased} for an example).

\begin{figure*}
    \begin{subfigure}{0.49\linewidth}%
        \caption{$\divergence{KL}(p_I, p_\Omega)$}%
        \includegraphics[width=\linewidth]{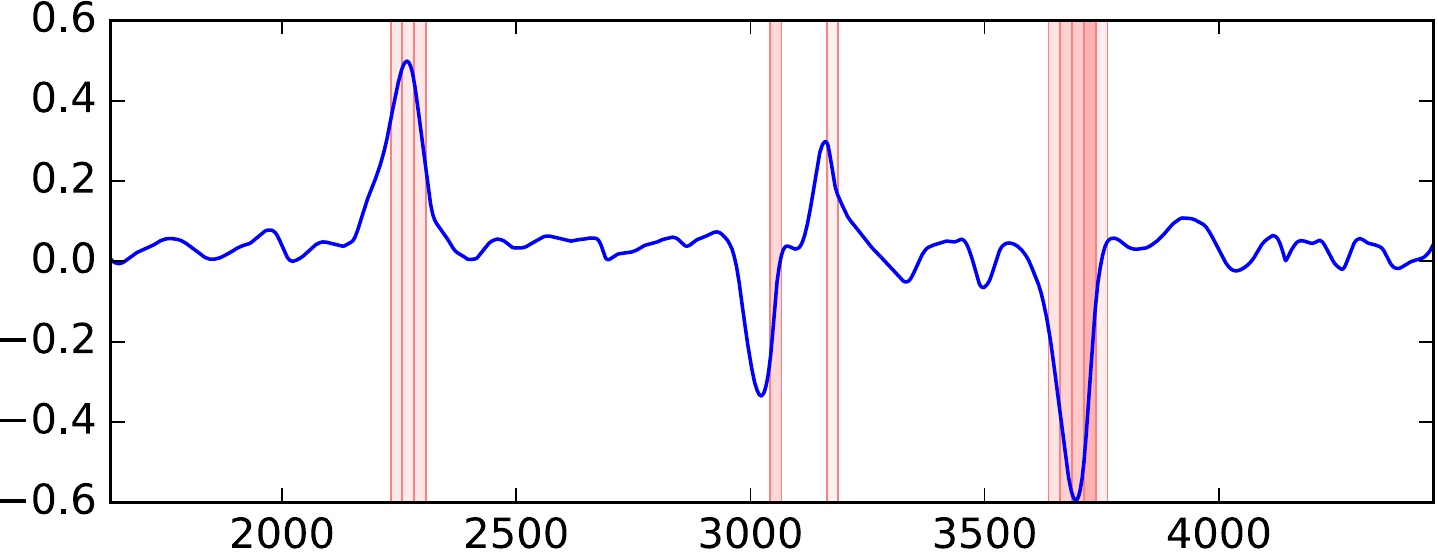}%
        \label{fig:kl-det-albedo-biased}%
    \end{subfigure}%
    \hfill%
    \begin{subfigure}{0.49\linewidth}%
        \caption{$\divergence{U-KL}(p_I, p_\Omega)$}%
        \includegraphics[width=\linewidth]{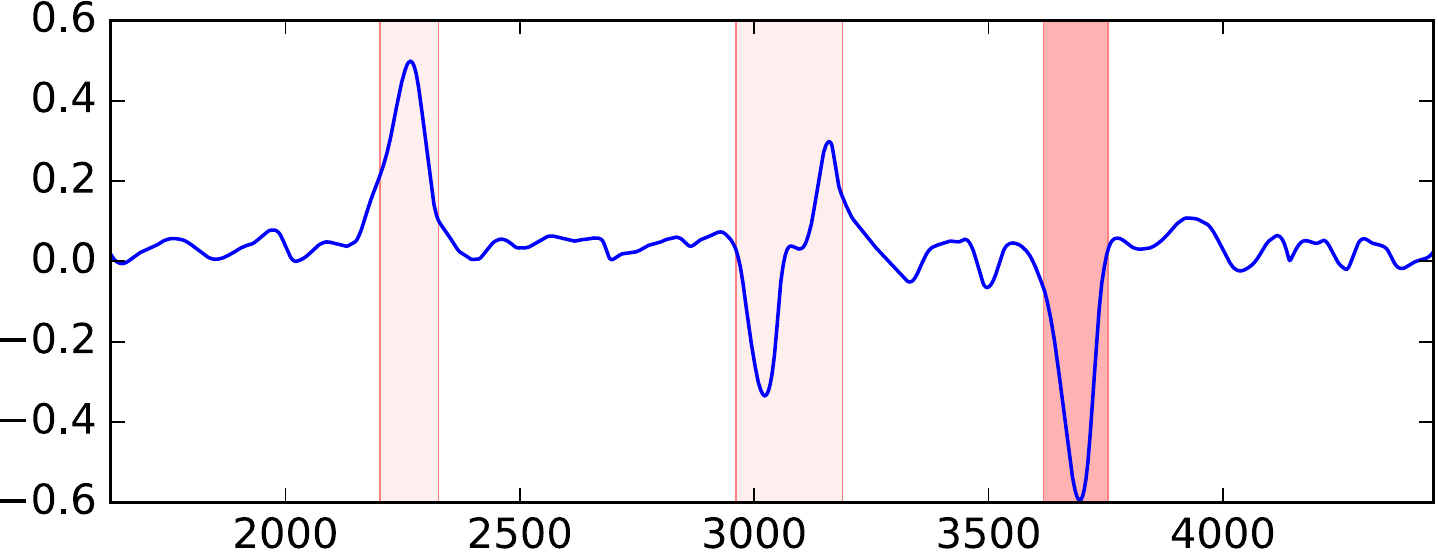}%
        \label{fig:kl-det-albedo-unbiased}%
    \end{subfigure}%
    \caption{(\subref{fig:kl-det-albedo-biased}) Top 10 detections obtained from the KL divergence on a real time-series and  (\subref{fig:kl-det-albedo-unbiased}) top 3 detections obtained from the unbiased KL divergence on the same time-series. This example illustrates the phenomenon of several contiguous minimum-size detections when using the original KL divergence (note the thin lines between the single detections in the left plot). The MDI algorithm has been applied with a time-delay embedding of $\kappa=3, \tau=1$ and the size of the intervals to analyze has been limited to be between 25 and 250 samples.}
    \label{fig:kl-unbiased-comparison}
\end{figure*}

Recall that $\mathfrak{I}_{m,m}^n$ denotes the set of all intervals of length $m$ in a time-series with $n$ time-steps. Furthermore, let $\vec{0}^d, d \in \mathbb{N},$ denote a $d$-dimensional vector with all coefficients being 0 and $\mathbb{I}_d$ the identity matrix of dimensionality $d$.

When applying the MDI algorithm to a time-series $(x_t)_{t=1}^n, x_t \sim \mathcal{N}(\vec{0}^d, \mathbb{I}_d)$, sampled independently and identically from plain white noise, an ideal divergence is supposed to yield constant average scores for all $\mathfrak{I}_{m,m}, m = a, \dots, b$ (for some user-defined limits $a,b$), i.e., scores independent from the length of the intervals.

For simplicity, we will first analyze the distribution of those scores using the MDI algorithm with Gaussian distributions with the simple, but for this data perfectly valid assumption of identity covariance matrices. In this case, the KL divergence $\divergence{KL}(p_I, p_\Omega)$ of two Gaussian distributions with the mean vectors $\mu_I, \mu_\Omega \in \mathbb{R}^d$ in some intervals $I \in \mathfrak{I}_m, \Omega = [1,n] \setminus I$ for some arbitrary $m$ is given by $\frac{1}{2} \left\| \mu_\Omega - \mu_I \right\|^2$. Moreover, since all samples in the time-series are normally distributed, so are their empirical means:
\begin{align*}
    \mu_I &= \frac{1}{m} \sum_{t \in I} x_t \sim \mathcal{N}(\vec{0}^d, m^{-1} \cdot \mathbb{I}_d) \;, \\[3ex]
    \mu_\Omega &= \frac{1}{n - m} \sum_{t \notin I} x_t \sim \mathcal{N}(\vec{0}^d, (n-m)^{-1} \cdot \mathbb{I}_d) \;.
\end{align*}

Thus, all dimensions of the mean vectors are independent and identically normally distributed variables. Their difference is, hence, normally distributed too:

\begin{equation*}
    \mu_\Omega - \mu_I \sim \mathcal{N} \left( \vec{0}^d, \left(\frac{1}{m} + \frac{1}{n-m}\right) \cdot \mathbb{I}_d \right) \;.
\end{equation*}

Thus, $(\mu_\Omega - \mu_I) / \sqrt{\frac{1}{m} + \frac{1}{n-m}} \sim \mathcal{N}(\vec{0}^d, \mathbb{I}_d)$ is a vector of independent standard normal random variables and
\begin{equation}
\begin{split}
    & \divergence{KL}(p_I, p_\Omega) \\
    ={}& \frac{1}{2} \left(\frac{1}{m} + \frac{1}{n-m}\right)
    \sum_{i=1}^{d} \left( \frac{(\mu_\Omega - \mu_I)_i}{\sqrt{\frac{1}{m} + \frac{1}{n-m}}} \right)^2 \\
    \sim{}& \frac{1}{2} \left(\frac{1}{m} + \frac{1}{n-m}\right) \cdot \chi_d^2
    \label{eq:kl-distribution}
\end{split}
\end{equation}
is the sum of the squares of $d$ independent normal variables and, hence, distributed according to the chi-squared distribution with $d$ degrees of freedom, scaled by half the variance of the variables. The mean of a $\chi_d^2$-distributed random variable is $d$ and the mean of the $\divergence{KL}(p_I, p_\Omega)$ scores for all intervals in $\mathfrak{I}_m$ is, accordingly, $\frac{d}{2} \left( \frac{1}{m} + \frac{1}{n-m} \right)$, which is inversely proportional to the length of the interval $m$. Thus, the KL divergence is systematically biased towards smaller intervals.

When the length $n$ of the time-series is very large, the asymptotic scale of the chi-squared distribution is $\lim\limits_{n \rightarrow \infty} \frac{1}{2} \left(\frac{1}{m} + \frac{1}{n-m}\right) = \frac{1}{2m}$ and the estimated parameters $\mu_\Omega, S_\Omega$ of the outer distribution converge towards the parameters of the true distribution of the data. Thus, if the restriction of the Gaussian model to identity covariance matrices is weakened to a global, shared covariance matrix $S$, the above findings also apply to the case of long time-series with correlated variables and, hence, also when time-delay embedding is applied. Because in this case, the KL divergence reduces to $\frac{1}{2} (\mu_I - \mu_\Omega)^\top S^{-1} (\mu_I - \mu_\Omega)$ and the subtraction of the true mean $\mu_\Omega$ followed by the multiplication with the inverse covariance matrix can be considered as a normalization of the time-series, transforming it to standard normal variables with uncorrelated dimensions.

For the general case of two unrestricted Gaussian distributions, the test statistic
\begin{multline}
    \lambda \coloneqq\
    dm (\log(m) - 1)
    + m (\mu_I - \mu_\Omega)^\top S_\Omega^{-1} (\mu_I - \mu_\Omega) \\
    + \trace\left( m S_I S_\Omega^{-1} \right)
    - m \cdot \log \left| m S_I S_\Omega^{-1} \right|
\label{eq:test-statistic}
\end{multline}
has been shown to be asymptotically distributed according to a chi-squared distribution with $d + \frac{d(d+1)}{2}$ degrees of freedom \cite{anderson1962mvstat}. This test statistic is often used for testing the hypothesis that a given set of samples has been drawn from a Gaussian distribution with known parameters \cite{kanungo1995mvhypot}. In the scenario of the MDI algorithm, the set of samples is the data in the inner interval $I$ and the parameters of the distribution to test that data against are those estimated from the data in the outer interval $\Omega$. The null hypothesis of the test would be that the data in $I$ has been sampled from the same distribution as the data in $\Omega$. The test statistic may then be used as a measure for how well the data in the interval $I$ fit the model established based on the data in the remainder of the time-series.

After some elementary reformulations, the relationship between this test statistic $\lambda$ and the KL divergence becomes obvious: $\lambda = 2m \cdot \divergence{KL}(p_I, p_\Omega)$. This is exactly the normalization of the KL divergence by the scale factor identified in \eqref{eq:kl-distribution}. Thus, we define an {\em unbiased KL divergence} as follows:

\begin{equation}
    \divergence{U-KL}(p_I, p_\Omega) \coloneqq
    2 \cdot \left| I \right| \cdot \divergence{KL}(p_I, p_\Omega) \;.
    \label{eq:kl-unbiased}
\end{equation}

The distribution of this divergence applied to asymptotically long time-series depends only on the number $d$ of attributes and not on the length $m$ of the interval any more. However, this correction may also be useful for time-series of finite length. An example of actual detections resulting from the use of the unbiased KL divergence compared with the original one can be seen in \cref{fig:kl-unbiased-comparison}.

A further advantage of knowing the distribution of the scores is that this knowledge can also be used for normalizing the scores with respect to the number of attributes, in order to make them comparable across time-series of varying dimensionality.
Moreover, it allows the selection of a threshold for distinguishing between anomalous and nominal intervals based on a chosen significance level. This may be preferred in some applications over searching for a fixed number of top $k$ detections.

Interestingly, Jiang et al.\ \cite{jiang2015general} have derived an equivalent unbiased KL divergence ($m \cdot \divergence{KL}(p_I, p_\Omega)$) from a different starting point based on the assumption of a Poisson distribution and the inverse log-likelihood of the interval as anomaly score.

\begin{figure*}
    \begin{subfigure}{0.49\linewidth}%
        \caption{amplitude\_change\_multvar}%
        \includegraphics[width=\linewidth]{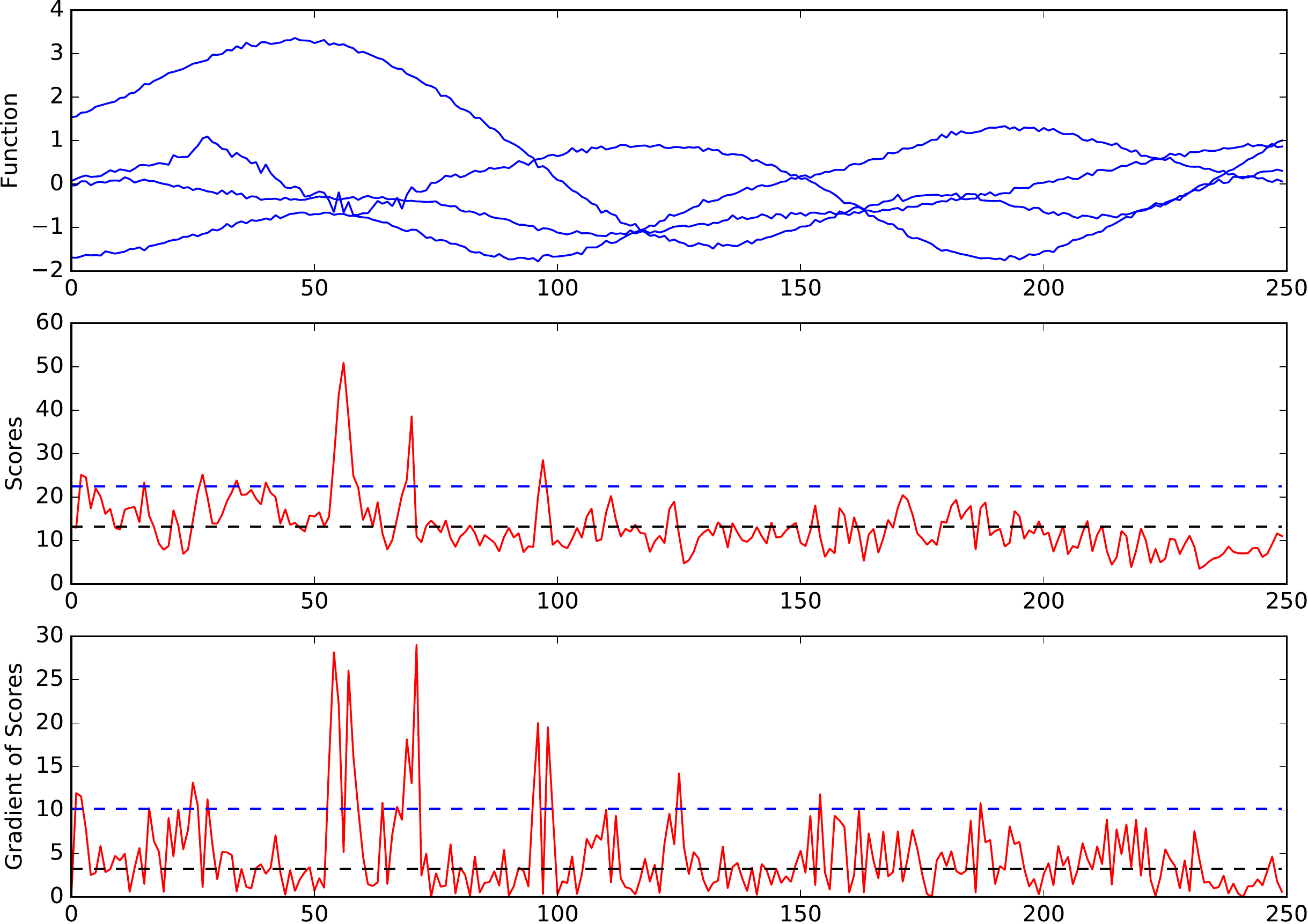}%
    \end{subfigure}%
    \hfill%
    \begin{subfigure}{0.49\linewidth}%
        \caption{frequency\_change\_multvar}%
        \includegraphics[width=\linewidth]{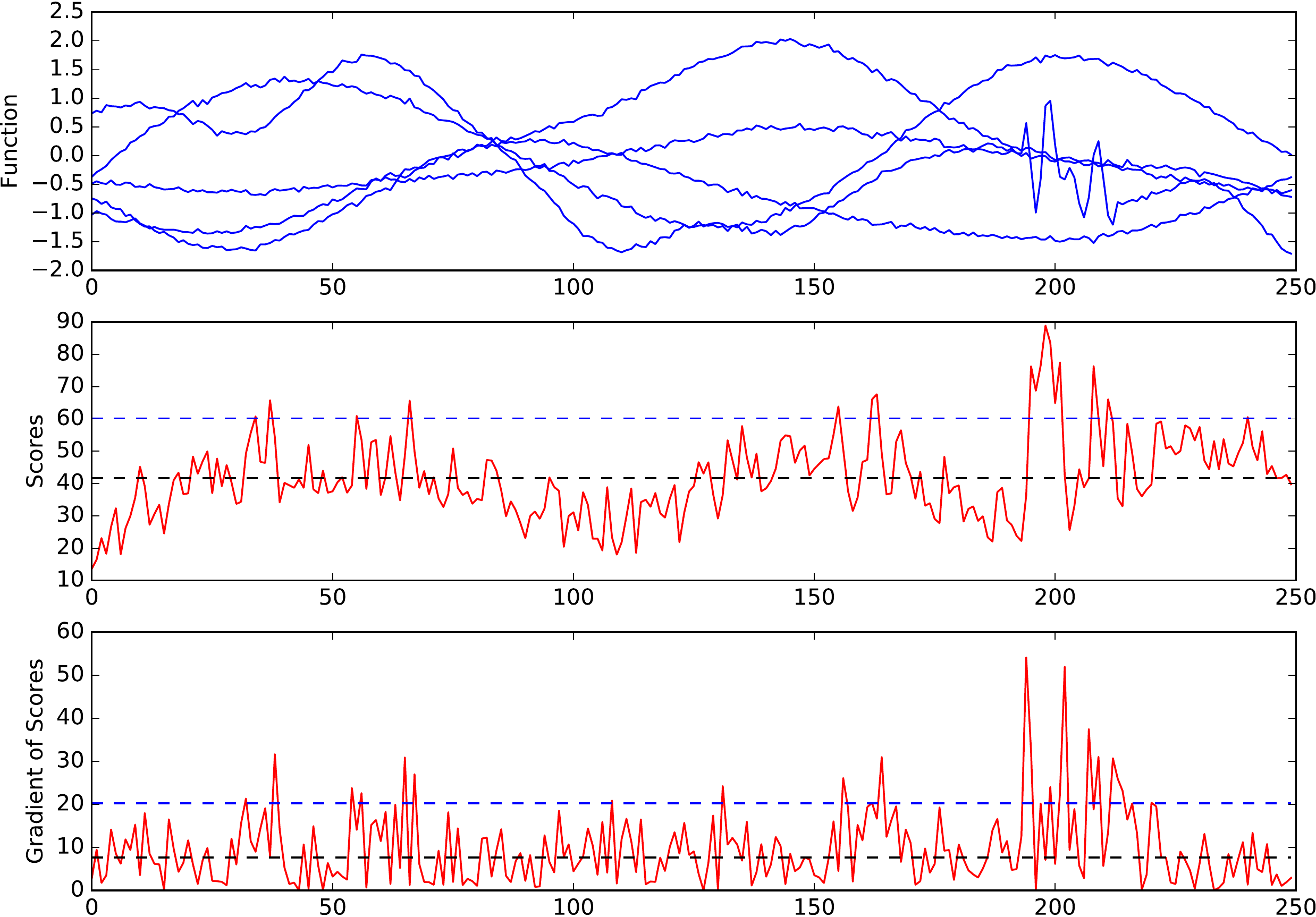}%
    \end{subfigure}%
    \caption{Two exemplary synthetic time-series along with the corresponding Hotelling's $T^2$ scores and their gradients. The dashed black line indicates the mean of the scores and the dashed blue line marks a threshold that is 1.5 standard deviations above the mean. Time-delay embedding with $\kappa=3, \tau=1$ was applied before computing the scores.}
    \label{fig:proposal-examples}
\end{figure*}

\subsubsection{Jensen-Shannon Divergence}
\label{sec:js-divergence}

A divergence measure that does not expose the problem of being asymmetric is the \textit{Jensen-Shannon (JS) divergence}, which builds upon the KL divergence:
\begin{equation}
    \divergence{JS}(p, q)
    = \frac{1}{2}\ \divergence{KL} \left( p, \frac{p+q}{2} \right)
    + \frac{1}{2}\ \divergence{KL} \left( q, \frac{p+q}{2} \right) \;.
    \label{eq:js-def}
\end{equation}
where $p$ and $q$ are probability density functions. $\frac{p+q}{2}$ is a mixture distribution, so that a sample is drawn either from $p$ or from $q$ with equal probability (though a parametrized version of the JS divergence accounting for unequal prior probabilities exists as well, but will not be covered here).

The JS divergence possesses some desirable properties, which the KL divergence does not have: most notably, it is symmetric and bounded between $0$ and $\log 2$ \cite{lin1991jsdivergence}, so that anomaly scores cannot get infinitely high.

Like the KL divergence, the JS divergence can be approximated empirically from the data in the intervals $I$ and $\Omega$.
However, there is no closed-form solution for the JS divergence under the assumption of a Gaussian distribution (as opposed to the KL divergence), since $\frac{p_I + p_\Omega}{2}$ would then be a Gaussian Mixture Model (GMM). Though several approximations of the KL divergence of GMMs have been proposed, they are either computationally expensive or abandon essential properties such as positivity \cite{hershey2007approximating}. This lack of a closed-form solution is likely to be the reason why the JS divergence was clearly outperformed by the KL divergence in our quantitative experiments in \cref{sec:eval} when the Gaussian model is used, despite its desirable theoretic properties.

\subsection{Interval Proposals for Large-Scale Data}
\label{sec:proposals}

Exploiting cumulative sums and a closed-form solution for the KL divergence, the asymptotic time complexity of the MDI algorithm with a Gaussian distribution model could already be reduced to be linear in the number of intervals (see \cref{sec:MDI-cumsum}). If the maximum length of an anomalous interval is independent from the number of samples $N$, the run-time is also linear in $N$. However, due to high constant-time requirements for estimating probability densities and computing the divergence, the algorithm is still too slow for processing large-scale data sets with millions of samples.

Since anomalies are rare by definition, many of the intervals analyzed by a full scan will be uninteresting and irrelevant for the list of the top anomalies detected by the algorithm.
In order to focus on the analysis of non-trivial intervals, we employ a simple proposal technique that selects interesting intervals based on point-wise anomaly scores.

Simply grouping contiguous detections of point-wise anomaly detection methods in order to retrieve anomalous intervals is insufficient, because it will most likely lead to split-up detections. However, it is not unreasonable to assume that many samples inside of an anomalous interval will also have a high point-wise score, especially after applying contextual embedding. \Cref{fig:proposal-examples}, for example, shows two exemplary time-series from the synthetic data set introduced in \cref{sec:eval-dataset} along with the point-wise scores retrieved by applying the Hotelling's $T^2$ method \cite{macgregor1995statistical}, after time-delay embedding has been applied to the time-series. Note that even in the case of the very subtle amplitude-change anomaly, the two highest Hotelling's $T^2$ scores are at the beginning and the end of the anomaly. The idea is to apply a simple threshold operation on the point-wise scores to extract interesting points and then propose all those intervals for detailed scoring by a divergence measure whose first and last samples are among these points if the interval conforms to the size constraints.

This way, the probability density estimation and the computation of the divergence have to be performed for a comparatively small set of interesting intervals only and not for all possible intervals in the time-series. The interval proposal method is not required to have a low false-positive rate, though, because the divergence measure is responsible for the actual scoring. Instead, it has to act as a high-recall system so that truly anomalous intervals are not excluded from the actual analysis.

Since we are only interested in the beginning and end of the anomalies, the point-wise scores are not used directly, but the centralized gradient filter $\left[-1 \quad 0 \quad 1 \right]$ is applied to the scores for reducing them in areas of constant anomalousness and emphasizing changes of the anomaly scores.

The evaluation in \cref{sec:eval-proposals} will show that the interval proposal technique can speed-up the MDI algorithm significantly without impairing its performance.

\section{Experimental Evaluation}
\label{sec:eval}

In this section, we evaluate our MDI algorithm on a quantitative basis using synthetic data and compare it with other approaches well-known in the field of anomaly detection.

\subsection{Data Set}
\label{sec:eval-dataset}

In contrast to many other established machine learning tasks, there is no widely used standard benchmark for the evaluation of anomaly detection algorithms; not for the detection of anomalous intervals and not even for the very common task of point-wise anomaly detection. This is mainly for the reason that the notion of an ``anomaly'' is not well defined and varies between different applications and even from analyst to analyst. Moreover, anomalies are, by definition, rare, which makes the collection of large-scale data sets difficult. However, even if a large amount of data were available, it would be nearly impossible to annotate it in an intersubjective way everyone would agree with. But accurate and complete ground-truth information is mandatory for a quantitative evaluation and comparison of machine learning techniques. Therefore, we use a synthetic data set for assessing the performance of different variants of the MDI algorithm.

All time-series in that data set have been sampled from a Gaussian process $\mathcal{GP}(m, K)$ with a squared-exponential covariance function $K(x_t,x_{t'}) = \left( 2 \pi \ell^2 \right)^{-\sfrac{1}{2}} \cdot \exp\left( - \frac{\left\| x_t - x_{t'} \right\|^2}{2\ell^2} \right) + \sigma^2 \cdot \delta(t,t')$ and zero mean function $m(x) = 0$. The \textit{length scale} of the GP has been set to $\ell^2 = 0.01$ and the noise parameter to $\sigma^2 = 0.001$. $\delta(t,t')$ denotes Kronecker's delta. Different types of anomalies have then been injected into these time-series, with a size varying between 5\% and 20\% of the length of the time-series:
\\

\noindent
\textbf{meanshift:}
A random, but constant value $\gamma \in [3,4]$ is added to or subtracted from the anomalous samples.

\noindent
\textbf{meanshift\_hard:}
A random, but constant value $\gamma \in [0.5,1]$ is added to or subtracted from the anomalous samples.

\noindent
\textbf{meanshift5:}
Five \texttt{meanshift} anomalies are inserted into the  time-series.

\noindent
\textbf{meanshift5\_hard:}
Five \texttt{meanshift\_hard} anomalies inserted into the time-series.

\noindent
\textbf{amplitude\_change:}
The time-series is multiplied with a Gaussian window with standard deviation $\sfrac{L}{4}$ whose mean is the centre of the anomalous interval. Here, $L$ is the length of the anomalous interval and the amplitude of the Gaussian window is clipped at $2.0$. This modified time-series is added to the original one.

\noindent
\textbf{frequency\_change:}
The time-series is sampled from a non-stationary GP, whose covariance function $K(x_t,x_{t'}) = \left( \ell^2(t) \cdot \ell^2(t') \right)^{\sfrac{1}{4}} \cdot \left( \frac{\ell^2(t) + \ell^2(t')}{2} \right)^{-\sfrac{1}{2}} \cdot \exp\left(- \frac{\left\| x_t - x_{t'} \right\|^2}{\ell^2(t) + \ell^2(t')} \right) + \sigma \cdot \delta(t,t')$ uses a reduced length scale $\ell^2(t) = \left\{\begin{matrix}
    10^{-2} & \text{if}\ t \notin [a,b), \\ 
    10^{-4} & \text{if}\ t \in [a,b)\phantom{,}
\end{matrix}\right.$ during the anomalous interval $I = [a,b)$, so that correlations between samples are reduced, which leads to more frequent oscillations \cite{paciorek2004nonstationary}.

\noindent
\textbf{mixed:}
The values in the anomalous interval are replaced with the values of another function sampled from the Gaussian process. 10 time-steps at the borders of the anomaly are interpolated between the two functions for a smooth transition. This rather difficult test case is supposed to reflect the concept of anomalies as being ``generated by a different mechanism'' (cf. \cref{sec:MDI-idea}).
\\

\begin{sloppypar}
The above test cases are all univariate, but there are as well similar multivariate scenarios \texttt{meanshift\_multvar}, \texttt{amplitude\_change\_multvar}, \texttt{frequency\_change\_multvar}, and \texttt{mixed\_multvar} with 5-dimensional time-series. Regarding the first three of these test cases, the corresponding anomaly is injected into one of the dimensions, while all attributes are replaced with those of the other time-series in the \texttt{mixed\_multvar} scenario, which is also a property of many real time-series.
\end{sloppypar}

This results in a synthetic test data set with 11 test cases, a total of 1100 time-series and an overall number of 1900 anomalies.
Examples for all test cases are shown in \cref{fig:synthetic-examples}.

\begin{figure}[tb]
    \includegraphics[width=\linewidth]{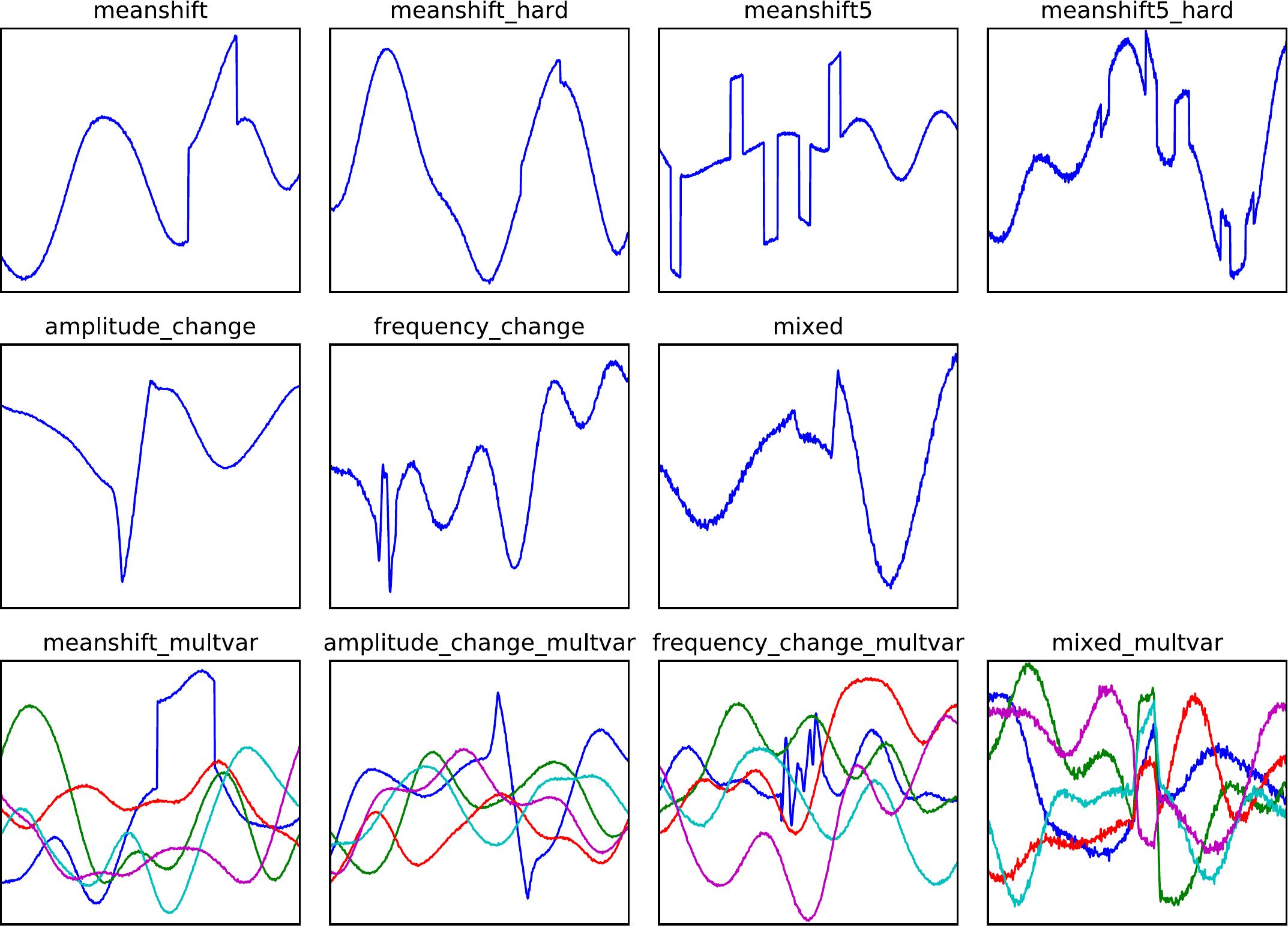}
    \caption{Examples from the synthetic test data set.}
    \label{fig:synthetic-examples}
\end{figure}

\subsection{Performance Comparison}
\label{sec:eval-performance}

\begin{figure*}
    \includegraphics[width=\linewidth]{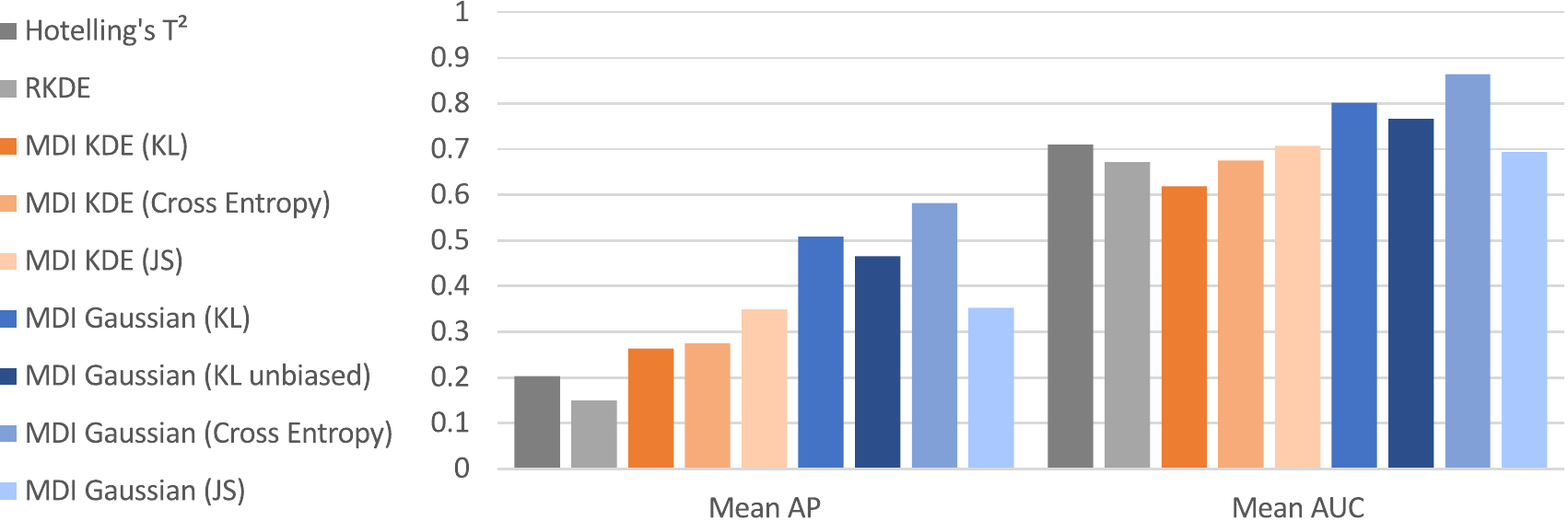}
    \caption{Performance comparison of different variants of the MDI algorithm and the baselines on the synthetic data set.}
    \label{fig:performance-comparison}
\end{figure*}

\begin{figure*}
    \begin{minipage}{0.54\linewidth}
        \includegraphics[width=\linewidth]{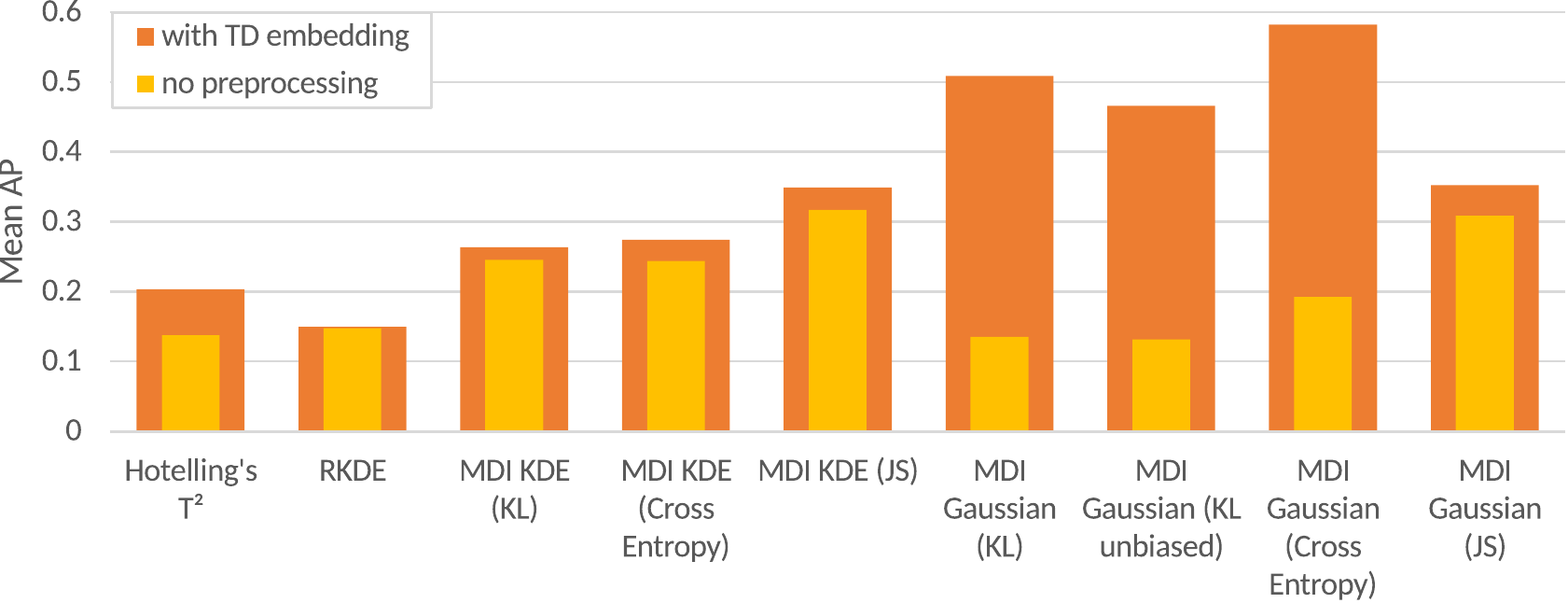}
        \captionof{figure}{Effect of time-delay embedding with $\kappa=6, \tau=2$ on the performance of the MDI algorithm and the baselines on the synthetic data set.}
        \label{fig:td-effect}
    \end{minipage}
    \hfill
    \begin{minipage}{0.41\linewidth}
        \includegraphics[width=\linewidth]{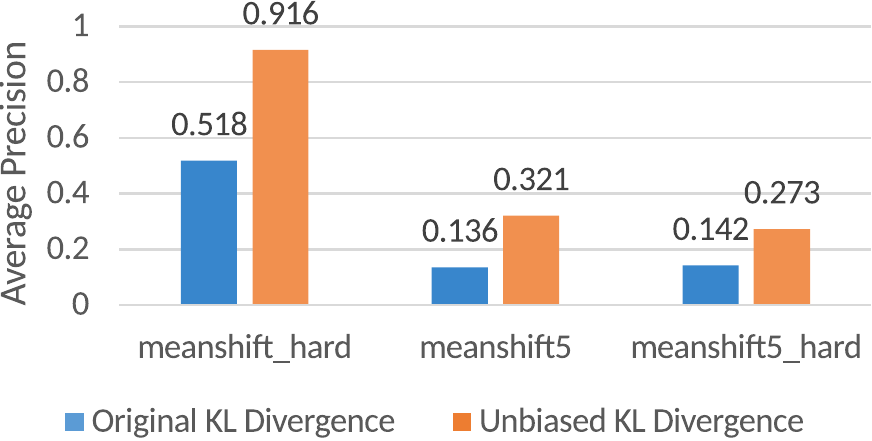}
        \captionof{figure}{Performance of the original and the unbiased KL divergence on test cases with multiple or subtle anomalies.}
        \label{fig:performance-unbiased}
    \end{minipage}
\end{figure*}

Since the detection of anomalous regions in spatio-temporal data is rather a \textit{detection} than a \textit{classification} task, we do not use the \textit{Area under the ROC Curve (AUC)} as performance criterion like many works on point-wise anomaly detection do, but quantify the performance in terms of \textit{Average Precision (AP)} with an Intersection over Union (IoU) criterion that allows an overlap between 50\% and 100\%.

Hotelling's $T^2$ \cite{macgregor1995statistical} and Robust Kernel Density Estimation (RKDE) \cite{kim2012rkde} are used as baselines for the comparison. For RKDE, a Gaussian kernel with a standard deviation of $1.0$ and the Hampel loss function are used. We obtain interval detections from those point-wise baselines by grouping contiguous detections based on multiple thresholds and applying non-maximum suppression afterwards. The overlap threshold for non-maximum suppression is set to 0 in all experiments to obtain non-overlapping intervals only. To be fair, MDI also has to compete with the baselines on the task they have been designed for, i.e., point-wise anomaly detection, by means of AUC. The interval detections can be converted to point-wise detections easily by taking the score of the interval a sample belongs to as score for that sample.

\Cref{fig:performance-comparison} shows that the performance of the MDI algorithm using the Gaussian model is clearly superior on the entire synthetic data set compared to the baselines by means of Mean AP and even on the task of point-wise anomaly detection measured by AUC. The $\divergence{KL}(p_I, p_\Omega)$ polarity of the KL divergence has been used in all experiments following the argumentation in \cref{sec:KL-polarity}. In addition, the performance of the unbiased variant $\divergence{U-KL}(p_I, p_\Omega)$ is reported for the Gaussian model. The parameters of time-delay embedding have been fixed to $\kappa=6, \tau=2$ which we have empirically found to be suitable for this data set. For KDE, we used a Gaussian kernel with bandwidth $1.0$.

While MDI KDE is already superior to the baselines, it is significantly outperformed by MDI Gaussian, which improves on the best baseline by 286\%. This discrepancy between the MDI algorithm using KDE and using Gaussian models is mainly due to time-delay embedding, which is particularly useful for the Gaussian model, because it takes correlations of the variables into account, as opposed to KDE. As can be seen in \cref{fig:td-effect}, the Gaussian model would be worse than KDE and on par with the baselines without time-delay embedding.

Considering the Mean AP on this synthetic data set, the unbiased KL divergence did not perform better than the original KL divergence. However, on the test cases \texttt{meanshift5}, \texttt{meanshift5\_hard}, and \texttt{meanshift\_hard} it achieved an AP twice as high as that of $\divergence{KL}(p_I, p_\Omega)$, which was poor on those data sets (see \cref{fig:performance-unbiased}). Since real data sets are also likely to contain multiple anomalies, we expect $\divergence{U-KL}$ to be a more reliable divergence measure in practice.

Another interesting result is that cross entropy was the best performing divergence measure. This shows the advantage of reducing the impact of the inner distribution $p_I$, which is estimated from very few samples. However, it may perform less reliably on real data whose entropy varies more widely over time than in this synthetic benchmark.

\begin{figure*}
    \begin{subfigure}{0.49\linewidth}%
        \centering
        \caption{Proposal Recall}%
        \includegraphics[width=0.9\linewidth]{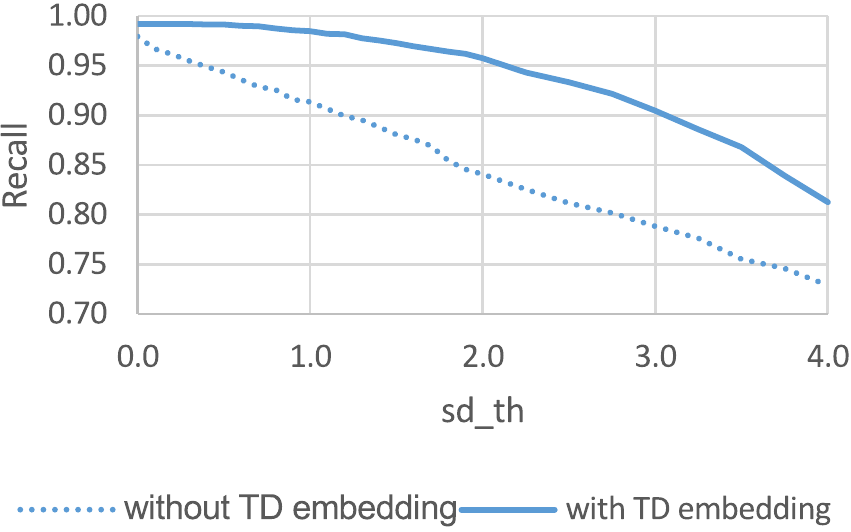}%
        \label{fig:proposal-recall}%
    \end{subfigure}%
    \hfill%
    \begin{subfigure}{0.49\linewidth}%
        \centering
        \caption{Effect of Interval Proposals}%
        \includegraphics[width=0.9\linewidth]{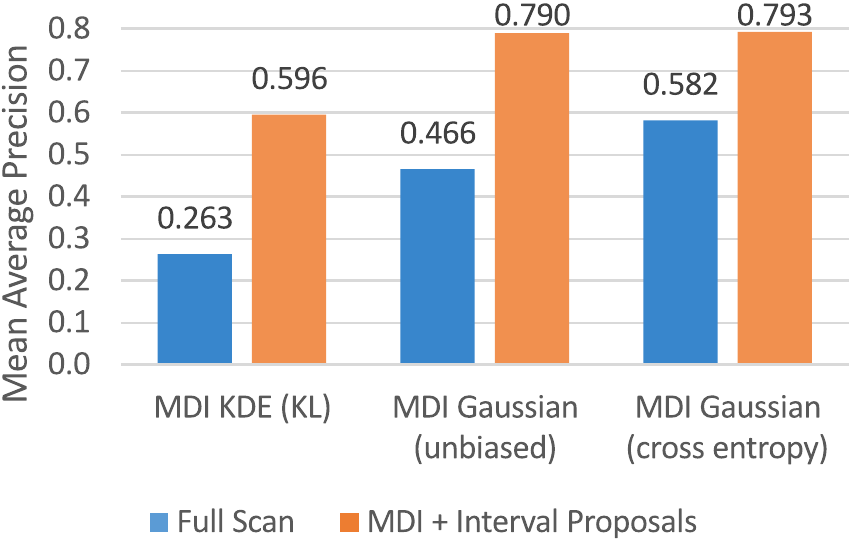}%
        \label{fig:proposal-effect}%
    \end{subfigure}%
    \caption{(\subref{fig:proposal-recall}) Recall of interval proposals without time-delay embedding and with $\kappa=6, \tau=2$ on the synthetic data set for different proposal thresholds. (\subref{fig:proposal-effect}) Effect of interval proposals on the Mean Average Precision of different variants of the MDI algorithm on the synthetic data set.}
    \label{fig:eval-proposals}
\end{figure*}

The Jensen-Shannon divergence performed best for the KDE method, but worst for the Gaussian model. This can be explained by the lack of a closed-form solution for the JS divergence, so that it has to be approximated from the data, while the KL divergence of two Gaussians can be computed exactly. This advantage of the combination of the KL divergence with Gaussians models is, thus, not only beneficial with respect to the run-time of the algorithm, but also with respect to its detection performance.

The differences between the results in \cref{fig:performance-comparison} are significant on a level of 5\% according to the permutation test.

\subsection{Interval Proposals}
\label{sec:eval-proposals}

In order not to sacrifice detection performance for the sake of speed, the interval proposal method described in \cref{sec:proposals} has to act as a high-recall system proposing the majority of anomalous intervals. This can be controlled to some degree by adjusting the threshold $\theta = \mu + \vartheta \cdot \sigma $ applied to the point-wise scores, where $\mu$ and $\sigma$ are the empirical mean and standard deviation of the point-wise scores, respectively. To find a suitable value for the hyper-parameter $\vartheta$, we have evaluated the recall of the proposed intervals for different values of $\vartheta \in [0,4]$ using the usual IoU measure for distinguishing between true and false positive detections. The results in \cref{fig:proposal-recall} show that time-delay embedding is of a great benefit in this scenario too. Based on these results, we selected $\vartheta = 1.5$ for subsequent experiments, which still provides a recall of 97\% and is already able to reduce the number of intervals to be analyzed in detail significantly.

The processing of all the 1100 time-series from the synthetic data set, which took 216 seconds on an Intel Core\texttrademark\ i7-3930K with 3.20GHz and eight virtual cores using the Gaussian model and the unbiased KL divergence after the usual time-delay embedding with $\kappa=6, \tau=2$, could be reduced to 5.2 seconds using interval proposals. This corresponds to a speed-up by more than 40 times.

Though impressive, the speed-up was expected. What was not expected, however, is that the use of interval proposals also increased the detection performance of the entire algorithm by up to 125\%, depending on the divergence. The exact average precision achieved by the algorithm on the synthetic data set with a full scan over all intervals and with interval proposals is shown in \cref{fig:proposal-effect}. This improvement is also reflected by the AUC scores not reported here and is, hence, not specific to the evaluation criterion. A possible explanation for this phenomenon is that some intervals that are uninteresting but distracting for the MDI algorithm are not even proposed for detailed analysis.

\section{Application Examples on Real Data}
\label{sec:applications}

The following application examples on real data from various different domains are intended to complement the quantitative results presented above with a demonstration of the feasibility of our approach for real problems.

\subsection{Detection of North Sea Storms}
\label{sec:exp-storms}

To demonstrate the efficiency of the MDI algorithm on long time-series, we apply it to storm detection in climate data: The coastDat-1 hindcast \cite{coastDat} is a reconstruction of various marine climate variables measured at several locations over the southern North Sea between 51\textdegree\ N, 3\textdegree\ W and 56\textdegree\ N, 10.5\textdegree\ E with an hourly resolution over the 50 years from 1958 to 2007, i.e., approximately 450,000 time steps. Since measurements are not available at locations over land, we select the subset of the data between 53.9\textdegree\ N, 0\textdegree\ E and 56\textdegree\ N, 7.7\textdegree\ E, which results in a regular spatial grid of size $78 \times 43$ located entirely over the sea (cf. \cref{fig:coastdat-map}). Because cyclones and other storms usually have a large spatial extent and move over the region covered by the measurements, we reduce the spatio-temporal data to purely temporal data in this experiment by averaging over all spatial locations. The variables used for this experiment are significant wave height, mean wave period and wind speed.

We apply the MDI algorithm to that data set using the Gaussian model and the unbiased KL divergence. Since North Sea storms lasting longer than 3 days are usually considered two independent storms, the maximum length of the possible intervals is set to 72 hours, while the minimum length is set to 12 hours. The parameters of time-delay embedding are fixed to $\kappa=3, \tau=1$.

28 out of the top 50 and 7 out of the top 10 detections returned by the algorithm can be associated with well-known historic storms. The highest scoring detection is the so-called ``Hamburg-Flut'' which flooded one fifth of Hamburg in February 1962 and caused 340 deaths. Also among the top 5 is the ``North Frisian Flood'', which was a severe surge in November 1981 and lead to several dike breaches in Denmark.

A visual inspection of the remaining 22 detections revealed, that almost all of them are North Sea storms as well. Only 4 of them are not storms, but the opposite: they span times of extremely calm sea conditions with nearly no wind and very low waves, which is some kind of anomaly as well.

A list of the top 50 detections can be found in \cref{app:coastdat-detections} and animated heatmaps of the three variables during the detected time-frames are shown on our web page: \url{http://www.inf-cv.uni-jena.de/libmaxdiv_applications.html}.

Processing this comparatively long time-series using 8 parallel threads took 27 seconds. This time can be reduced further to half a second by using interval proposals without changing the top 10 detections significantly. This supports the assumption, that the novel proposal method does not only perform well on synthetic, but also on real data.

\subsection{Spatio-Temporal Detection of Low Pressure Areas}
\label{sec:exp-slp}

As a genuine spatio-temporal use-case, we have also applied the MDI algorithm to a time-series with daily sea-level pressure (SLP) measurements over the North Atlantic Sea with a much wider spatial coverage than in the previous experiment. For this purpose, we selected a subset of the NCEP/NCAR reanalysis \cite{kalnay1996ncep} covering the years from 1957 to 2011. This results in a time-series of about 20,000 days. The spatial resolution of 2.5\textdegree\ degrees is rather coarse and the locations are organized in a regular grid of size $28 \times 17$ covering the area between 25\textdegree\ N, 52.5\textdegree\ W and 65\textdegree\ N, 15\textdegree\ E.

Again, the MDI algorithm with the Gaussian model and the unbiased KL divergence is applied to this time-series to detect low-pressure fields, which are related to storms. Regarding the time dimension, we apply time-delay embedding with $\kappa=3, \tau=1$ and search for intervals of size between 3 and 10 days. Concerning space, we do not apply any embedding for now and set a minimum size of $7.5\degree \times 7.5\degree$, but no maximum. 7 out of the top 20 detections could be associated with known historic storms.

A visual inspection of the results shows that the MDI algorithm is not only capable of detecting occurrences of anomalous low-pressure fields over time, but also their spatial location. This can be seen in the animations on our web page: \url{http://www.inf-cv.uni-jena.de/libmaxdiv_applications.html}. A few snapshots and a list of detections are also shown in \cref{app:slp-detections}.

It is not necessary to apply spatial-neighbor embedding in this scenario, since we are not interested in spatial outliers, but only in the location of temporal outliers. We have also experimented with applying spatial-neighbor embedding and it led to the detection of some high-pressure fields surrounded by low-pressure fields. Since high-pressure fields are both larger and more common in this time-series, they are not detected as temporal anomalies.

Since we did not set a maximum spatial extent of anomalous regions, the algorithm took 4 hours to process this spatio-temporal time-series. This could, however, be reduced to 22 seconds using our interval proposal technique, with only a minor loss of localization accuracy.

\begin{figure}
    \centering
    \includegraphics[width=0.6\linewidth]{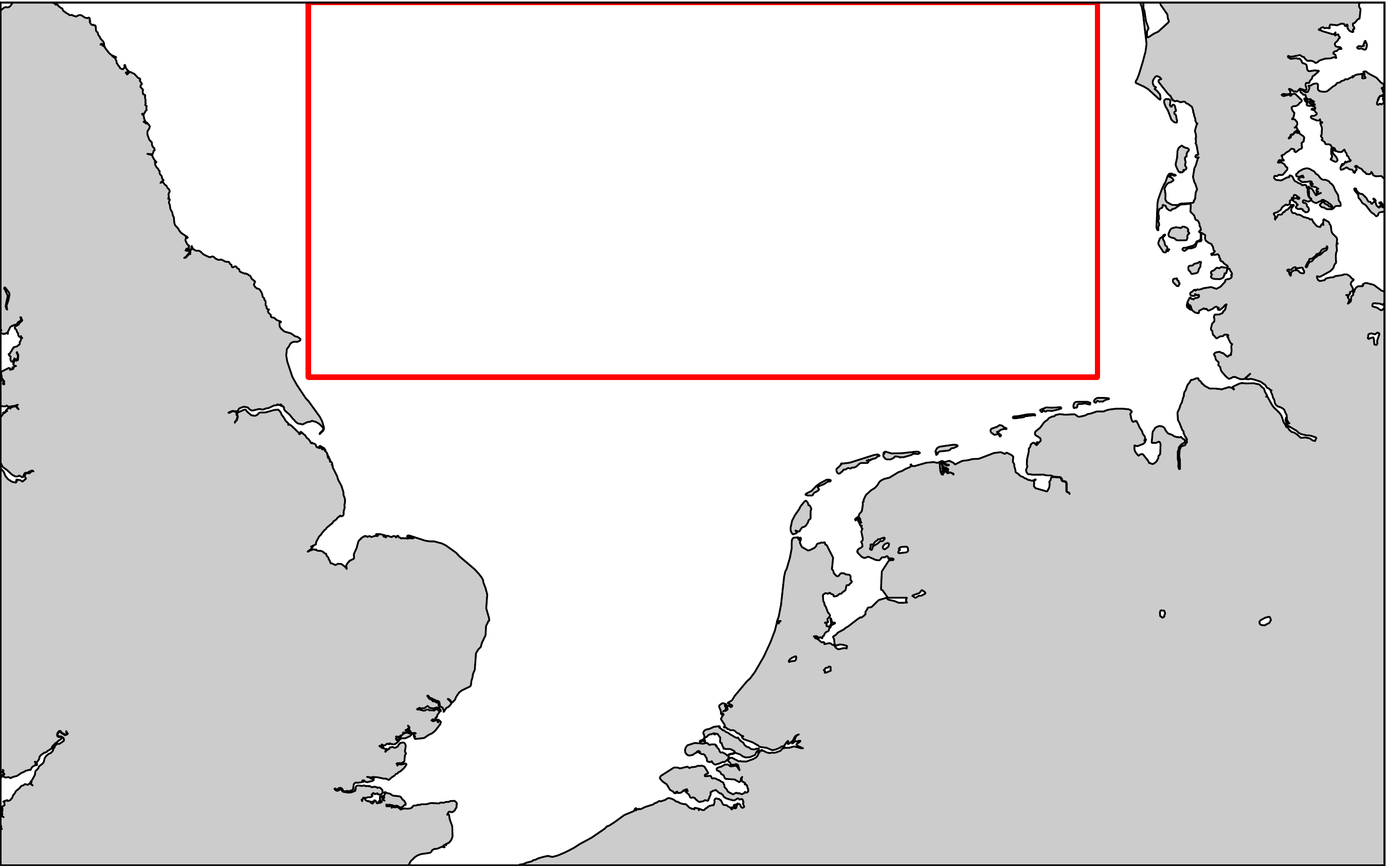}
    \caption{Map of the area covered by the coastDat dataset. The highlighted box denotes the area from which data have been aggregated for our experiment.}
    \label{fig:coastdat-map}
\end{figure}

\subsection{Stylistic Anomalies in Texts of Natural Language}
\label{sec:exp-nlp}

By employing a transformation from the domain of natural language to real-valued features, the MDI algorithm can also be applied to written texts. One important task in Natural Language Processing (NLP) is, for example, the identification of paragraphs written in a different language than the remainder of the document. Such a segmentation can be used as a pre-processing step for the actual, language-specific processing.

In order to simulate such a scenario, we use a subset of the \textit{Europarl} corpus \cite{koehn2005europarl}, which is a sentence-aligned parallel corpus extracted from the proceedings of the European Parliament in 21 different languages. The 33,334 English sentences from the \textit{COMTRANS} subset of \textit{Europarl}, which is bundled with the Natural Language Toolkit (NLTK)  for Python, serve as a basis and 5 random sequences of between 10 and 50 sentences are replaced by their German counterparts to create a semantically coherent mixed-language text.

We employ a simple transformation of sentences to feature vectors: Since the distribution of letter frequencies varies across languages, each sentence is represented by a 27-dimensional vector whose first element is the average word length in the sentence and the remaining 26 components are the absolute frequencies of the letters ``a'' to ``z'' (case-insensitive). German umlauts are ignored since they would make the identification of German sentences too easy.

The MDI algorithm using the unbiased KL divergence is then applied in order to search for anomalous sequences of between 10 and 50 sentences in the mixed-language text after sentence-wise transformation to the feature space. Because the number of features is quite high in relation to the number of samples in an interval, we use a global covariance matrix shared among the Gaussian models and do not apply time-delay embedding.

The top 5 detections returned by the algorithm correspond to the 5 German paragraphs that have been injected into the English text. The localization is quite accurate, though not perfect: on average, the boundaries of the detected paragraphs are off by 1.4 sentences from the ground-truth. The next 5 detections are mainly tables and enumerations, which are also an anomaly compared with the usual dialog style of the parliament proceedings.

For this scenario, we had designed the features specifically for the task of language identification. To see what else would be possible with a smaller bias towards a specific application, we have also applied the algorithm to the \nth{1} Book of Moses (Genesis) in the King James Version of the bible, where we use \texttt{word2vec} \cite{mikolov2013efficient} for word-wise feature embeddings. \texttt{word2vec} learns real-valued vector representations of words in a way, so that the representations of words that occur more often in similar contexts have a smaller Euclidean distance. The embeddings used for this experiment have been learned from the Brown corpus using the continuous skip-gram model and we have chosen a dimensionality of 50 for the vector space, which is rather low for \texttt{word2vec} models, but still tractable for the Gaussian probability density model. Words which have not been seen by the model during training are treated as missing values.

The top 10 detections of sequences of between 50 and 500 words according to the unbiased KL divergence are provided in \cref{app:genesis-detections}. The first five of those are, without exception, genealogies, which can indeed be considered as anomalies, because they are long lists of names of fathers, sons and wives, connected by repeating phrases. The \nth{6} detection is a dialog between God and Abraham, where Abraham bargains with God and tries to convince him not to destroy the town Sodom. This episode is another example for stylistic anomalies, since the dialog is a concatenation of very similar question-answer pairs with only slight modifications.

Due to the rather wide limits on the possible size of anomalous intervals, the analysis of the entire book Genesis, a sequence of 44,764 words, took a total of 9 minutes, where we have not yet used interval proposals.

\subsection{Anomalies in Videos}
\label{sec:exp-video}

The detection of unusual events in videos is another important task, e.g., in the domain of video surveillance or industrial control systems. Though videos are already represented as multivariate spatio-temporal time-series with usually 3 variables (RGB channels), a semantically more meaningful representation can be obtained by extracting features from a Convolutional Neural Network (CNN).

In this experiment, we use a video of a traffic scene from the ViSOR repository \cite{vezzani2010visor}. It has a length of 60 seconds (1495 frames) and a rather low resolution of $360 \times 288$ pixels. The video shows a street and a side-walk with a varying frequency of cars crossing the captured area horizontally in both directions. At one point, a group of two pedestrians and one cyclist appears on the side-walk and crosses the area from right to left at a low speed. Another sequence at the end of the video shows a single cyclist riding along the side-walk in the opposite direction at a higher speed. Altogether, 26 seconds of the video contain moving objects and 34 seconds just show an empty street. The nominal state of the scene hence is not unambiguous.

We extract features for each frame of the video from the \texttt{conv5} layer of CaffeNet \cite{jia2014caffe}, which reduces the spatial resolution to $22 \times 17$, but increases the number of feature dimensions to 256. This rather large feature space is then reduced to 16 dimensions using PCA and the MDI algorithm is applied to search for anomalous sub-blocks with a minimum spatial extent of $10 \times 5$ cells and a length between 3 and 12 seconds. The time-delay embedding parameters are fixed to $\kappa=3, \tau=4$ for capturing half a second as context without increasing the number of dimensions too much. We apply the MDI algorithm with both the unbiased KL divergence and cross entropy as divergence measures. The Gaussian distribution model is employed in both cases.

The results (some snapshots are shown in \cref{fig:visor-detections}) exhibit an interesting difference between the two divergence measures: The KL divergence detects a sub-sequence of approximately 10 seconds where absolutely no objects cross the captured area. Thus, car traffic is identified as normal behavior and long spans of time without any traffic are considered as anomalous, because they have a very low entropy and the KL divergence penalizes the entropy of all other intervals, as opposed to cross entropy which does not take the entropy of the detected interval into account. Another detection occurs when the group of pedestrians enters the area. The localization, however, is rather fuzzy and spans nearly the entire frame. Cross entropy, on the other hand, seems to identify the state of low or no traffic as normal behavior and yields two detections at the beginning and the end of the video where the frequency of cars is higher than in the rest of the video. It detects the pedestrians too, but with a better localization accuracy. This detection, however, does not cover the entire side-walk, since the pedestrians are moving from right to left and the algorithm is not designed for tracking moving anomalies.

Without using interval proposals, the comparatively high number of features combined with the large spatial search space would result in a processing time of 13 hours for this video.
This can be reduced to 5 minutes using our novel interval proposal technique.

\begin{figure}
    \begin{subfigure}{0.23\linewidth}%
        \includegraphics[width=\linewidth]{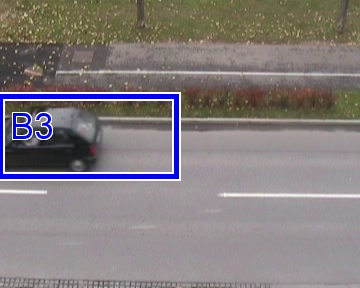}%
        \caption{3 s}%
    \end{subfigure}%
    \hfill%
    \begin{subfigure}{0.23\linewidth}%
        \includegraphics[width=\linewidth]{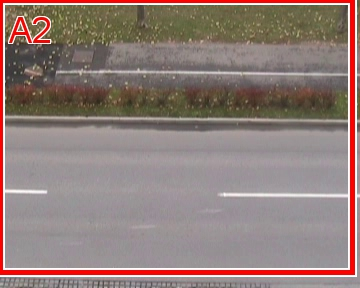}%
        \caption{13 s}%
    \end{subfigure}%
    \hfill%
    \begin{subfigure}{0.23\linewidth}%
        \includegraphics[width=\linewidth]{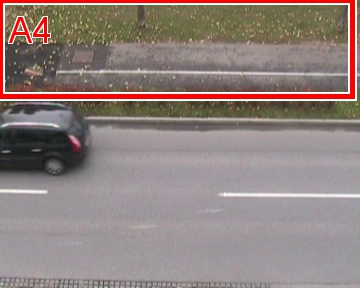}%
        \caption{23 s}%
    \end{subfigure}%
    \hfill%
    \begin{subfigure}{0.23\linewidth}%
        \includegraphics[width=\linewidth]{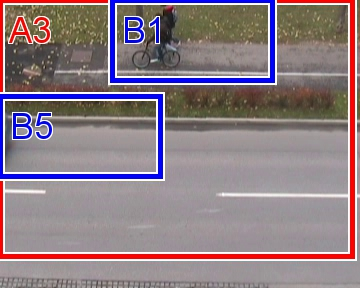}%
        \caption{30 s}%
    \end{subfigure}%
    \caption{Snapshots from the example video with corresponding detections. Regions detected using the unbiased KL divergence start with the character ``A'', those detected by cross entropy start with ``B''. The full video can be found on our web page: \protect\url{http://www.inf-cv.uni-jena.de/libmaxdiv_applications.html}.}
    \label{fig:visor-detections}
\end{figure}

\section{Summary and Conclusions}
\label{sec:conclusions}

We have introduced a novel unsupervised algorithm for anomaly detection that is suitable for analyzing large multivariate time-series and can detect anomalous {\em regions} not only in temporal but also in spatio-temporal data from various domains. The proposed MDI algorithm outperforms existing anomaly detection techniques, while being comparatively time efficient, thanks to an efficient implementation and a novel interval proposal technique that excludes uninteresting parts of the data from in-depth analysis. Moreover, we have exposed a bias of the Kullback-Leibler (KL) divergence towards smaller intervals and proposed an unbiased KL divergence that is superior when applied to real data. We have also investigated other divergence measures and found that the use of cross entropy can result in improved performance for data with a low variability of entropy.

Various experiments on data from different domains, including climate analysis, natural language processing and video surveillance, have shown that the algorithm proposed in this work can serve as a generic, unsupervised anomaly detection technique that can facilitate tasks such as process control, data analysis and knowledge discovery.
These application examples emphasize the importance of interval-based anomaly detection techniques, and we hope that our work is able to motivate further research in this area.

For processing data with a large spatial extent or a high number of dimensions, a full scan over all possible sub-blocks of the data would be prohibitively time-consuming. To this end, we have introduced a novel interval proposal technique that can reduce computation time significantly. However, interval proposals usually lead to less accurate detections, which is particularly noticeable with regard to spatial dimensions. Future work might hence investigate applying in-depth analysis not only to the proposed intervals themselves, but also to their neighborhood. An alternative might be a hierarchical approach of successive refinement.

Other open problems to be addressed in the future include efficient probability density estimation in the face of high-dimensional data, the automatic determination of suitable parameters for time-delay embedding, and tracking anomalies moving in space over time. Furthermore, it is often necessary to convince the expert analyst that a detected anomaly really is an anomaly. Thus, future work will include the development of an attribution scheme that can explain which variables or combinations of variables caused a detection and why.

\section*{Acknowledgements}
The support of the project EU H2020-EO-2014 project BACI
``Detecting changes in essential ecosystem and biodiversity properties-towards a
Biosphere Atmosphere Change Index'',
contract 640176, is gratefully acknowledged.

\bibliographystyle{IEEEtran}
\bibliography{references}

\begin{thebibliography}{10}
\providecommand{\url}[1]{#1}
\csname url@samestyle\endcsname
\providecommand{\newblock}{\relax}
\providecommand{\bibinfo}[2]{#2}
\providecommand{\BIBentrySTDinterwordspacing}{\spaceskip=0pt\relax}
\providecommand{\BIBentryALTinterwordstretchfactor}{4}
\providecommand{\BIBentryALTinterwordspacing}{\spaceskip=\fontdimen2\font plus
\BIBentryALTinterwordstretchfactor\fontdimen3\font minus
  \fontdimen4\font\relax}
\providecommand{\BIBforeignlanguage}[2]{{%
\expandafter\ifx\csname l@#1\endcsname\relax
\typeout{** WARNING: IEEEtran.bst: No hyphenation pattern has been}%
\typeout{** loaded for the language `#1'. Using the pattern for}%
\typeout{** the default language instead.}%
\else
\language=\csname l@#1\endcsname
\fi
#2}}
\providecommand{\BIBdecl}{\relax}
\BIBdecl

\bibitem{walker1928world}
G.~Walker, ``World weather,'' \emph{Quarterly Journal of the Royal
  Meteorological Society}, vol.~54, no. 226, pp. 79--87, 1928.

\bibitem{breunig2000lof}
M.~M. Breunig, H.-P. Kriegel, R.~T. Ng, and J.~Sander, ``Lof: identifying
  density-based local outliers,'' in \emph{ACM sigmod record}, vol.~29,
  no.~2.\hskip 1em plus 0.5em minus 0.4em\relax ACM, 2000, pp. 93--104.

\bibitem{kim2012rkde}
J.~Kim and C.~D. Scott, ``Robust kernel density estimation,'' \emph{Journal of
  Machine Learning Research}, vol.~13, no. Sep, pp. 2529--2565, 2012.

\bibitem{macgregor1995statistical}
J.~F. MacGregor and T.~Kourti, ``Statistical process control of multivariate
  processes,'' \emph{Control Engineering Practice}, vol.~3, no.~3, pp.
  403--414, 1995.

\bibitem{schoelkopf2001ocsvm}
B.~Sch{\"o}lkopf, J.~C. Platt, J.~Shawe-Taylor, A.~J. Smola, and R.~C.
  Williamson, ``Estimating the support of a high-dimensional distribution,''
  \emph{Neural computation}, vol.~13, no.~7, pp. 1443--1471, 2001.

\bibitem{keogh2005hot}
E.~Keogh, J.~Lin, and A.~Fu, ``Hot sax: Efficiently finding the most unusual
  time series subsequence,'' in \emph{{IEEE} International Conference on Data
  Mining (ICDM)}.\hskip 1em plus 0.5em minus 0.4em\relax IEEE, 2005, pp. 8--pp.

\bibitem{ren2018anomaly}
H.~Ren, M.~Liu, X.~Liao, L.~Liang, Z.~Ye, and Z.~Li, ``Anomaly detection in
  time series based on interval sets,'' \emph{IEEJ Transactions on Electrical
  and Electronic Engineering}, 2018.

\bibitem{liu2013change}
S.~Liu, M.~Yamada, N.~Collier, and M.~Sugiyama, ``Change-point detection in
  time-series data by relative density-ratio estimation,'' \emph{Neural
  Networks}, vol.~43, pp. 72--83, 2013.

\bibitem{senin2018grammarviz}
P.~Senin, J.~Lin, X.~Wang, T.~Oates, S.~Gandhi, A.~P. Boedihardjo, C.~Chen, and
  S.~Frankenstein, ``Grammarviz 3.0: Interactive discovery of variable-length
  time series patterns,'' \emph{ACM Transactions on Knowledge Discovery from
  Data (TKDD)}, vol.~12, no.~1, p.~10, 2018.

\bibitem{jiang2015general}
M.~Jiang, A.~Beutel, P.~Cui, B.~Hooi, S.~Yang, and C.~Faloutsos, ``A general
  suspiciousness metric for dense blocks in multimodal data,'' in \emph{{IEEE}
  International Conference on Data Mining (ICDM)}.\hskip 1em plus 0.5em minus
  0.4em\relax IEEE, 2015, pp. 781--786.

\bibitem{wu2010spatio}
E.~Wu, W.~Liu, and S.~Chawla, ``Spatio-temporal outlier detection in
  precipitation data,'' in \emph{Knowledge discovery from sensor data}.\hskip
  1em plus 0.5em minus 0.4em\relax Springer, 2010, pp. 115--133.

\bibitem{hawkins1980}
D.~M. Hawkins, \emph{Identification of outliers}.\hskip 1em plus 0.5em minus
  0.4em\relax Springer, 1980, vol.~11.

\bibitem{viola2004robust}
P.~Viola and M.~J. Jones, ``Robust real-time face detection,''
  \emph{International journal of computer vision}, vol.~57, no.~2, pp.
  137--154, 2004.

\bibitem{packard1980td}
N.~H. Packard, J.~P. Crutchfield, J.~D. Farmer, and R.~S. Shaw, ``Geometry from
  a time series,'' \emph{Physical review letters}, vol.~45, no.~9, p. 712,
  1980.

\bibitem{takens1981detecting}
F.~Takens, ``Detecting strange attractors in turbulence,'' in \emph{Dynamical
  systems and turbulence}.\hskip 1em plus 0.5em minus 0.4em\relax Springer,
  1981, pp. 366--381.

\bibitem{kut2006spatio}
A.~Kut and D.~Birant, ``Spatio-temporal outlier detection in large databases,''
  \emph{CIT. Journal of computing and information technology}, vol.~14, no.~4,
  pp. 291--297, 2006.

\bibitem{cheng2006multiscale}
T.~Cheng and Z.~Li, ``A multiscale approach for spatio-temporal outlier
  detection,'' \emph{T. GIS}, vol.~10, no.~2, pp. 253--263, 2006.

\bibitem{duchi2007derivations}
J.~Duchi, ``Derivations for linear algebra and optimization,'' \emph{Berkeley,
  California}, 2007.

\bibitem{ahmed1989entropy}
N.~A. Ahmed and D.~Gokhale, ``Entropy expressions and their estimators for
  multivariate distributions,'' \emph{{IEEE} Transactions on Information
  Theory}, vol.~35, no.~3, pp. 688--692, 1989.

\bibitem{Rodner16:MDI}
E.~Rodner, B.~Barz, Y.~Guanche, M.~Flach, M.~Mahecha, P.~Bodesheim,
  M.~Reichstein, and J.~Denzler, ``Maximally divergent intervals for anomaly
  detection,'' in \emph{ICML Workshop on Anomaly Detection (ICML-WS)}, 2016.

\bibitem{anderson1962mvstat}
T.~W. Anderson, ``An introduction to multivariate statistical analysis,'' Wiley
  New York, Tech. Rep., 1962.

\bibitem{kanungo1995mvhypot}
T.~Kanungo and R.~M. Haralick, ``Multivariate hypothesis testing for gaussian
  data: Theory and software,'' Citeseer, Tech. Rep., 1995.

\bibitem{lin1991jsdivergence}
J.~Lin, ``Divergence measures based on the shannon entropy,'' \emph{{IEEE}
  Transactions on Information theory}, vol.~37, no.~1, pp. 145--151, 1991.

\bibitem{hershey2007approximating}
J.~R. Hershey and P.~A. Olsen, ``Approximating the kullback leibler divergence
  between gaussian mixture models,'' in \emph{{IEEE} International Conference
  on Acoustics, Speech and Signal Processing (ICASSP)}, vol.~4.\hskip 1em plus
  0.5em minus 0.4em\relax IEEE, 2007, pp. IV--317.

\bibitem{paciorek2004nonstationary}
C.~Paciorek and M.~Schervish, ``Nonstationary covariance functions for gaussian
  process regression,'' \emph{Advances in neural information processing systems
  (NIPS)}, vol.~16, pp. 273--280, 2004.

\bibitem{coastDat}
\BIBentryALTinterwordspacing
{Helmholtz-Zentrum Geesthacht, Zentrum f{\"u}r Material- und
  K{\"u}stenforschung GmbH}. (2012) \BIBforeignlanguage{de}{coastdat-1 waves
  north sea wave spectra hindcast (1948-2007)}. World Data Center for Climate
  (WDCC). [Online]. Available:
  \url{https://doi.org/10.1594/WDCC/coastDat-1_Waves}
\BIBentrySTDinterwordspacing

\bibitem{kalnay1996ncep}
E.~Kalnay, M.~Kanamitsu, R.~Kistler, W.~Collins, D.~Deaven, L.~Gandin,
  M.~Iredell, S.~Saha, G.~White, J.~Woollen \emph{et~al.}, ``The ncep/ncar
  40-year reanalysis project,'' \emph{Bulletin of the American meteorological
  Society}, vol.~77, no.~3, pp. 437--471, 1996.

\bibitem{koehn2005europarl}
P.~Koehn, ``Europarl: A parallel corpus for statistical machine translation,''
  in \emph{MT summit}, vol.~5, 2005, pp. 79--86.

\bibitem{mikolov2013efficient}
T.~Mikolov, K.~Chen, G.~Corrado, and J.~Dean, ``Efficient estimation of word
  representations in vector space,'' \emph{Proceedings of Workshop at ICLR},
  2013.

\bibitem{vezzani2010visor}
R.~Vezzani and R.~Cucchiara, ``Video surveillance online repository (visor): an
  integrated framework,'' \emph{Multimedia Tools and Applications}, vol.~50,
  no.~2, pp. 359--380, 2010, video used in experiments:
  \url{http://www.openvisor.org/video_details.asp?idvideo=339}.

\bibitem{jia2014caffe}
Y.~Jia, E.~Shelhamer, J.~Donahue, S.~Karayev, J.~Long, R.~Girshick,
  S.~Guadarrama, and T.~Darrell, ``Caffe: Convolutional architecture for fast
  feature embedding,'' \emph{arXiv preprint arXiv:1408.5093}, 2014, network
  architecture and parameters:
  \url{https://github.com/BVLC/caffe/tree/master/models/bvlc_reference_caffenet}.

\end{thebibliography}

\vspace*{-.5\baselineskip}
\begin{IEEEbiography}[{\includegraphics[width=1in,height=1.25in,clip,keepaspectratio]{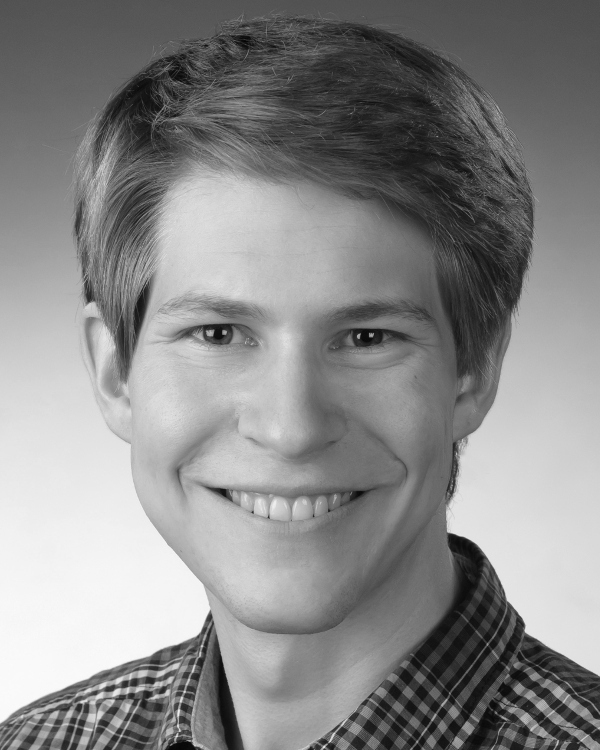}}]{Bj{\"o}rn Barz}
    received the B.Sc. and M.Sc. degrees in computer science with honours from Friedrich Schiller University Jena, Germany, in 2014 and 2016, respectively.
    He is currently working towards the PhD degree at the Computer Vision Group of Joachim Denzler at the University of Jena.
    His research interests are in the field of machine learning, visual object detection, content-based image retrieval, and natural language processing.
\end{IEEEbiography}

\begin{IEEEbiography}[{\includegraphics[width=1in,height=1.25in,clip,keepaspectratio]{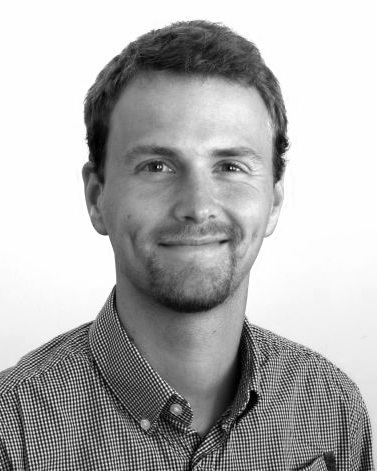}}]{Erik Rodner}
    earned the Diploma degree in Computer Science with honours in 2007 from the Friedrich Schiller University Jena, Germany.
    He received his PhD in 2011 with summa cum laude for his work on learning with few examples, which was done under supervision of Joachim Denzler at the computer vision group of the University of Jena.
    From 2012 to 2013, Erik joined UC Berkeley as a postdoctoral researcher.
    He was senior researcher and lecturer in the computer vision group at the University of Jena from 2013 to 2016 and is now researcher at Carl Zeiss AG.
    His research interests include domain adaptation, deep learning, visual object discovery, active and continuous learning, and scene understanding.
\end{IEEEbiography}

\begin{IEEEbiography}[{\includegraphics[width=1in,height=1.25in,clip,keepaspectratio]{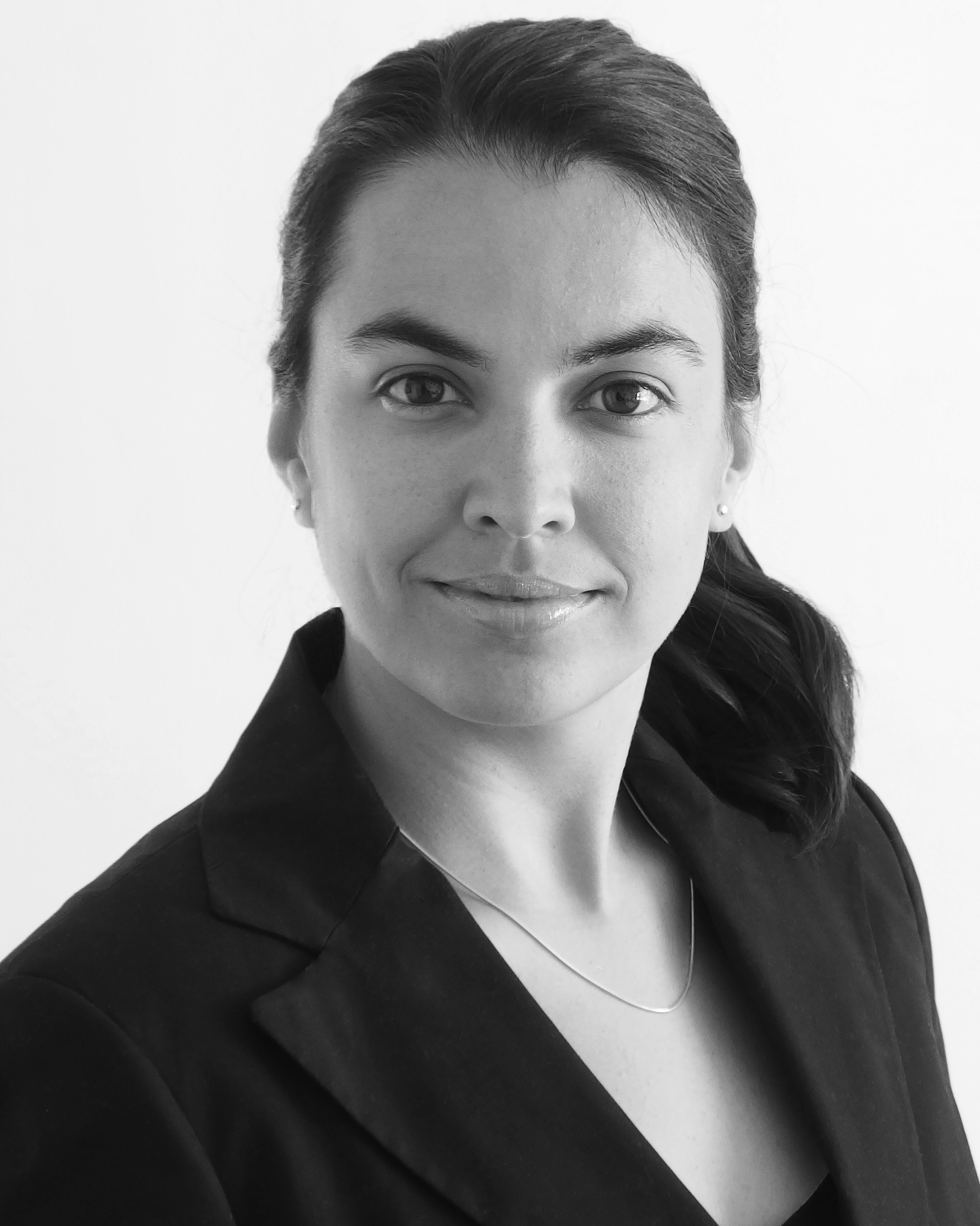}}]{Yanira Guanche Garcia}
    earned the M.Sc. in Coastal and Ports Engineering in 2010 and received her PhD in 2013 from the Universidad de Cantabria, Spain.
    From 2014 to 2015, Yanira joined IFREMER and BRGM in France as a postdoctoral researcher.
    Since 2015, she is a postdoctoral researcher at the computer vision group of Joachim Denzler at the Friedrich Schiller University, Jena, and research coordinator of the Michael Stifel Center for Data-Driven and Simulation Science, Jena.
\end{IEEEbiography}

\begin{IEEEbiography}[{\includegraphics[width=1in,height=1.25in,clip,keepaspectratio]{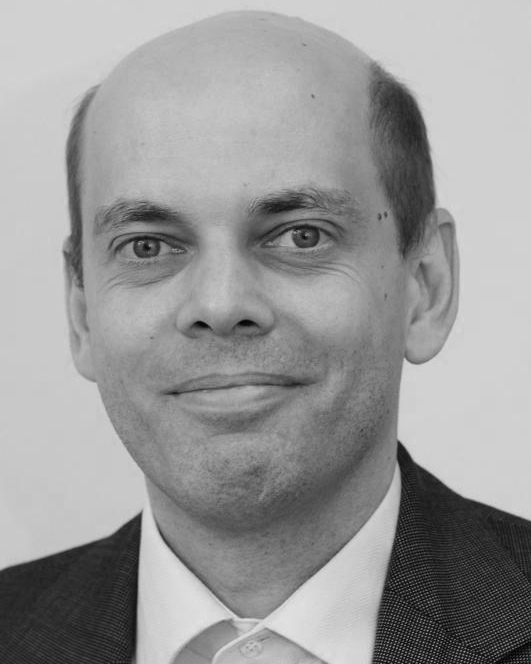}}]{Joachim Denzler}
    earned the degrees ``Diplom-Informatiker'', ``Dr.-Ing.''  and ``Habilitation'' from the University of Erlangen, Germany, in years 1992, 1997, and 2003, respectively.
    Currently, he holds a position as full professor for computer science and is head of the
    Computer Vision Group at the Friedrich Schiller University Jena, Germany.
    He is also Director of the Michael Stifel Center for Data-Driven and Simulation Science, Jena.
    His research interests comprise the automatic analysis, fusion, and understanding of sensor data, especially development of methods for visual recognition tasks and dynamic scene analysis.
    He contributed in the area of active vision, 3D reconstruction, as well as object recognition and tracking.
    He is author and co-author of over 300 journal and conference papers as well as technical articles.
    He is a member of IEEE, IEEE computer society, DAGM, and GI.
\end{IEEEbiography}


\newtheorem{theorem}{Theorem}

\newcommand{\fpar}[1]{\noindent\fbox{\parbox{\textwidth}{\scriptsize #1}}}

\newcommand{\hrrule}{\textcolor{red}{\vspace{0.4em}{\hrule height 1pt}\vspace{0.4em}}}
\newcommand{\hbrule}{\textcolor{blue}{\vspace{0.4em}{\hrule height 1pt}\vspace{0.4em}}}
\newcommand{\hbrrule}{\vspace{0.4em}\textcolor{blue}{\hrule height 1pt}\textcolor{red}{\hrule height 1pt}\vspace{0.4em}}

\definecolor{grey}{gray}{0.5}

\onecolumn

\begin{appendices}
\crefalias{section}{appsec}

\section{Generalized extraction of the sum over certain range from a higher-order tensor of cumulative sums}
\label{app:cumsum-extraction}

\vspace{1cm}

\begin{theorem}
    \label{theorem:multi-cumsum}
    
    Let $\mathfrak{X}, C \in \mathbb{R}^{N_1 \times N_2 \times \cdots \times N_M}$ with $\mathfrak{X}$ being an $M^\text{th}$-order data tensor and $C$ being a tensor of cumulative sums given by
    \begin{equation}
    C(k_1, k_2, \dots, k_M) \coloneqq
    \sum_{i_1 = 1}^{k_1} \sum_{i_2 = 1}^{k_2} \cdots \sum_{i_M = 1}^{k_M} \mathfrak{X}(i_1, i_2, \dots, i_M) .
    \end{equation}
    
    The sum over all elements of $\mathfrak{X}$ in the range $\left( j_0^{(1)}, j_1^{(1)} \right] \times \left( j_0^{(2)}, j_1^{(2)} \right] \times \cdots \times \left( j_0^{(M)}, j_1^{(M)} \right]$ can then be reconstructed from $C$ with $2^M - 1$ additions/subtractions according to
    \begin{equation}
    \begin{split}
    & \sum_{i_1 = j_0^{(1)} + 1}^{j_1^{(1)}}
    \sum_{i_2 = j_0^{(2)} + 1}^{j_1^{(2)}} \cdots
    \sum_{i_M = j_0^{(M)} + 1}^{j_1^{(M)}} \mathfrak{X}(i_1, i_2, \dots, i_M) \\
    = & \sum_{(i_1, i_2, \dots, i_M) \in \{0,1\}^M} (-1)^{M - \left( \sum_{m=1}^{M} i_m \right)}
    \cdot C \left( j_{i_1}^{(1)}, j_{i_2}^{(2)}, \dots, j_{i_M}^{(M)} \right) .
    \end{split}
    \label{eq:multi-cumsum-recons}
    \end{equation}
\end{theorem}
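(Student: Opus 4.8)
The plan is to prove this by induction on the order $M$ of the tensor, since the Inclusion-Exclusion structure on the right-hand side is built up one axis at a time. The base case $M=1$ is simply the telescoping identity $\sum_{i=j_0+1}^{j_1} \mathfrak{X}(i) = C(j_1) - C(j_0)$, which matches \eqref{eq:multi-cumsum-recons} since the sum over $(i_1) \in \{0,1\}$ contributes $(-1)^{1-1} C(j_1^{(1)}) + (-1)^{1-0} C(j_0^{(1)})$, i.e.\ $C(j_1) - C(j_0)$ with the correct $2^1 - 1 = 1$ subtraction.

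For the inductive step, I would isolate the last axis. First I would note that for any fixed values of the first $M-1$ coordinates, the partial cumulative sum along axis $M$ satisfies the one-dimensional telescoping relation, so the full sum over the range on axis $M$ can be written as a difference of two $(M-1)$-dimensional cumulative-sum tensors evaluated at $j_1^{(M)}$ and $j_0^{(M)}$. Concretely, I would define the reduced tensor $C'(k_1,\dots,k_{M-1}) \coloneqq C(k_1,\dots,k_{M-1}, j_1^{(M)}) - C(k_1,\dots,k_{M-1}, j_0^{(M)})$ and verify that $C'$ is itself a tensor of cumulative sums of the $(M-1)$-order data tensor obtained by summing $\mathfrak{X}$ over the prescribed range on axis $M$. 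Then the induction hypothesis applies to $C'$, giving the sum over the remaining $M-1$ axes as an alternating sum of $2^{M-1}$ terms of $C'$. Substituting the definition of $C'$ back in doubles the number of terms to $2^M$, and I would check that the sign bookkeeping works out: each term $C(j_{i_1}^{(1)},\dots,j_{i_{M-1}}^{(M-1)}, j_1^{(M)})$ inherits the sign from the hypothesis (corresponding to $i_M = 1$), while the $j_0^{(M)}$ term picks up an extra factor of $-1$ (corresponding to $i_M = 0$), which is exactly the parity flip encoded by $(-1)^{M - \sum_m i_m}$.

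The main obstacle, though largely mechanical, is the sign-accounting in the inductive step: I must confirm that the exponent $M - \sum_{m=1}^{M} i_m$ for the $M$-axis formula specializes correctly to the $(M-1)$-axis exponent when $i_M = 1$ and acquires the right additional sign when $i_M = 0$. This amounts to observing that passing from the $(M-1)$-dimensional index set to the $M$-dimensional one either increments $\sum i_m$ by $1$ (when $i_M=1$, leaving the parity $M - \sum i_m$ unchanged relative to $(M-1) - \sum_{m<M} i_m$) or leaves $\sum i_m$ fixed (when $i_M=0$, adding one to the exponent and hence flipping the sign). I would also remark that the claim of exactly $2^M - 1$ additions/subtractions follows immediately from the $2^M$ summands, since one term carries a coefficient of $+1$ and serves as the initial value while the other $2^M - 1$ are combined into it. No step here is genuinely deep; the value of the statement lies in the closed form and the explicit sign rule rather than in any subtlety of the argument.
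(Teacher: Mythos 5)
Your proof is correct and takes essentially the same route as the paper's: induction on the order $M$, with the one-dimensional telescoping identity as the base case and identical sign bookkeeping for the exponent $M - \sum_{m} i_m$. The only (cosmetic) difference is the order of peeling — you collapse the last axis first, via the reduced cumulative tensor $C'$ of the range-summed data tensor, and then apply the induction hypothesis once to the remaining $M-1$ axes, whereas the paper applies the induction hypothesis to the first $M-1$ axes inside the sum over the last axis and then telescopes that axis with the base case; if anything, your formulation is slightly cleaner, since it avoids the paper's intermediate expression in which $C(\cdot,\dots,\cdot,i_M)$ with a live last index must be read as a slice-wise cumulative sum.
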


\begin{proof}
    For the basic case of $M = 1$ it can easily be seen that
    \[
    \sum_{i=j_0+1}^{j_1} \mathfrak{X}(i)
    = \sum_{i=1}^{j_1} \mathfrak{X}(i) - \sum_{i=1}^{j_0} \mathfrak{X}(i)
    = C(j_1) - C(j_0)
    = \sum_{i \in \{0,1\}} (-1)^{1-i} \cdot C(j_i) .
    \]
    
    Now assume that theorem \ref{theorem:multi-cumsum} holds for $1 \le M' < M$. Applying it for $M-1$ gives
    \[
    \begin{split}
    & \sum_{i_1 = j_0^{(1)} + 1}^{j_1^{(1)}}
    \sum_{i_2 = j_0^{(2)} + 1}^{j_1^{(2)}} \cdots
    \sum_{i_M = j_0^{(M)} + 1}^{j_1^{(M)}} \mathfrak{X}(i_1, i_2, \dots, i_M) \\
    = & \sum_{i_M = j_0^{(M)}+1}^{j_1^{(M)}} \left(
    \sum_{(i_1, i_2, \dots, i_{M-1}) \in \{0,1\}^{M-1}}
    (-1)^{M - 1 - \left( \sum_{m=1}^{M-1} i_m \right)}
    \cdot C \left(
    j_{i_1}^{(1)}, j_{i_2}^{(2)}, \dots, j_{i_{M-1}}^{(M-1)}, i_M
    \right)
    \right) \\
    = & \sum_{(i_1, i_2, \dots, i_{M-1}) \in \{0,1\}^{M-1}} \left(
    (-1)^{M - 1 - \left( \sum_{m=1}^{M-1} i_m \right)}
    \cdot \sum_{i_M = j_0^{(M)}+1}^{j_1^{(M)}} C \left(
    j_{i_1}^{(1)}, j_{i_2}^{(2)}, \dots, j_{i_{M-1}}^{(M-1)}, i_M
    \right)
    \right) .
    \end{split}
    \]
    
    Since the first $M-1$ indices of $C \left( j_{i_1}^{(1)}, j_{i_2}^{(2)}, \dots, j_{i_{M-1}}^{(M-1)}, i_M \right)$ are fixed in the scope of the inner sum and only the last index varies, the basic case for $M=1$ can be applied to that inner sum expression, transforming the right-hand side of the equation to
    \[
    \begin{split}
    & \sum_{(i_1, \dots, i_{M-1}) \in \{0,1\}^{M-1}} \left(
    (-1)^{M - 1 - \left( \sum_{m=1}^{M-1} i_m \right)}
    \cdot \sum_{i_M \in \{0,1\}} (-1)^{1-i_M} \cdot C \left(
    j_{i_1}^{(1)}, \dots, j_{i_M}^{(M)}
    \right)
    \right) \\
    = & \sum_{(i_1, i_2, \dots, i_M) \in \{0,1\}^M} (-1)^{M - \left( \sum_{m=1}^{M} i_m \right)}
    \cdot C \left( j_{i_1}^{(1)}, j_{i_2}^{(2)}, \dots, j_{i_M}^{(M)} \right) .
    \end{split}
    \]
\end{proof}

\clearpage

\section{North Sea Storm Detections}
\label{app:coastdat-detections}

\iftrue
Each heatmap shows the state of the three variables at the middle of the top 5 detected time frames. The static red box marks the spatial subset of the data that has been used for this experiment described in \cref{sec:exp-storms}.
Heatmaps are best viewed in color.

Animated heatmaps for more detections can be found on our web page: \url{http://www.inf-cv.uni-jena.de/libmaxdiv_applications.html}. \\

\noindent\includegraphics[width=\linewidth]{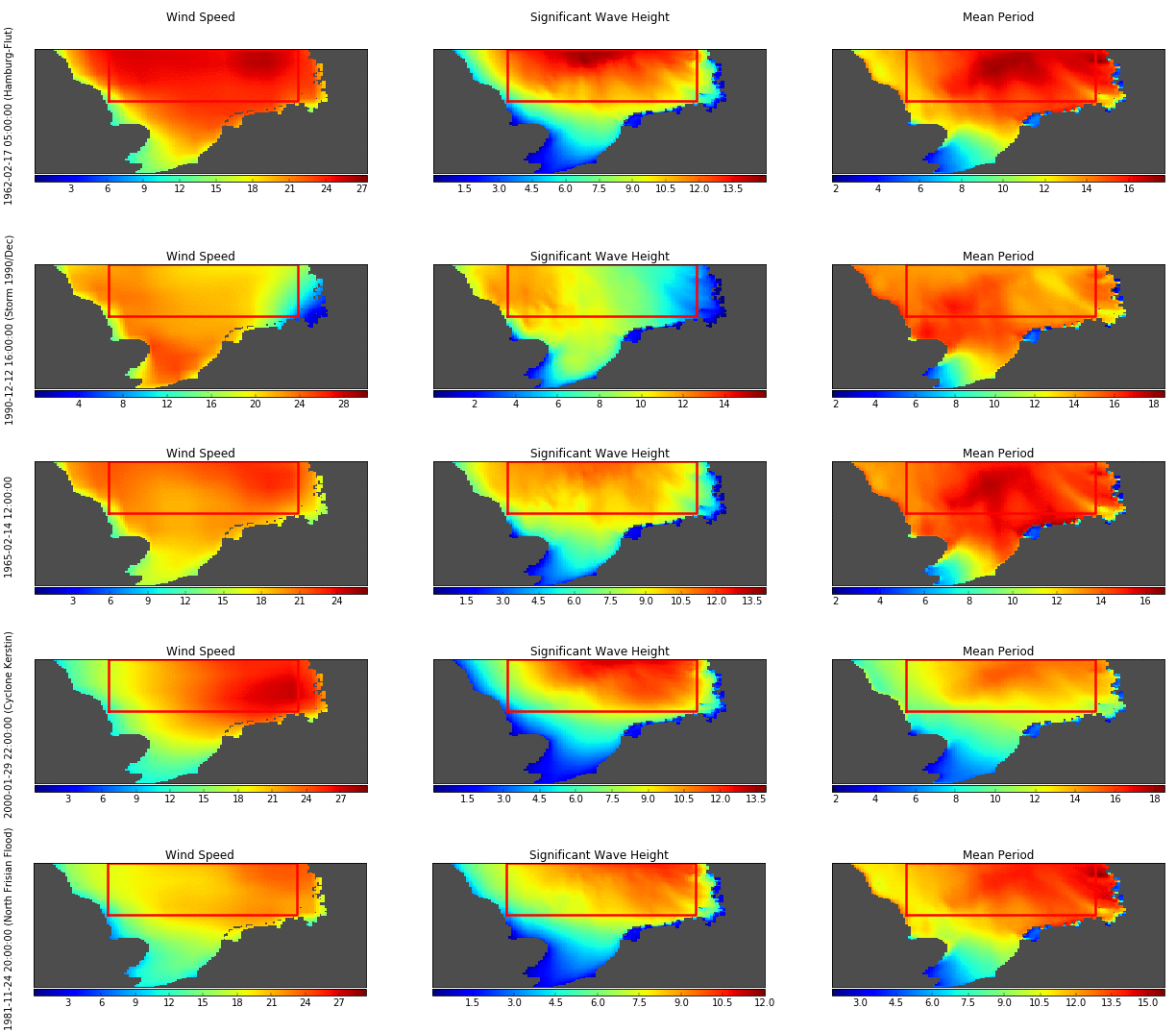} \\
\clearpage

{
    \noindent
    \fontsize{10pt}{11pt}\selectfont
    \begin{tabular}{c|lll}
        \toprule 
        \# & Timeframe & Score & Historical Storm \\ 
        \midrule 
        1 & 1962-02-16 05:00 -- 1962-02-18 06:00 & 831.635 & Hamburg-Flut (Feb 16-17) \\
        2 & 1990-12-11 22:00 -- 1990-12-13 10:00 & 824.840 & Storm 1990/Dec (Dec 12) \\
        3 & 1965-02-13 01:00 -- 1965-02-15 23:00 & 797.781 &  \\
        4 & 2000-01-29 12:00 -- 2000-01-31 02:00 & 796.528 & Cyclone Kerstin (Jan 29-31) \\
        5 & 1981-11-23 19:00 -- 1981-11-25 22:00 & 745.951 & North Frisian Flood (Nov 24) \\
        6 & 1989-02-13 11:00 -- 1989-02-16 10:00 & 714.684 & Storm 1989/Feb (Feb 14) \\
        7 & 1988-02-28 04:00 -- 1988-03-02 03:00 & 710.495 &  \\
        8 & 1973-12-12 16:00 -- 1973-12-15 15:00 & 673.886 & Storm 1973/Dec (2) (Dec 13-15) \\
        9 & 1998-12-26 00:00 -- 1998-12-28 05:00 & 658.592 & Cyclone Stephen (Dec 26-27) \\
        10 & 1984-01-02 15:00 -- 1984-01-05 14:00 & 658.306 &  \\
        11 & 1977-11-13 00:00 -- 1977-11-15 23:00 & 592.562 &  \\
        12 & 1980-02-26 00:00 -- 1980-02-28 23:00 & 573.913 &  \\
        13 & 1999-02-04 15:00 -- 1999-02-07 14:00 & 572.603 & Storm 1999/Feb (Feb 05) \\
        14 & 2006-10-31 07:00 -- 2006-11-01 21:00 & 560.806 & Cyclone Britta (Oct 31 - Nov 01) \\
        15 & 1995-01-09 21:00 -- 1995-01-12 20:00 & 554.777 &  \\
        16 & 1983-01-17 20:00 -- 1983-01-20 14:00 & 545.856 & Storm 1983/Jan (Jan 17-20) \\
        17 & 1991-10-17 00:00 -- 1991-10-19 23:00 & 537.879 &  \\
        18 & 1996-11-05 07:00 -- 1996-11-07 05:00 & 519.837 & Storm 1996/Nov (Nov 05-07) \\
        19 & 1976-01-20 10:00 -- 1976-01-23 02:00 & 508.532 & Storm 1976/Jan (2) (Jan 21) \\
        20 & 1993-01-24 02:00 -- 1993-01-27 01:00 & 506.250 & Storm 1993/Jan (2) (Jan 22-25) \\
        21 & 1973-11-19 05:00 -- 1973-11-20 16:00 & 494.595 & Storm 1973/Nov (3) (Nov 19-20) \\
        22 & 1992-12-23 15:00 -- 1992-12-26 14:00 & 491.438 &  \\
        23 & 1977-12-29 22:00 -- 1977-12-31 17:00 & 489.287 &  \\
        24 & 2004-02-07 07:00 -- 2004-02-09 23:00 & 485.346 & Cyclone Ursula (Feb 07-08) \\
        25 & 1984-01-12 01:00 -- 1984-01-15 00:00 & 485.231 & Storm 1984/Jan (Jan 14) \\
        26 & 1991-01-04 19:00 -- 1991-01-07 18:00 & 471.642 & Storm Undine (Jan 02-09) \\
        27 & 1973-11-11 03:00 -- 1973-11-14 02:00 & 471.398 & Storm 1973/Nov (1) (Nov 13-14) \\
        28 & 2004-11-17 09:00 -- 2004-11-20 08:00 & 460.155 &  \\
        29 & 1973-12-04 09:00 -- 1973-12-07 08:00 & 445.639 & Storm 1973/Dec (1) (Dec 06-07) \\
        30 & 1961-03-26 13:00 -- 1961-03-29 01:00 & 423.340 &  \\
        31 & 1994-01-28 06:00 -- 1994-01-31 05:00 & 420.520 & Cyclone Lore (Jan 27-28) \\
        32 & 1980-04-19 03:00 -- 1980-04-21 18:00 & 420.186 &  \\
        33 & 1999-12-23 18:00 -- 1999-12-26 05:00 & 418.482 & Cyclone Lothar (Dec 25-26) \\
        34 & 1988-10-02 07:00 -- 1988-10-05 02:00 & 412.905 &  \\
        35 & 1970-10-19 02:00 -- 1970-10-22 01:00 & 411.187 &  \\
        36 & 2007-01-10 20:00 -- 2007-01-13 18:00 & 407.993 & Cyclone Franz (Jan 11) \\
        37 & 2007-11-07 11:00 -- 2007-11-10 10:00 & 402.426 & Cyclone Tilo (Nov 06-11) \\
        38 & 1990-09-19 07:00 -- 1990-09-22 06:00 & 397.632 &  \\
        39 & 1993-02-19 00:00 -- 1993-02-21 23:00 & 387.598 & Storm 1993/Feb (Feb 20-21) \\
        40 & 1998-10-24 11:00 -- 1998-10-27 08:00 & 382.914 & Cyclone Xylia (Oct 27-28) \\
        41 & 2003-12-12 15:00 -- 2003-12-15 14:00 & 377.435 & Cyclone Fritz (Dec 13-15) \\
        42 & 1991-12-23 06:00 -- 1991-12-26 03:00 & 374.911 &  \\
        43 & 2002-02-19 17:00 -- 2002-02-22 16:00 & 374.026 & Storm 2002/Feb (1) (Feb 21-23) \\
        44 & 1997-03-08 12:00 -- 1997-03-11 11:00 & 371.758 &  \\
        45 & 1959-02-17 02:00 -- 1959-02-20 01:00 & 369.272 &  \\
        46 & 1974-12-11 06:00 -- 1974-12-14 05:00 & 365.459 &  \\
        47 & 1994-12-06 02:00 -- 1994-12-09 01:00 & 358.443 &  \\
        48 & 2001-12-20 08:00 -- 2001-12-23 07:00 & 357.280 &  \\
        49 & 1992-11-18 14:00 -- 1992-11-21 02:00 & 343.076 &  \\
        50 & 2006-12-29 22:00 -- 2007-01-01 21:00 & 340.920 & Cyclone Karla (Dec 30-31) \\
        \bottomrule 
    \end{tabular}
}

\clearpage

\section{Low Pressure Area Detections}
\label{app:slp-detections}

Each row shows the heatmap of Sea Level Pressure at the beginning, the middle, and the end of the detection. The red box marks the detected area. Details on this experiment can be found in \cref{sec:exp-slp}.

Heatmaps are best viewed in color. An animated video showing all detections can be found on our web-page: \url{http://www.inf-cv.uni-jena.de/libmaxdiv_applications.html}.

{
    \noindent
    \centering
    \def\arraystretch{2}
    \begin{tabular}{ @{}>{\centering\scriptsize}m{0.04\linewidth} @{} >{\centering}m{0.28\linewidth} @{\hspace{0.03\linewidth}} >{\centering}m{0.28\linewidth} @{\hspace{0.03\linewidth}} >{\centering\arraybackslash}m{0.28\linewidth} @{\hspace{0.03\linewidth}} }
        & \textbf{Start} & \textbf{Middle} & \textbf{End} \\
        \rotatebox{90}{1996-01-06 -- 1996-01-15} & \includegraphics[width=\linewidth]{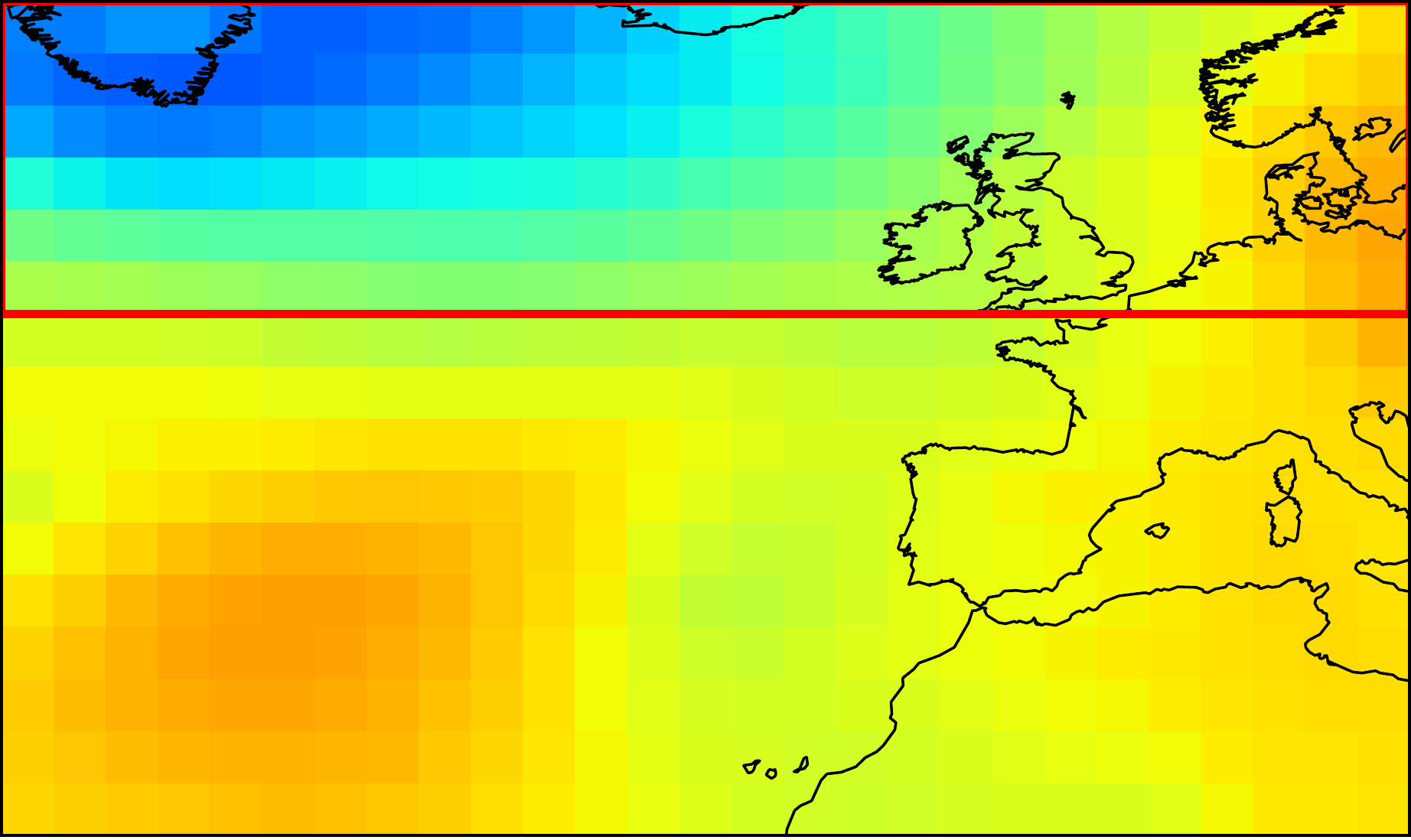} & \includegraphics[width=\linewidth]{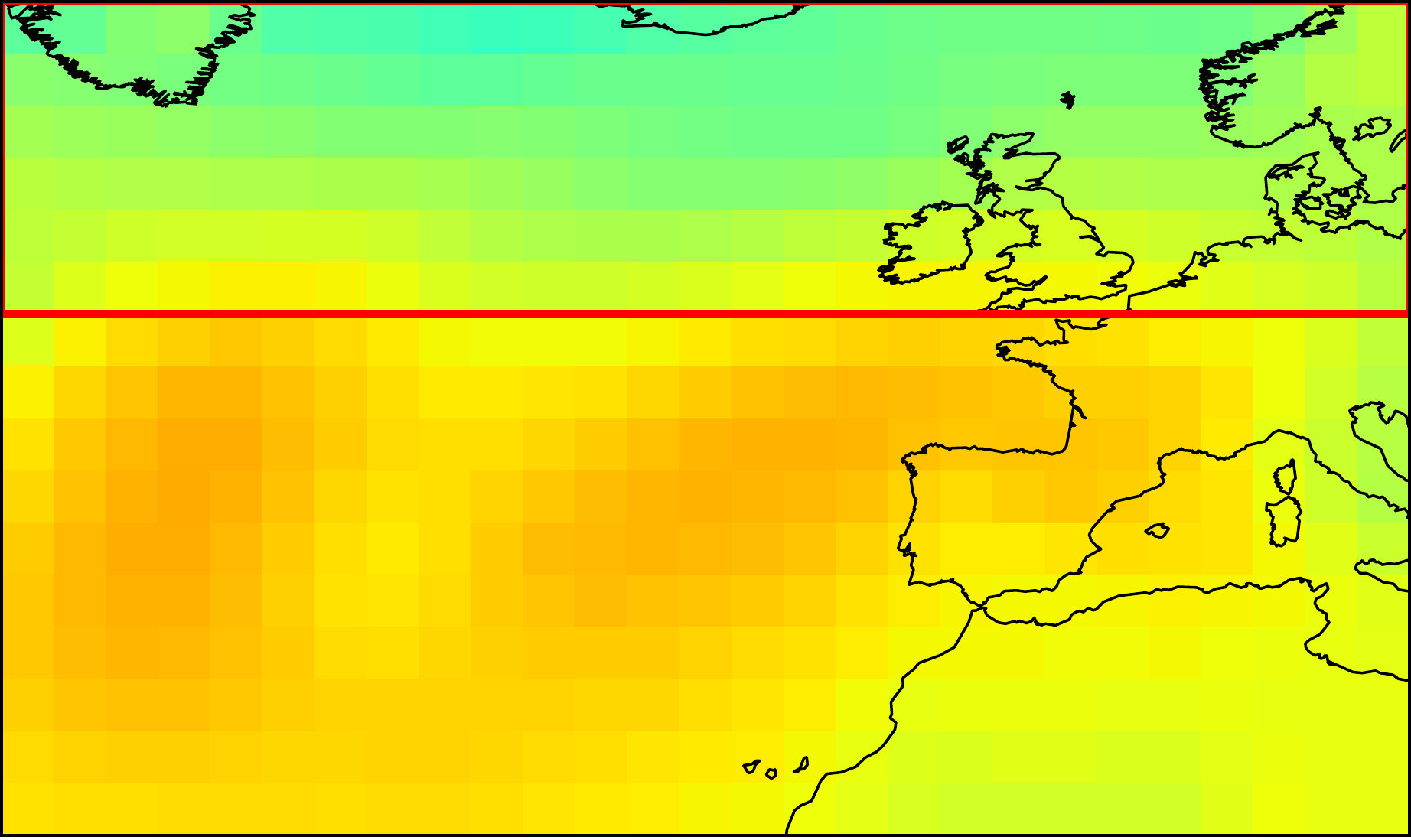} & \includegraphics[width=\linewidth]{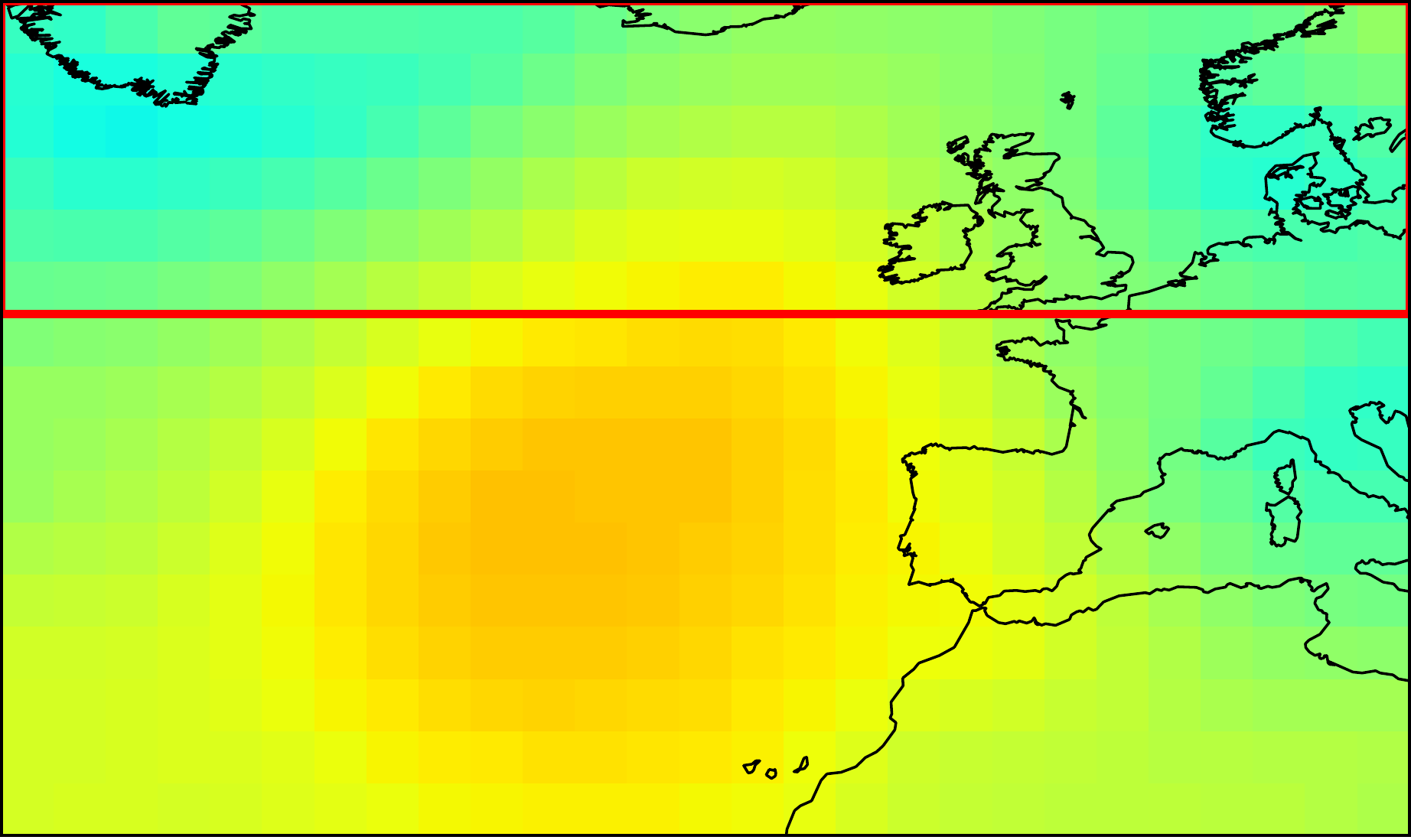} \\
        \rotatebox{90}{1990-01-28 -- 1990-02-06} & \includegraphics[width=\linewidth]{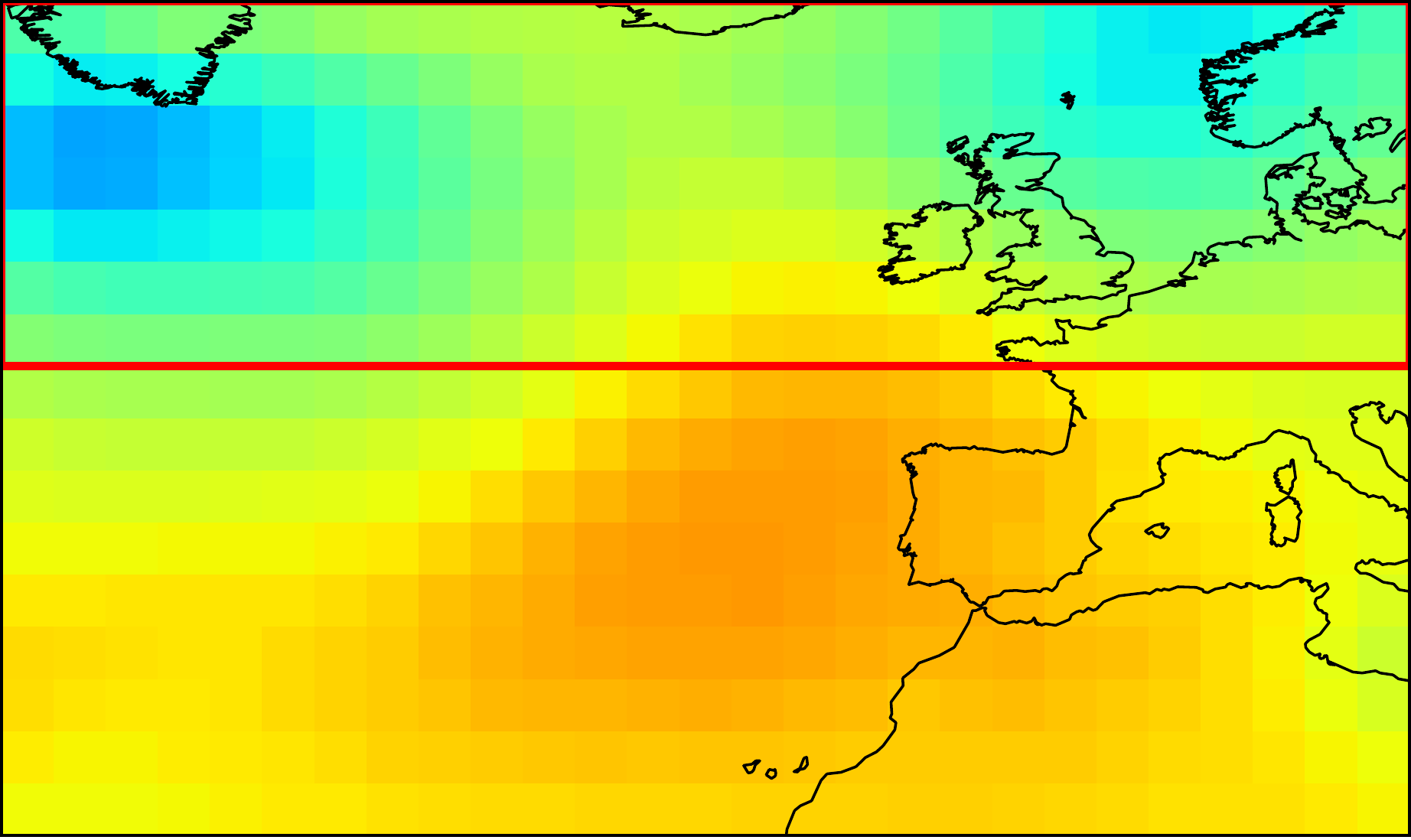} & \includegraphics[width=\linewidth]{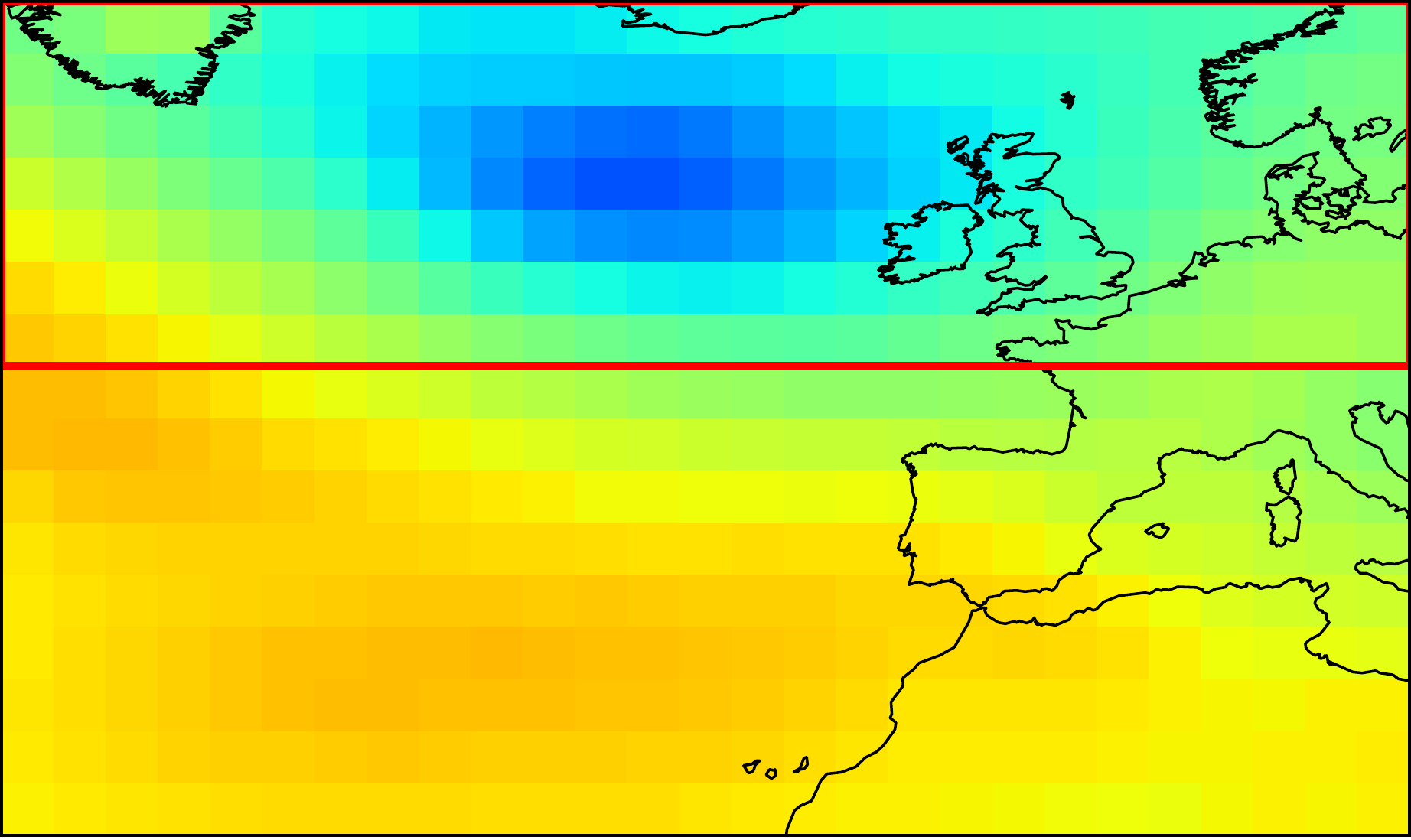} & \includegraphics[width=\linewidth]{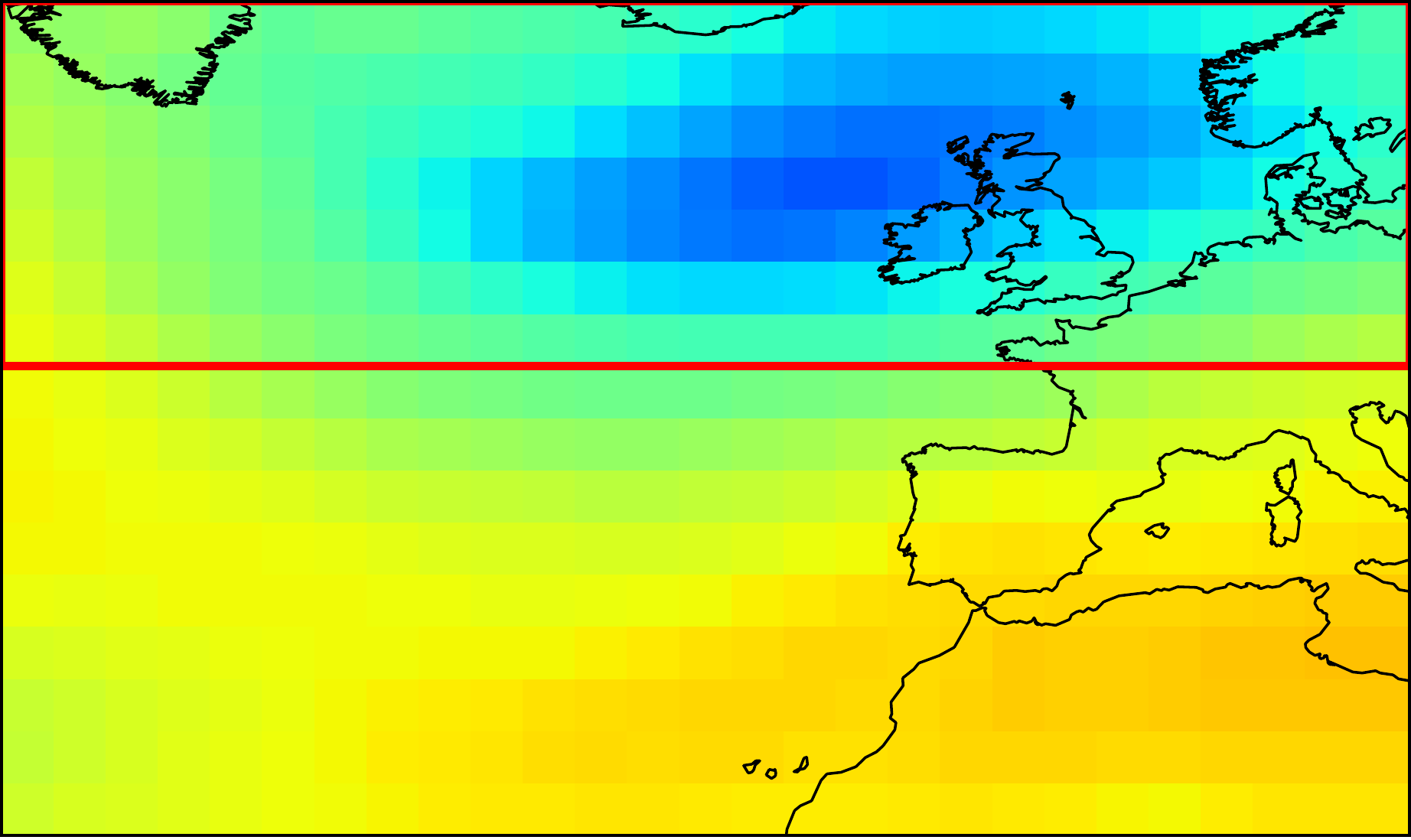} \\
        \rotatebox{90}{1989-12-22 -- 1989-12-31} & \includegraphics[width=\linewidth]{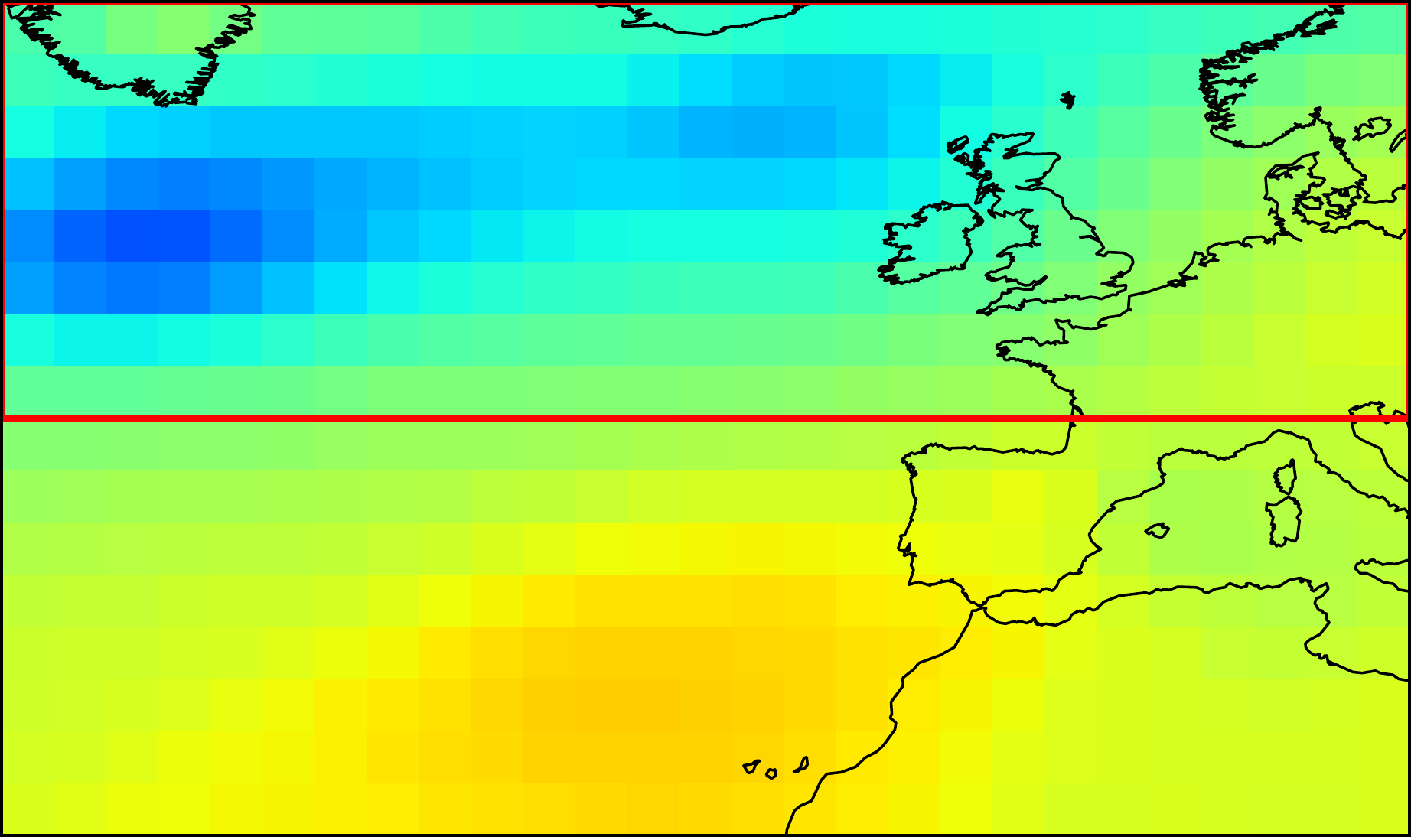} & \includegraphics[width=\linewidth]{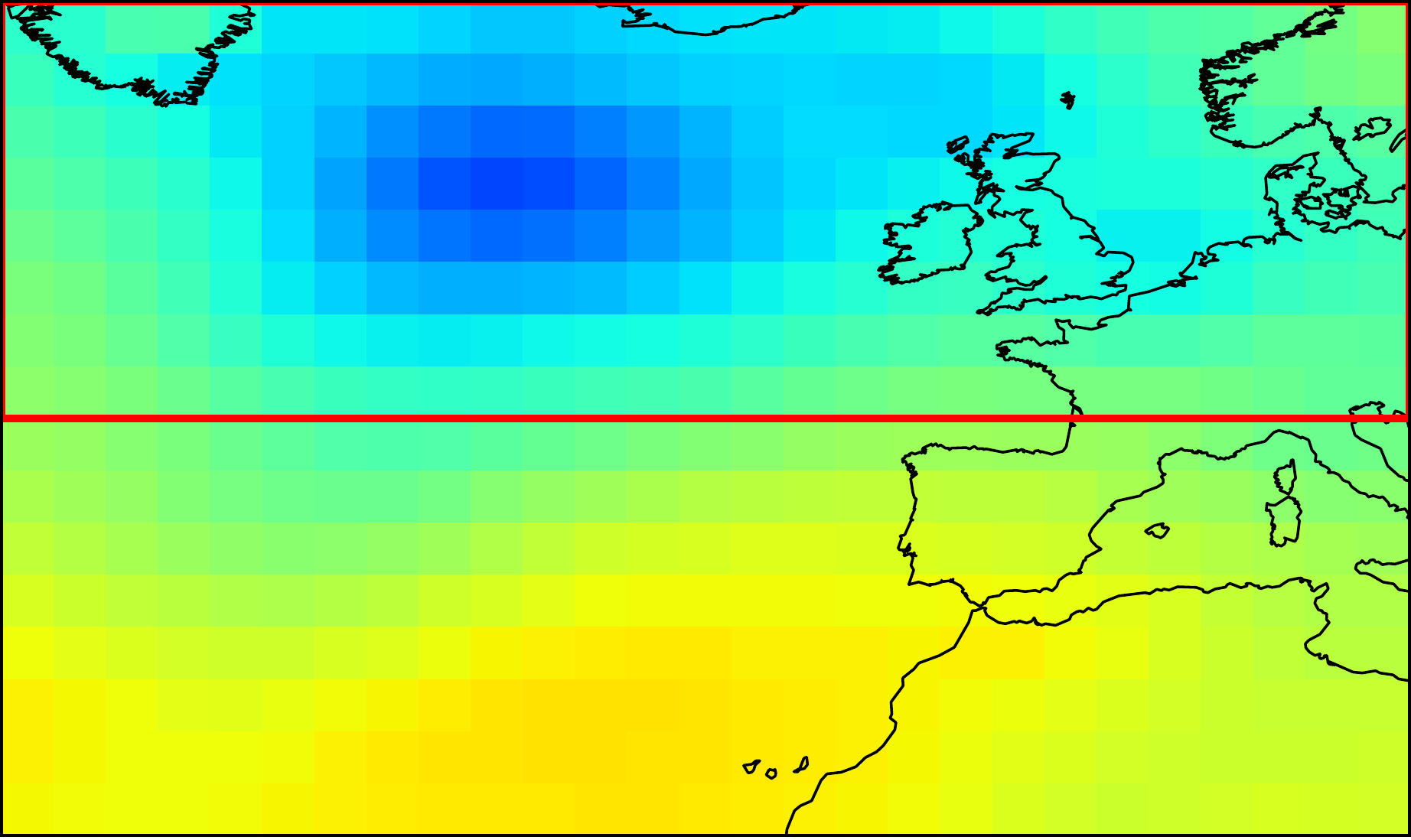} & \includegraphics[width=\linewidth]{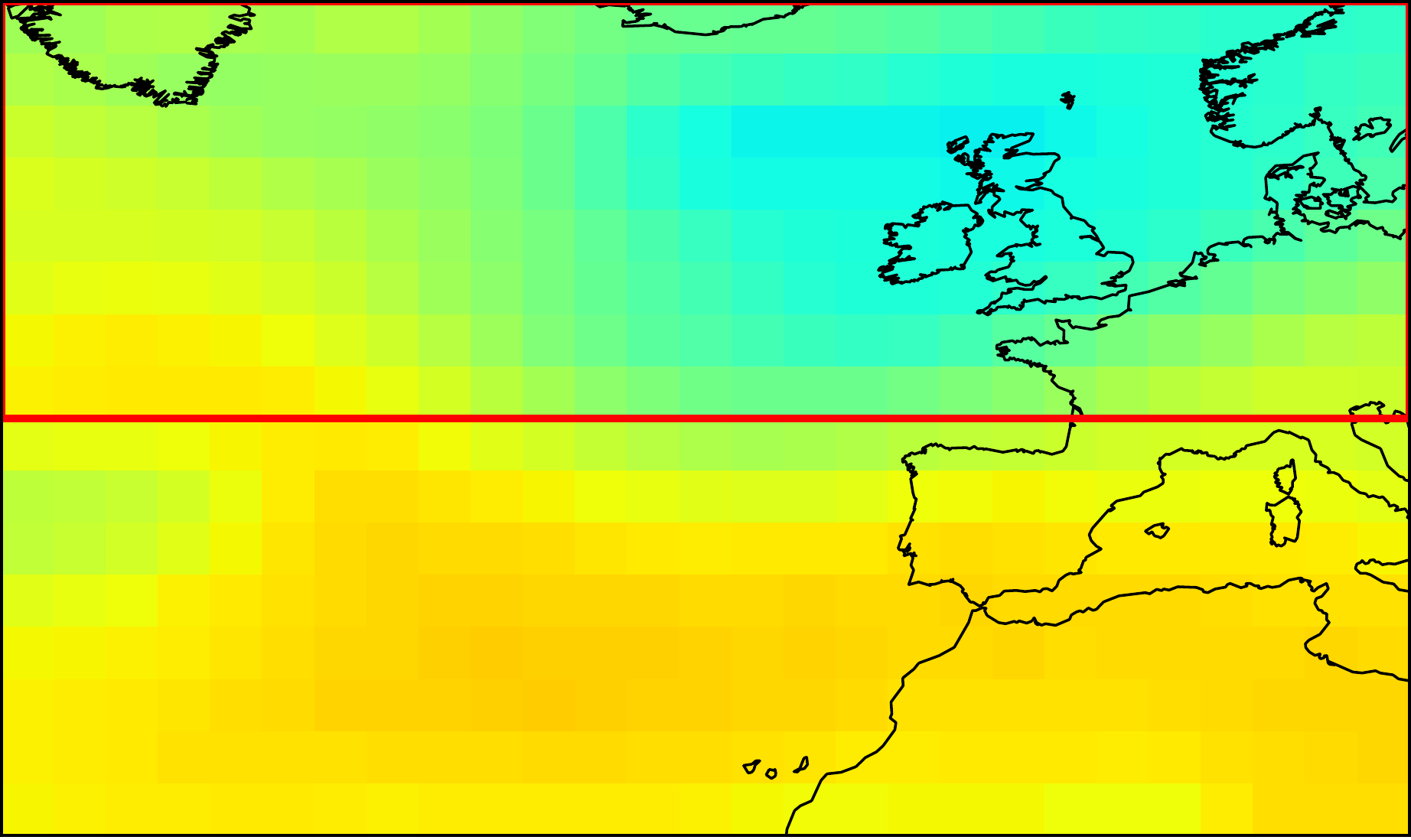} \\
        \rotatebox{90}{2009-01-18 -- 2009-01-27} & \includegraphics[width=\linewidth]{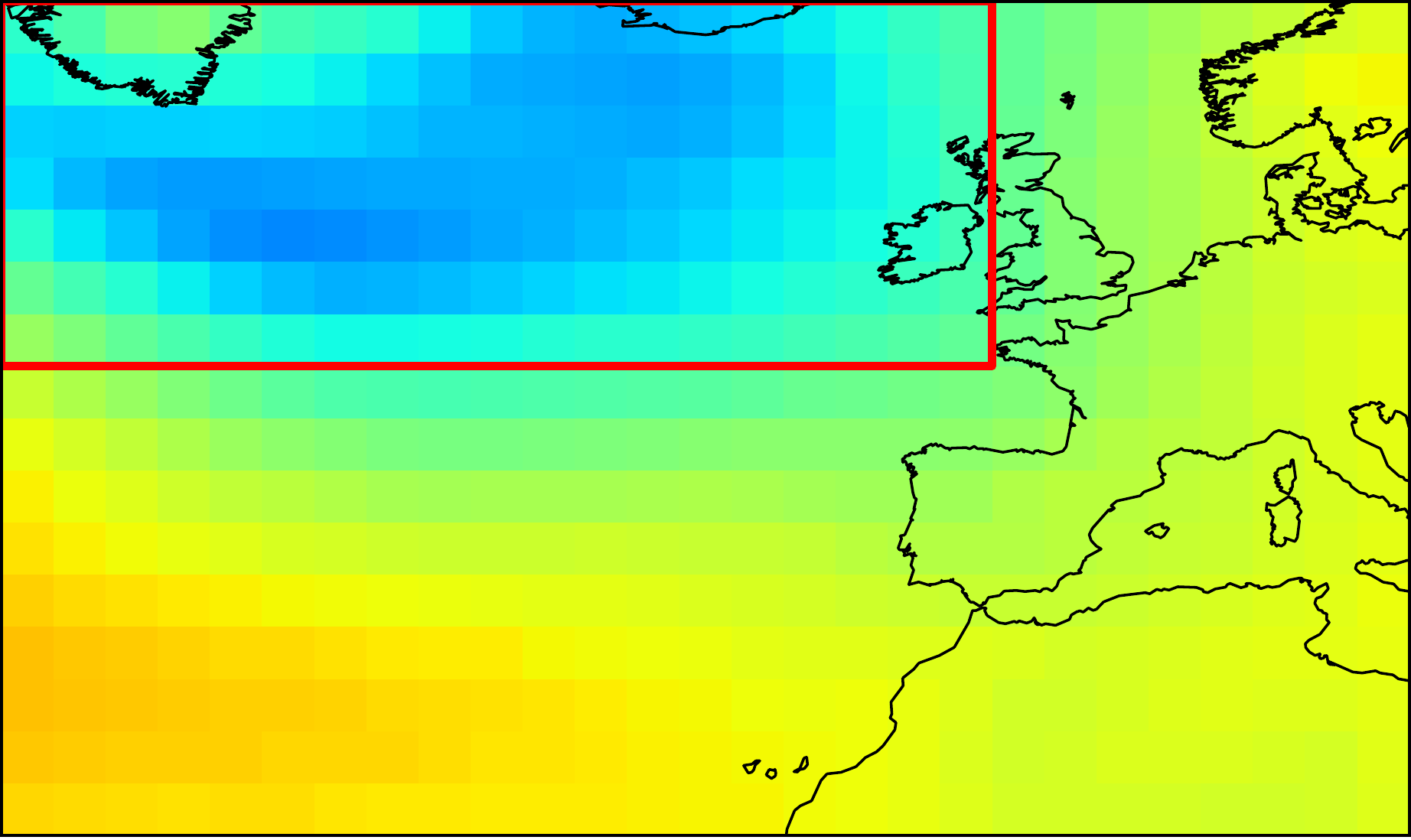} & \includegraphics[width=\linewidth]{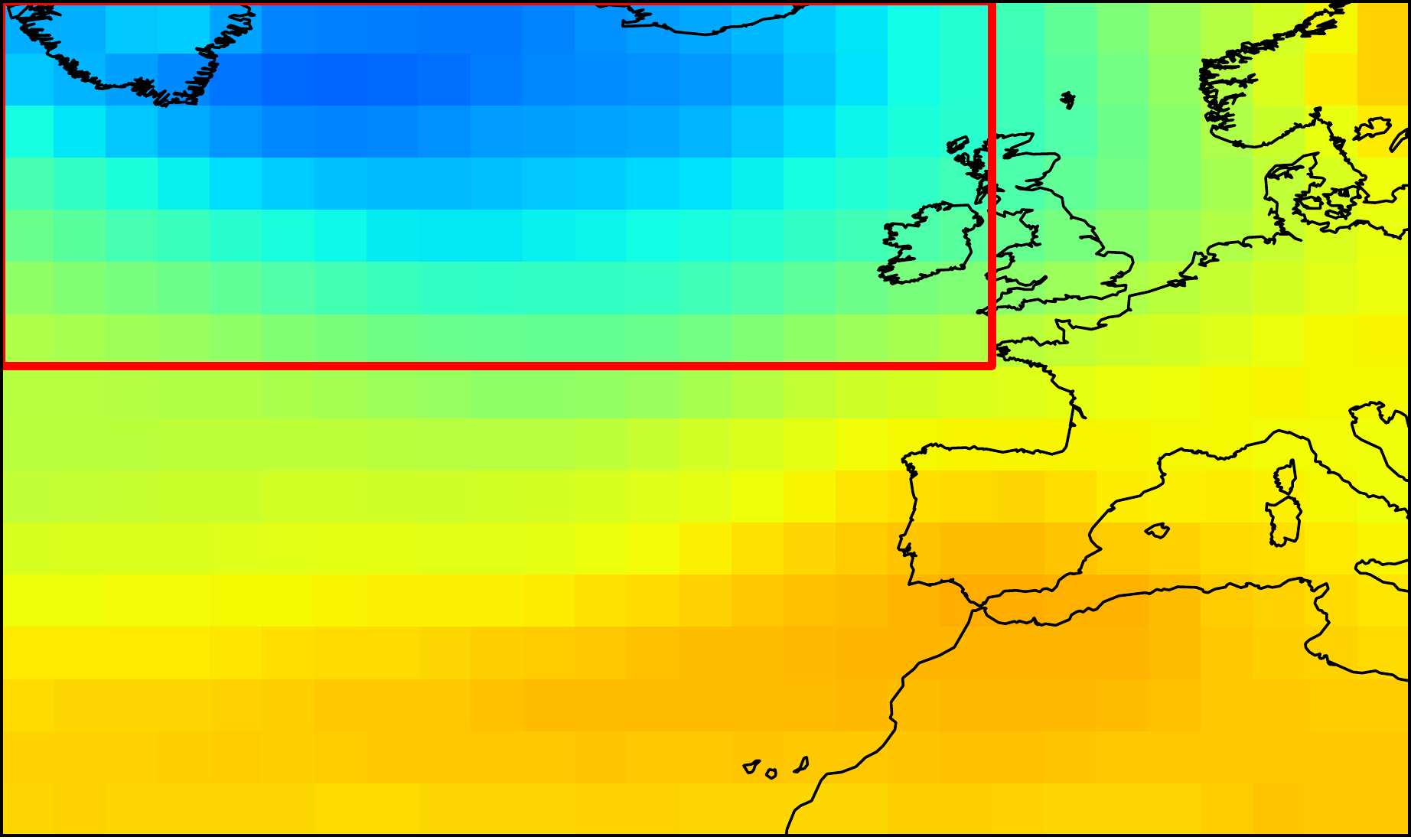} & \includegraphics[width=\linewidth]{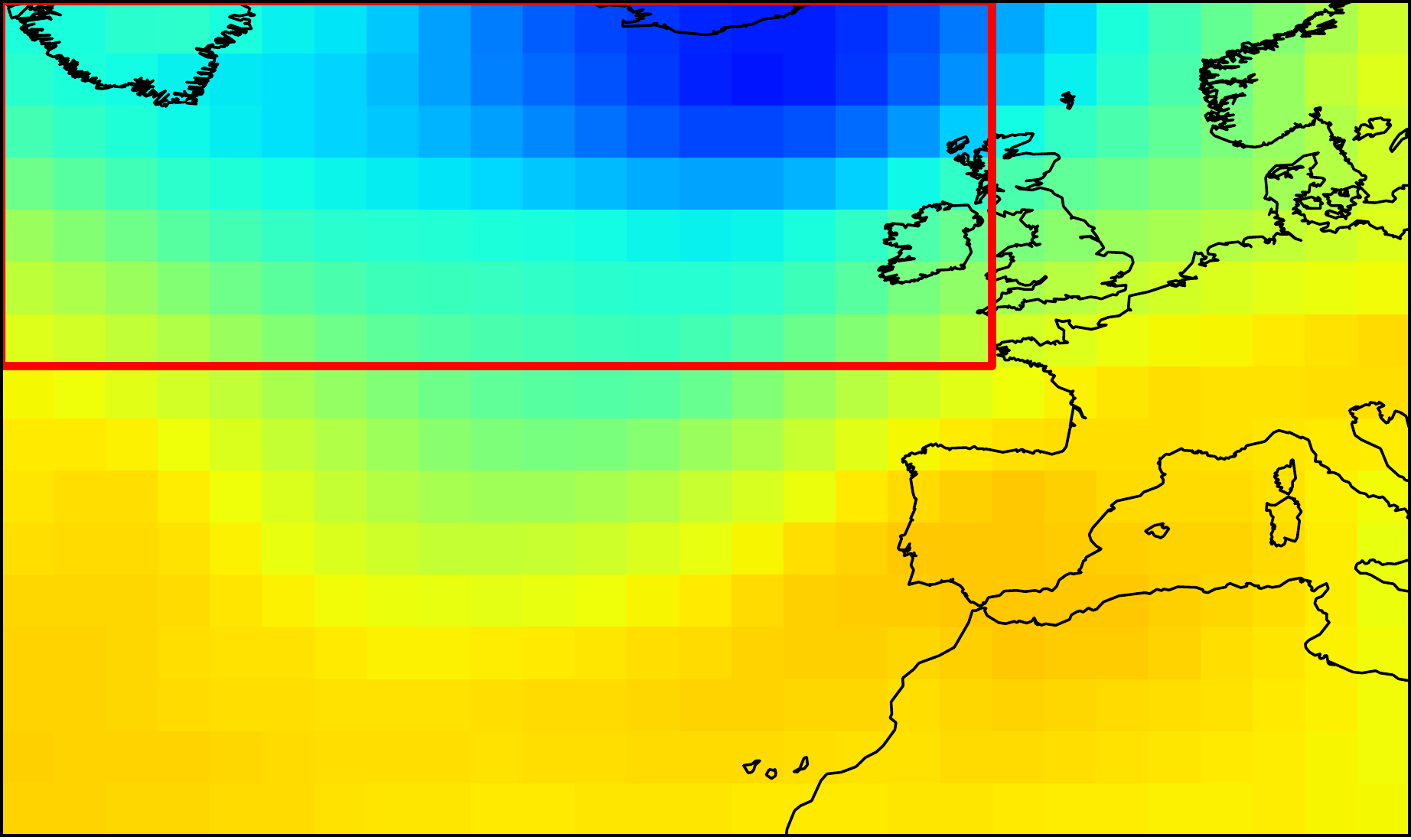} \\
    \end{tabular}\\
    \hspace{0.055\linewidth}
    \includegraphics[width=.91\linewidth]{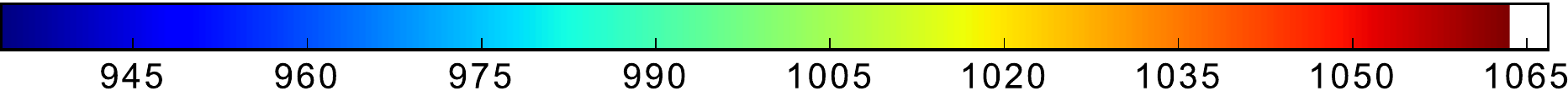}
}

\vspace{.5cm}

{
    \noindent
    \scriptsize
    \begin{tabular}{c|llll}
        \toprule 
        \# & Timeframe & Location & Score & Historical Storm \\ 
        \midrule 
        1 & 1996-01-06 -- 1996-01-15 & 40.0\textdegree\ N, -52.5\textdegree\ E -- 65.0\textdegree\ N, -2.5\textdegree\ E & 5940.691 &  \\
        2 & 1990-01-28 -- 1990-02-06 & 47.5\textdegree\ N, -52.5\textdegree\ E -- 65.0\textdegree\ N,\hphantom{-} 7.5\textdegree\ E & 5551.022 & Storm Herta (Feb 01-05) \\
        3 & 1989-12-22 -- 1989-12-31 & 45.0\textdegree\ N, -52.5\textdegree\ E -- 65.0\textdegree\ N, -2.5\textdegree\ E & 5198.513 &  \\
        4 & 2009-01-18 -- 2009-01-27 & 47.5\textdegree\ N, -52.5\textdegree\ E -- 65.0\textdegree\ N, 15.0\textdegree\ E & 4959.829 & Cyclone Joris (Jan 23) \\
        5 & 1982-12-14 -- 1982-12-23 & 50.0\textdegree\ N, -52.5\textdegree\ E -- 65.0\textdegree\ N, 15.0\textdegree\ E & 4811.575 &  \\
        6 & 1990-12-25 -- 1991-01-03 & 52.5\textdegree\ N, -52.5\textdegree\ E -- 65.0\textdegree\ N, 15.0\textdegree\ E & 4703.993 & Storm Undine (Jan 02-09) \\
        7 & 1974-01-03 -- 1974-01-12 & 47.5\textdegree\ N, -52.5\textdegree\ E -- 65.0\textdegree\ N, -5.0\textdegree\ E & 4594.737 &  \\
        8 & 1986-12-08 -- 1986-12-17 & 47.5\textdegree\ N, -52.5\textdegree\ E -- 65.0\textdegree\ N, -2.5\textdegree\ E & 4417.568 & Storm 1986/Dec (Dec 14-15) \\
        9 & 1997-12-30 -- 1998-01-08 & 50.0\textdegree\ N, -52.5\textdegree\ E -- 65.0\textdegree\ N, 10.0\textdegree\ E & 4377.532 & Cyclone Fanny (Jan 03-05) \\
        10 & 1995-01-26 -- 1995-02-04 & 47.5\textdegree\ N, -52.5\textdegree\ E -- 65.0\textdegree\ N, 15.0\textdegree\ E & 4376.735 &  \\
        11 & 2006-12-03 -- 2006-12-12 & 47.5\textdegree\ N, -52.5\textdegree\ E -- 65.0\textdegree\ N, 15.0\textdegree\ E & 4306.923 &  \\
        12 & 1997-02-18 -- 1997-02-27 & 52.5\textdegree\ N, -52.5\textdegree\ E -- 65.0\textdegree\ N, 15.0\textdegree\ E & 4249.087 &  \\
        13 & 1958-01-04 -- 1958-01-13 & 50.0\textdegree\ N, -52.5\textdegree\ E -- 65.0\textdegree\ N, 15.0\textdegree\ E & 4206.594 &  \\
        14 & 1978-12-06 -- 1978-12-15 & 45.0\textdegree\ N, -52.5\textdegree\ E -- 65.0\textdegree\ N,\hphantom{1} 2.5\textdegree\ E & 4151.843 &  \\
        15 & 1976-12-01 -- 1976-12-10 & 47.5\textdegree\ N, -52.5\textdegree\ E -- 65.0\textdegree\ N, 15.0\textdegree\ E & 4139.642 &  \\
        16 & 1971-01-18 -- 1971-01-27 & 45.0\textdegree\ N, -52.5\textdegree\ E -- 65.0\textdegree\ N, 15.0\textdegree\ E & 4030.477 &  \\
        17 & 1992-11-29 -- 1992-12-08 & 47.5\textdegree\ N, -52.5\textdegree\ E -- 65.0\textdegree\ N, 15.0\textdegree\ E & 3962.119 &  \\
        18 & 1994-01-27 -- 1994-02-05 & 50.0\textdegree\ N, -52.5\textdegree\ E -- 65.0\textdegree\ N, 15.0\textdegree\ E & 3933.832 & Cyclone Lore (Jan 27-28) \\
        19 & 2007-12-02 -- 2007-12-11 & 47.5\textdegree\ N, -52.5\textdegree\ E -- 65.0\textdegree\ N, 15.0\textdegree\ E & 3931.694 & Cyclone Fridtjof (Dec 02-03) \\
        20 & 1959-12-18 -- 1959-12-27 & 47.5\textdegree\ N, -52.5\textdegree\ E -- 65.0\textdegree\ N, 15.0\textdegree\ E & 3910.999 &  \\
        \bottomrule 
    \end{tabular} 
}

\section{Top 10 Anomalous Paragraphs in the Book Genesis}
\label{app:genesis-detections}

Each detected word sequence is shown with some \textcolor{grey}{context} colored in \textcolor{grey}{grey} before and after the detection. See \cref{sec:exp-nlp} for details on this experiment.

The text has been taken from the ``genesis'' corpus included in the \textit{Natural Language Toolkit (NLTK)} for Python and is not free of noise.

\subsection*{Detection \#1: words 3218 -- 3613 (Score: 56462.266)}

\fpar{%
    \textcolor{grey}{and called their name Adam , in the day when they were created . And Adam lived an hundred and thirty years , and begat a son in his own likeness , and after his image ; and called his name Se And the days of Adam after he had}
    begotten Seth were eight hundred yea and he begat sons and daughters : And all the days that Adam lived were nine hundred and thirty yea and he died . And Seth lived an hundred and five years , and begat Enos : And Seth lived after he begat Enos eight hundred and seven years , and begat sons and daughte And all the days of Seth were nine hundred and twelve years : and he died . And Enos lived ninety years , and begat Cainan : And Enos lived after he begat Cainan eight hundred and fifteen years , and begat sons and daughte And all the days of Enos were nine hundred and five years : and he died . And Cainan lived seventy years and begat Mahalaleel : And Cainan lived after he begat Mahalaleel eight hundred and forty years , and begat sons and daughte And all the days of Cainan were nine hundred and ten years : and he died . And Mahalaleel lived sixty and five years , and begat Jared : And Mahalaleel lived after he begat Jared eight hundred and thirty years , and begat sons and daughte And all the days of Mahalaleel were eight hundred ninety and five yea and he died . And Jared lived an hundred sixty and two years , and he begat Eno And Jared lived after he begat Enoch eight hundred years , and begat sons and daughte And all the days of Jared were nine hundred sixty and two yea and he died . And Enoch lived sixty and five years , and begat Methuselah : And Enoch walked with God after he begat Methuselah three hundred years , and begat sons and daughte And all the days of Enoch were three hundred sixty and five yea And Enoch walked with God : and he was not ; for God took him . And Methuselah lived an hundred eighty and seven years , and begat Lamech . And Methuselah lived after he begat Lamech seven hundred eighty and two years , and begat sons and daughte And all the days of Methuselah were nine hundred sixty and nine yea and he died . And Lamech lived an hundred eighty and two years , and begat a s And he called his name Noah , saying , This same
    \textcolor{grey}{shall comfort us concerning our work and toil of our hands , because of the ground which the LORD hath cursed .}
}

\subsection*{Detection \#2: words 30098 -- 30568 (Score: 41058.093)}

\fpar{%
    \textcolor{grey}{his house , and his cattle , and all his beasts , and all his substance , which he had got in the land of Canaan ; and went into the country from the face of his brother Jacob . For their riches were more than that they might dwell}
    together ; and the land wherein they were strangers could not bear them because of their cattle . Thus dwelt Esau in mount Seir : Esau is Edom . And these are the generations of Esau the father of the Edomites in mount Se These are the names of Esau ' s sons ; Eliphaz the son of Adah the wife of Esau , Reuel the son of Bashemath the wife of Esau . And the sons of Eliphaz were Teman , Omar , Zepho , and Gatam , and Kenaz . And Timna was concubine to Eliphaz Esau ' s son ; and she bare to Eliphaz Amal these were the sons of Adah Esau ' s wife . And these are the sons of Reuel ; Nahath , and Zerah , Shammah , and Mizz these were the sons of Bashemath Esau ' s wife . And these were the sons of Aholibamah , the daughter of Anah the daughter of Zibeon , Esau ' s wife and she bare to Esau Jeush , and Jaalam , and Korah . These were dukes of the sons of Esau : the sons of Eliphaz the firstborn son of Esau ; duke Teman , duke Omar , duke Zepho , duke Kenaz , Duke Korah , duke Gatam , and duke Amalek : these are the dukes that came of Eliphaz in the land of Edom ; these were the sons of Adah . And these are the sons of Reuel Esau ' s son ; duke Nahath , duke Zerah , duke Shammah , duke Mizz these are the dukes that came of Reuel in the land of Edom ; these are the sons of Bashemath Esau ' s wife . And these are the sons of Aholibamah Esau ' s wife ; duke Jeush , duke Jaalam , duke Kor these were the dukes that came of Aholibamah the daughter of Anah , Esau ' s wife . These are the sons of Esau , who is Edom , and these are their dukes . These are the sons of Seir the Horite , who inhabited the land ; Lotan , and Shobal , and Zibeon , and Anah , And Dishon , and Ezer , and Dishan : these are the dukes of the Horites , the children of Seir in the land of Edom . And the children of Lotan were Hori and Hemam ; and Lotan ' s sister was Timna . And the children of Shobal were these ; Alvan , and Manahath , and Ebal , Shepho , and Onam . And these are the children of Zibeon ; both Ajah , and Anah : this was that Anah that found the mules in the wilderness , as he fed the asses of
    \textcolor{grey}{Zibeon his father . And the children of Anah were these ; Dishon , and Aholibamah the daughter of Anah . And these are the children of Dishon ; Hemdan , and Eshban , and Ithran , and Cheran .}
}

\subsection*{Detection \#3: words 7347 -- 7684 (Score: 39679.642)}

\fpar{%
    \textcolor{grey}{may not understand one another ' s speech . So the LORD scattered them abroad from thence upon the face of all the ear and they left off to build the city . Therefore is the name of it called Babel ; because the LORD did there confound the language}
    of all the ear and from thence did the LORD scatter them abroad upon the face of all the earth . These are the generations of Shem : Shem was an hundred years old , and begat Arphaxad two years after the flo And Shem lived after he begat Arphaxad five hundred years , and begat sons and daughters . And Arphaxad lived five and thirty years , and begat Salah : And Arphaxad lived after he begat Salah four hundred and three years , and begat sons and daughters . And Salah lived thirty years , and begat Eber : And Salah lived after he begat Eber four hundred and three years , and begat sons and daughters . And Eber lived four and thirty years , and begat Peleg : And Eber lived after he begat Peleg four hundred and thirty years , and begat sons and daughters . And Peleg lived thirty years , and begat Reu : And Peleg lived after he begat Reu two hundred and nine years , and begat sons and daughters . And Reu lived two and thirty years , and begat Serug : And Reu lived after he begat Serug two hundred and seven years , and begat sons and daughters . And Serug lived thirty years , and begat Nahor : And Serug lived after he begat Nahor two hundred years , and begat sons and daughters . And Nahor lived nine and twenty years , and begat Terah : And Nahor lived after he begat Terah an hundred and nineteen years , and begat sons and daughters . And Terah lived seventy years , and begat Abram , Nahor , and Haran . Now these are the generations of Terah : Terah begat Abram , Nahor , and Haran ; and Haran begat Lot . And Haran died before his father Terah in the land of his nativity , in Ur of the Chaldees . And Abram and Nahor took them wives : the name of
    \textcolor{grey}{Abram ' s wife was Sarai ; and the name of Nahor ' s wife , Milcah , the daughter of Haran , the father of Milcah , and the father of Iscah . But Sarai was barren ; she had no child .}
}

\subsection*{Detection \#4: words 30585 -- 30993 (Score: 28796.840)}

\fpar{%
    \textcolor{grey}{And these are the children of Zibeon ; both Ajah , and Anah : this was that Anah that found the mules in the wilderness , as he fed the asses of Zibeon his father . And the children of Anah were these ; Dishon , and Aholibamah the}
    daughter of Anah . And these are the children of Dishon ; Hemdan , and Eshban , and Ithran , and Cheran . The children of Ezer are these ; Bilhan , and Zaavan , and Akan . The children of Dishan are these ; Uz , and Aran . These are the dukes that came of the Horites ; duke Lotan , duke Shobal , duke Zibeon , duke Anah , Duke Dishon , duke Ezer , duke Dishan : these are the dukes that came of Hori , among their dukes in the land of Seir . And these are the kings that reigned in the land of Edom , before there reigned any king over the children of Israel . And Bela the son of Beor reigned in Edom : and the name of his city was Dinhabah . And Bela died , and Jobab the son of Zerah of Bozrah reigned in his stead . And Jobab died , and Husham of the land of Temani reigned in his stead . And Husham died , and Hadad the son of Bedad , who smote Midian in the field of Moab , reigned in his ste and the name of his city was Avith . And Hadad died , and Samlah of Masrekah reigned in his stead . And Samlah died , and Saul of Rehoboth by the river reigned in his stead . And Saul died , and Baalhanan the son of Achbor reigned in his stead . And Baalhanan the son of Achbor died , and Hadar reigned in his ste and the name of his city was Pau ; and his wife ' s name was Mehetabel , the daughter of Matred , the daughter of Mezahab . And these are the names of the dukes that came of Esau , according to their families , after their places , by their names ; duke Timnah , duke Alvah , duke Jetheth , Duke Aholibamah , duke Elah , duke Pinon , Duke Kenaz , duke Teman , duke Mibzar , Duke Magdiel , duke Iram : these be the dukes of Edom , according to their habitations in the land of their possessi he is Esau the father of the Edomites . And Jacob dwelt in the land wherein his father was a stranger , in the land of Canaan . These are the generations of Jacob . Joseph , being
    \textcolor{grey}{seventeen years old , was feeding the flock with his brethren ; and the lad was with the sons of Bilhah , and with the sons of Zilpah , his father ' s wiv and Joseph brought unto his father their evil report .}
}

\subsection*{Detection \#5: words 40473 -- 40821 (Score: 28436.096)}

\fpar{%
    \textcolor{grey}{into Egypt , Jacob , and all his seed with h His sons , and his sons ' sons with him , his daughters , and his sons ' daughters , and all his seed brought he with him into Egypt . And these are the names of the children}
    of Israel , which came into Egypt , Jacob and his so Reuben , Jacob ' s firstborn . And the sons of Reuben ; Hanoch , and Phallu , and Hezron , and Carmi . And the sons of Simeon ; Jemuel , and Jamin , and Ohad , and Jachin , and Zohar , and Shaul the son of a Canaanitish woman . And the sons of Levi ; Gershon , Kohath , and Merari . And the sons of Judah ; Er , and Onan , and Shelah , and Pharez , and Zar but Er and Onan died in the land of Canaan . And the sons of Pharez were Hezron and Hamul . And the sons of Issachar ; Tola , and Phuvah , and Job , and Shimron . And the sons of Zebulun ; Sered , and Elon , and Jahleel . These be the sons of Leah , which she bare unto Jacob in Padanaram , with his daughter Din all the souls of his sons and his daughters were thirty and three . And the sons of Gad ; Ziphion , and Haggi , Shuni , and Ezbon , Eri , and Arodi , and Areli . And the sons of Asher ; Jimnah , and Ishuah , and Isui , and Beriah , and Serah their sist and the sons of Beriah ; Heber , and Malchiel . These are the sons of Zilpah , whom Laban gave to Leah his daughter , and these she bare unto Jacob , even sixteen souls . The sons of Rachel Jacob ' s wife ; Joseph , and Benjamin . And unto Joseph in the land of Egypt were born Manasseh and Ephraim , which Asenath the daughter of Potipherah priest of On bare unto him . And the sons of Benjamin were Belah , and Becher , and Ashbel , Gera , and Naaman , Ehi , and Rosh , Muppim , and Huppim , and Ard . These are the sons of Rachel , which were born to
    \textcolor{grey}{Jacob : all the souls were fourteen . And the sons of Dan ; Hushim . And the sons of Naphtali ; Jahzeel , and Guni , and Jezer , and Shillem . These are the sons of Bilhah , which Laban gave unto Rachel his daughter , and she}
}

\subsection*{Detection \#6: words 12299 -- 12486 (Score: 25531.722)}

\fpar{%
    \textcolor{grey}{And Abraham answered and said , Behold now , I have taken upon me to speak unto the LORD , which am but dust and ash Peradventure there shall lack five of the fifty righteous : wilt thou destroy all the city}
    for lack of five ? And he said , If I find there forty and five , I will not destroy it . And he spake unto him yet again , and said , Peradventure there shall be forty found there . And he said , I will not do it for forty ' s sake . And he said unto him , Oh let not the LORD be angry , and I will spe Peradventure there shall thirty be found there . And he said , I will not do it , if I find thirty there . And he said , Behold now , I have taken upon me to speak unto the LO Peradventure there shall be twenty found there . And he said , I will not destroy it for twenty ' s sake . And he said , Oh let not the LORD be angry , and I will speak yet but this on Peradventure ten shall be found there . And he said , I will not destroy it for ten ' s sake . And the LORD went his way
    \textcolor{grey}{, as soon as he had left communing with Abrah and Abraham returned unto his place . And there came two angels to Sodom at even ; and Lot sat in the gate of Sod and Lot seeing them rose up}
}

\subsection*{Detection \#7: words 9069 -- 9287 (Score: 25480.242)}

\fpar{%
    \textcolor{grey}{Twelve years they served Chedorlaomer , and in the thirteenth year they rebelled . And in the fourteenth year came Chedorlaomer , and the kings that were}
    with him , and smote the Rephaims in Ashteroth Karnaim , and the Zuzims in Ham , and the Emins in Shaveh Kiriathaim , And the Horites in their mount Seir , unto Elparan , which is by the wilderness . And they returned , and came to Enmishpat , which is Kadesh , and smote all the country of the Amalekites , and also the Amorites , that dwelt in Hazezontamar . And there went out the king of Sodom , and the king of Gomorrah , and the king of Admah , and the king of Zeboiim , and the king of Bela ( the same is Zoar ;) and they joined battle with them in the vale of Siddim ; With Chedorlaomer the king of Elam , and with Tidal king of nations , and Amraphel king of Shinar , and Arioch king of Ellasar ; four kings with five . And the vale of Siddim was full of slimepits ; and the kings of Sodom and Gomorrah fled , and fell there ; and they that remained fled to the mountain . And they took all the goods of Sodom and Gomorrah , and all their victuals , and went their way . And they took Lot , Abram ' s brother ' s
    \textcolor{grey}{son , who dwelt in Sodom , and his goods , and departed . And there came one that had escaped , and told Abram the Hebrew ; for he dwelt in the plain of Mamre the Amorite , brother of Eshcol , and brother of An and these were}
}

\subsection*{Detection \#8: words 6811 -- 7104 (Score: 24522.926)}

\fpar{%
    \textcolor{grey}{And the Arvadite , and the Zemarite , and the Hamathite : and afterward were the families of the Canaanites spread abroad . And the}
    border of the Canaanites was from Sidon , as thou comest to Gerar , unto Gaza ; as thou goest , unto Sodom , and Gomorrah , and Admah , and Zeboim , even unto Lasha . These are the sons of Ham , after their families , after their tongues , in their countries , and in their nations . Unto Shem also , the father of all the children of Eber , the brother of Japheth the elder , even to him were children born . The children of Shem ; Elam , and Asshur , and Arphaxad , and Lud , and Aram . And the children of Aram ; Uz , and Hul , and Gether , and Mash . And Arphaxad begat Salah ; and Salah begat Eber . And unto Eber were born two sons : the name of one was Peleg ; for in his days was the earth divided ; and his brother ' s name was Joktan . And Joktan begat Almodad , and Sheleph , and Hazarmaveth , and Jerah , And Hadoram , and Uzal , and Diklah , And Obal , and Abimael , and Sheba , And Ophir , and Havilah , and Jobab : all these were the sons of Joktan . And their dwelling was from Mesha , as thou goest unto Sephar a mount of the east . These are the sons of Shem , after their families , after their tongues , in their lands , after their nations . These are the families of the sons of Noah , after their generations , in their natio and by these were the nations divided in the earth after the flood . And the whole earth was
    \textcolor{grey}{of one language , and of one speech . And it came to pass , as they journeyed from the east , that they found a plain in the land of Shinar ; and they dwelt there .}
}

\subsection*{Detection \#9: words 11352 -- 11512 (Score: 21242.191)}

\fpar{%
    \textcolor{grey}{and I will make him a great nation . But my covenant will I establish with Isaac , which Sarah shall bear unto thee at this set time in the next year . And he left off talking with}
    him , and God went up from Abraham . And Abraham took Ishmael his son , and all that were born in his house , and all that were bought with his money , every male among the men of Abraham ' s house ; and circumcised the flesh of their foreskin in the selfsame day , as God had said unto him . And Abraham was ninety years old and nine , when he was circumcised in the flesh of his foreskin . And Ishmael his son was thirteen years old , when he was circumcised in the flesh of his foreskin . In the selfsame day was Abraham circumcised , and Ishmael his son . And all the men of his house , born in the house , and bought with money of the stranger , were circumcised with him . And the LORD appeared unto him in the plains of Mamre : and he sat in the
    \textcolor{grey}{tent door in the heat of the day ; And he lift up his eyes and looked , and , lo , three men stood by}
}

\subsection*{Detection \#10: words 22251 -- 22399 (Score: 20712.606)}

\fpar{%
    \textcolor{grey}{And give thee the blessing of Abraham , to thee , and to thy seed with thee ; that thou mayest inherit the land wherein thou art a stranger , which God}
    gave unto Abraham . And Isaac sent away Jacob : and he went to Padanaram unto Laban , son of Bethuel the Syrian , the brother of Rebekah , Jacob ' s and Esau ' s mother . When Esau saw that Isaac had blessed Jacob , and sent him away to Padanaram , to take him a wife from thence ; and that as he blessed him he gave him a charge , saying , Thou shalt not take a wife of the daughers of Canaan ; And that Jacob obeyed his father and his mother , and was gone to Padanaram ; And Esau seeing that the daughters of Canaan pleased not Isaac his father ; Then went Esau unto Ishmael , and took unto the wives which he had Mahalath the daughter of Ishmael Abraham ' s son , the sister of Nebajoth , to
    \textcolor{grey}{be his wife . And Jacob went out from Beersheba , and went toward Haran .}
}

\end{appendices}

\end{document}